\documentclass[twoside,11pt]{article}

\usepackage{jmlr2e_mod}

\usepackage{enumitem}

\usepackage{times}
\usepackage[utf8]{inputenc} 
\usepackage[T1]{fontenc}    
\usepackage[usenames,dvipsnames]{color}
\usepackage{hyperref,trimclip,stackengine}
\usepackage{url}            
\usepackage{booktabs}       
\usepackage{amsmath,amssymb,amsfonts}
\usepackage{commath}
\usepackage{MnSymbol}
\usepackage{nicefrac}       
\usepackage{microtype}      

\usepackage[makeindex]{glossaries}

\usepackage{tikz}
\newcommand*\circled[1]{\tikz[baseline=(char.base)]{
            \node[shape=circle,draw,inner sep=0.5pt] (char) {#1};}}


\DeclareMathOperator{\E}{\mathsf{E}}

\DeclareMathOperator*{\pipes}{\|}
\DeclareMathOperator*{\argmin}{arg\,min}
\DeclareMathOperator*{\esssup}{ess\,sup}

\newcommand{\bernc}{B} 

\newcommand{\reals}{\mathbb{R}}

\newcommand{\ml}{\hat{f}^{\text{\sc ml}}}

\newcommand{\F}{\mathcal{F}}
\newcommand{\G}{\mathcal{G}}
\newcommand{\cH}{\mathcal{H}}
\newcommand{\cF}{\mathcal{F}}

\newcommand{\cK}{\mathcal{K}}

\newcommand{\cS}{\mathcal{S}}
\newcommand{\X}{\mathcal{X}}
\newcommand{\Y}{\mathcal{Y}}
\newcommand{\Z}{\mathcal{Z}}
\newcommand{\cZ}{\mathcal{Z}}
\newcommand{\cY}{\mathcal{Y}}
\newcommand{\cX}{\mathcal{X}}
\newcommand{\cQ}{\mathcal{Q}}

\newcommand{\N}{\mathcal{N}}
\newcommand{\M}{\mathcal{M}}

\newcommand{\rad}{\mathcal{R}}

\newcommand{\ann}[1]{\textsc{ann}(#1)}
\newcommand{\Expann}[1]{\E^{\textsc{ann},#1}}


\newcommand{\Zn}{{Z}^n}

\newcommand{\Zpn}{{\bar{Z}}^n}

\newcommand{\zn}{{z}^n}

\newcommand{\Zp}{\bar{Z}}

\newcommand{\loss}{\ell}

\newcommand{\xsloss}[1]{{R}_{#1}}

\DeclareMathOperator{\KL}{KL}

\newcommand{\shtark}{\text{\sc S}}
\newcommand{\compnew}{\text{\sc comp}}
\newcommand{\fcompnew}{\text{\sc comp}^{\text{\sc full}}}

\newcommand{\rv}[1]{\underline{#1}}

\newcommand{\stochleq}{\ensuremath{\leqclosed}}

\newcommand{\lip}{L} 

\newcommand{\Fpart}[1]{\F_{\varepsilon,#1}}

\makeatletter
\DeclareRobustCommand{\qed}{%
  \ifmmode 
  \else \leavevmode\unskip\penalty9999 \hbox{}\nobreak\hfill
  \fi
  \quad\hbox{\qedsymbol}}
\newcommand{\qedsymbol}{\BlackBox}
\newenvironment{Proof}[1][\proofname]{\par
  \normalfont
  \topsep6\p@\@plus6\p@ \trivlist
  \item[\hskip\labelsep\bfseries
    #1]\ignorespaces
}{%
  \qed\endtrivlist
}
\newcommand{\proofname}{Proof}
\makeatother


\newglossaryentry{losses}{
  name={\textbf{Losses}},
  description={\nopostdesc},
  sort=A,
  nonumberlist
}

\newglossaryentry{excessLoss}
{
  name={$\xsloss{f}(z)$},
  description={Excess loss, $\loss_f(z) - \loss_{f^*}(z)$},
  sort=A1
}

\newglossaryentry{cumulativeExcessLoss}
{
  name={$\xsloss{f}(\zn)$},
  description={Cumulative excess loss, $\loss_f(\zn) - \loss_{f^*}(\zn)$} \vspace{2mm},
  sort=A2
}

\newglossaryentry{entropified}
{
  name={$q_f(z )$},
  description={Entropified version of $f$, $\frac{p(z) \cdot e^{-\eta \xsloss{f}(z)}}{\E_{Z \sim P } \left[ e^{-\eta \xsloss{f}(Z)} \right]}$},
  sort=A3
}


\newglossaryentry{notation}{
  name={\textbf{Notation}},
  description={\nopostdesc},
  sort=B,
  nonumberlist
}
 
\newglossaryentry{ESI}
{
  name={$U \stochleq_\eta U'$},
  description={Exponential stochastic inequality, 
    $\E_{U, U' \sim P} \left[e^{\eta (U- U')} \right] \leq 1$} \vspace{1mm},
  sort=B1
}

\newglossaryentry{annealedExpectation}
{
  name={$\Expann{\eta} \left[U \right]$},
  description={Annealed expectation, 
                       $-\frac{1}{\eta} \log \E \left[e^{-\eta U} \right]$},
  sort=B2
}


\newglossaryentry{complexities}{
  name={\textbf{Complexities}},
  description={\nopostdesc},
  sort=C,
  nonumberlist
}

\longnewglossaryentry{normConstant}
{
  name={$C(f)$},
  description={Normalization constant for Shtarkov integral, 
                       $\E_{\Zn \sim P } \left[ e^{-\eta \xsloss{f}(\Zn)} \right]$} \vspace{1mm},
  sort=C1
}

\longnewglossaryentry{shtarkovDet}
{
  name={$\shtark(\cF; \hat{f})$},
  description={Shtarkov integral (deterministic version), 
                       $\E_{\Zn \sim P} \left[ 
                             \frac{e^{-\eta  \xsloss{\hat{f}_{ \mid \Zn}}(\Zn) } }{C({\hat{f}_{ \mid \Zn}})}
                         \right]$} \vspace{1mm},
  sort=C21
}

\longnewglossaryentry{maxShtarkovDet}
{
  name={$\shtark(\cF)$},
  description={Maximal Shtarkov integral (deterministic version),

                       $\int_{\cZ^n} \sup_{f \in \cF} q_f(z^n) d \nu(z^n)$} \vspace{1mm},
  sort=C22
}

\longnewglossaryentry{genShtarkovDet}
{
  name={$\shtark(\cF; \hat{f}, w)$},
  description={Generalized Shtarkov integral (deterministic version)

                       $\E_{\Zn \sim P} \left[ 
        \frac{e^{-\eta  \xsloss{\hat{f}_{ \mid \Zn}}(\Zn) } }{C({\hat{f}_{ \mid \Zn}})}
  \cdot w(Z^n)  \right] 
= \int_{\Z^n} q_{\hat{f}_{ \mid \zn}}(\zn) w(z^n) d \nu(\zn)$} \vspace{1mm},
  sort=C23
}

\longnewglossaryentry{genShtarkov}
{
  name={$\shtark(\cF, \hat{\Pi}, w)$},
  description={Generalized Shtarkov integral,

                       $\E_{\Zn \sim P} \left[
                           \exp\left(- \E_{\rv{f} \sim \hat{\Pi} \mid Z^n} \left[
                               \eta \xsloss{\rv{f}}(Z^n) + \log C(\rv{f}) -  \log w(Z^n, \rv{f}) \right] \right) 
                           \right]$} \vspace{1mm},
  sort=C24
}

\newglossaryentry{simpleComp}
{
  name={$\compnew(\cF, \hat{f})$},
  description={Complexity, $\eta^{-1} \log \shtark(\cF, \hat{f})$},
  sort=C31
}

\longnewglossaryentry{genCompDet}
{
  name={$\compnew(\cF, \hat{f}, w, \zn)$},
  description={Generalized complexity (deterministic version),

                       $\frac{1}{\eta} \left(- \log w(z^n) +  \log \shtark(\cF, \hat{f} , w) \right)$} \vspace{1mm},
  sort=C32
}

\newglossaryentry{maxComp}
{
  name={$\compnew(\cF)$},
  description={Maximal complexity, $\eta^{-1} \log \shtark(\cF) = \sup_{\hat{f}} 
\compnew(\cF,\hat{f})$},
  sort=C33
}

\longnewglossaryentry{genComp}
{
  name={$\compnew(\cF, \hat{\Pi}, w, z^n)$},
  description={Generalized complexity,

                       $\frac{1}{\eta} \cdot \left(  
                              \E_{\rv{f} \sim \hat{\Pi} \mid \zn} \left[ - \log w(z^n,\rv{f}) \right] 
                              + \log \shtark(\cF, \hat{\Pi}, w) 
                         \right)$},
  sort=C34
}

\longnewglossaryentry{compFull}
{
  name={$\fcompnew(\cF, \hat{\Pi}, w, z^n)$},
  description={Full generalized complexity,

                       $\compnew(\cF, \hat{\Pi}, w, z^n)  + \E_{\rv{f} \sim \hat{\Pi} \mid z^n}[R_{\rv{f}}(\zn)]$},
  sort=C35
}

\newglossaryentry{Tn}
{
  name={$T_n$},
  description={{\footnotesize $\sup_{f \in \cF} \left\{ 
    \sum_{j=1}^n \left( \loss_{f_0}(Z_j) - \loss_f(Z_j) \right)
    - \E_{\Zn \sim Q_{f_0}} \left[ \sum_{j=1}^n \left( \loss_{f_0}(Z_j) - \loss_f(Z_j) \right) \right] 
\right\}$}},
  sort=C41
}

\newglossaryentry{HLocalComplexity}
{
  name={$\E_{\Zn \sim Q_{f_0}} [ T_n ]$},
  description={$H$-local complexity},
  sort=C42
}

\newglossaryentry{coveringNumber}
{
  name={$\N(\cH, \|\cdot\|, \varepsilon)$},
  description={$\varepsilon$-covering number for $\cH$ in the norm $\|\cdot\|$},
  sort=C51
}

\newglossaryentry{bracketingNumber}
{
  name={$\N_{[\cdot]}(\cH, \|\cdot\|, \varepsilon)$},
  description={$\varepsilon$-bracketing number for $\cH$ in the norm$\|\cdot\|$} \vspace{1mm},
  sort=C52
}

\newglossaryentry{empRadComp}
{
  name={$\rad_n(\cH \mid S_1, \ldots, S_n)$},
  description={Empirical Rademacher complexity, $\E_{\epsilon_1, \ldots, \epsilon_n} \left[ 
       \sup_{h \in \cH} \left| \frac{1}{n} \sum_{i=1}^n \epsilon_i h(S_i) \right| 
   \right]$} \vspace{1mm},
  sort=C53
}

\newglossaryentry{radComp}
{
  name=$\rad_n(\cH)$,
  description={Rademacher complexity, $\E \left[ 
    \sup_{h \in \cH} \left| \frac{1}{n} \sum_{i=1}^n \epsilon_i h(S_i) \right| 
    \right]$},
  sort=C54
}

\makeglossaries

\thispagestyle{empty}


\ShortHeadings{PAC-Bayes--Rademacher--Shtarkov--MDL}{Gr\"unwald Mehta}
\firstpageno{1}
\begin{document}

\title{A Tight Excess Risk Bound via a Unified PAC-Bayesian--Rademacher--Shtarkov--MDL Complexity}

\author{\name Peter D. Gr\"unwald \email pdg@cwi.nl \\
       \addr Centrum Wiskunde \& Informatica and Leiden University
       \AND
       \name Nishant A. Mehta \email nmehta@uvic.ca \\
       \addr University of Victoria}

\maketitle

\begin{abstract}
  We present a novel notion of complexity that interpolates between
  and generalizes some classic existing complexity notions in learning
  theory: for estimators like empirical risk minimization (ERM) with
  arbitrary bounded losses, it is upper bounded in terms of
  data-independent Rademacher complexity; for generalized Bayesian
  estimators, it is upper bounded by the data-dependent information
  complexity (also known as stochastic or PAC-Bayesian,
  $\KL(\text{posterior} \pipes \text{prior})$ complexity. For (penalized)
  ERM, the new complexity reduces to (generalized) normalized maximum
  likelihood (NML) complexity, i.e.~a minimax log-loss
  individual-sequence regret.  Our first main result bounds excess
  risk in terms of the new complexity. Our second main result links
  the new complexity via Rademacher complexity to $L_2(P)$ entropy, 
  thereby generalizing earlier results of Opper, Haussler,
  Lugosi, and Cesa-Bianchi who did the log-loss case with $L_\infty$.
  Together, these results recover optimal bounds for VC- and large
  (polynomial entropy) classes, replacing localized Rademacher
  complexity by a simpler analysis which almost completely
  separates the two aspects that determine the achievable rates: 
  `easiness' (Bernstein) conditions and model complexity.
 \end{abstract}

\section{Introduction}
We simultaneously address four questions of learning theory:
\begin{enumerate}[label=(\Alph*)]
\item We establish a precise relation between Rademacher complexities
for arbitrary bounded losses and the minimax cumulative log-loss
regret, also known as the \emph{Shtarkov integral} and
\emph{normalized maximum likelihood (NML) complexity}.
\item We bound this minimax regret in terms of $L_2$ entropy. Past results were based on $L_{\infty}$ entropy.
\item We introduce a new type of complexity that enables a unification of data-dependent PAC-Bayesian and
empirical-process-type excess risk bounds into a single clean bound; this bound recovers minimax optimal rates for large classes under Bernstein `easiness' conditions.
\item We extend the link between
excess risk bounds for arbitrary losses and codelengths of Bayesian
codes to general codes.
\end{enumerate}

All four results are part of the chain of bounds in Figure~\ref{fig:bounds}. 
The $\leftarrow$ arrow stands for `bounded in terms of'; the precise bounds (which may hold in probability and expectation or may even be an equality) are given in the respective results in the paper. Red arrows indicate results that are new. 
\begin{figure}[th]
  \stackinset{l}{59.5pt}{t}{31pt}{\hyperref[cor:risk-comp-esi]{\makebox(37,7){}}}{%
  \stackinset{l}{59.5pt}{t}{41pt}{\hyperref[lemma:kl-renyi]{\makebox(30,5){}}}{%
  \stackinset{l}{174pt}{t}{32.5pt}{\hyperref[thm:first]{\makebox(35,5.5){}}}{%
  \stackinset{l}{331pt}{t}{16pt}{\hyperref[prop:newgeneration]{\makebox(43.5,6.5){}}}{%
  \stackinset{l}{252.5pt}{t}{124.5pt}{\hyperref[thm:opper-haussler-talagrand]{\makebox(39,5){}}}{%
  \stackinset{l}{252.5pt}{t}{133.5pt}{\hyperref[cor:opper-haussler-talagrand]{\makebox(40,6.5){}}}{%
  \stackinset{l}{252.5pt}{t}{182.5pt}{\hyperref[thm:opper-haussler-talagrand]{\makebox(39,5){}}}{%
  \stackinset{l}{252.5pt}{t}{191.5pt}{\hyperref[cor:opper-haussler-talagrand]{\makebox(40,6.5){}}}{%
  \stackinset{l}{316pt}{t}{165.5pt}{\hyperref[thm:small-Esup]{\makebox(38.5,5.5){}}}{%
  \stackinset{l}{301pt}{t}{220pt}{\hyperref[thm:small-rad]{\makebox(39,5.5){}}}{%
\includegraphics[width=\textwidth]{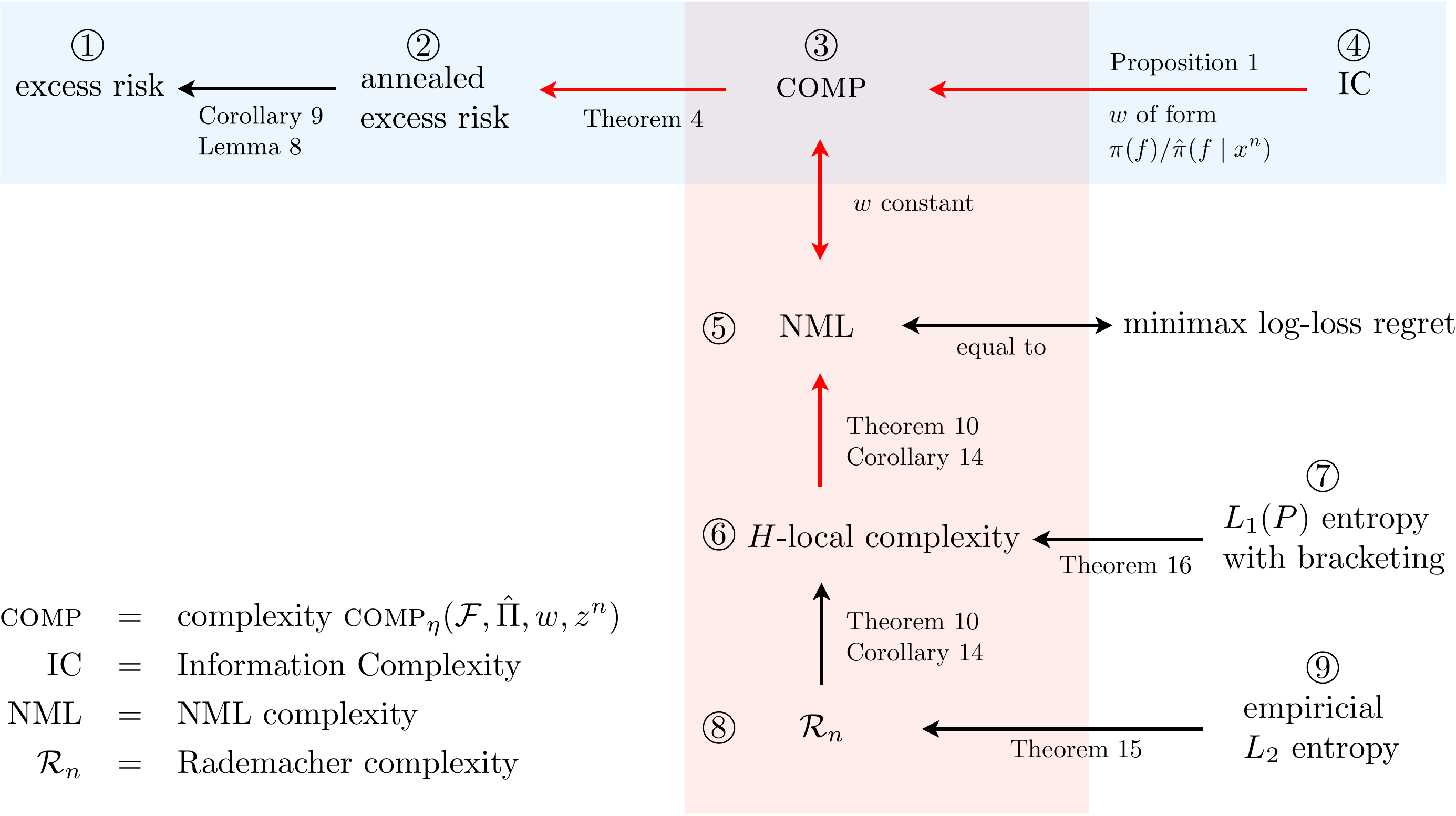}}}}}}}}}}}
\caption{\label{fig:bounds} The tree of bounds we provide; red arrows indicate new results.}
\end{figure}
We start with a family of predictors $\cF$ for an arbitrary \emph{loss function} $\loss$, which, for example, may be log-loss, squared
error loss or $0/1$-loss, and an estimator $\hat{\Pi}$ which on each sample $Z^n = Z_1, \ldots, Z_n$ outputs a distribution $\hat{\Pi} \mid Z^n$ on $\cF$; classic deterministic estimators $\hat{f}$ such as
ERM are represented by taking a $\hat{\Pi}$ that outputs the Dirac measure on $\hat{f}$. 
The main bound 
$\circled{2} \leftarrow \circled{3}$, 
Theorem~\ref{thm:first},
bounds the \emph{annealed excess risk} of a fixed but arbitrary
estimator $\hat{\Pi}$ in terms of its empirical risk on the training
data $Z^n$ plus a novel notion of complexity,
$\compnew_{\eta}(\cF, \hat{\Pi}, w, Z^n)$ (formulas for $\compnew_{\eta}$ and all other concepts in the paper are summarized in the Glossary on page~\pageref{glos:sary}).  The annealed excess risk
is a proxy (and lower bound) of the actual excess risk, the expected
loss difference between predicting with $\hat{\Pi} \mid Z^n$ and
predicting with the actual risk minimizer $f^*$ within $\cF$. 
The bound 
$\circled{1} \leftarrow \circled{2}$ 
(Corollary~\ref{cor:risk-comp-esi}, based on Lemma~\ref{lemma:kl-renyi}, itself from \cite{grunwald2012safe}) 
bounds the actual excess risk in terms of the annealed excess risk, so that we get a true excess risk bound for $\hat\Pi$. 
The complexity $\compnew_{\eta}(\cF, \hat{\Pi}, w, Z^n)$ is
dependent on a \emph{luckiness function}
$w: \cZ^n \times \cF \rightarrow \reals^+_0$; $w$ can be chosen
freely; different choices lead to different complexities and excess
risk bounds. 
For nonconstant $w$, the complexity becomes data-dependent; 
in particular, for $w$ of the form $\pi(f) / \hat{\pi}(f \mid z^n)$, 
where $\pi$ is the density of a `prior' distribution $\Pi$ on $\cF$, 
the complexity becomes, by 
$\circled{3} \leftarrow \circled{4}$ 
(Proposition~\ref{prop:newgeneration}) (strictly) upper bounded by the \emph{information complexity} of \cite{zhang2006epsilon,zhang2006information}, involving a Kullback-Leibler (KL) divergence term $\KL(\hat{\Pi} |Z^n \pipes \Pi)$. 
Information complexity generalizes earlier complexity notions and accompanying bounds from the
information theory literature such as (extended) \emph{stochastic
  complexity} \citep{rissanen1989stochastic,yamanishi1998decision},
\emph{resolvability} \citep{barron1991minimum,BarronRY98}, and also
excess risk bounds from the PAC-Bayesian literature
\citep{audibert2004PAC,catoni2007PAC}. 
Together,
$\circled{1} \leftarrow \circled{2} \leftarrow \circled{3} \leftarrow \circled{4}$ 
recover and strengthen Zhang's bounds.

For constant $w$, the complexity is independent of the data and turns out 
($\circled{3} = \circled{5}$), 
Section~\ref{sec:simpleshtarkov}) to be equal to the
\emph{minimax cumulative individual sequence regret} for sequential
prediction with log-loss relative to a family $\cQ_{\cF}$ of
probability measures defined in terms of $\cF$, also known as the
log-\emph{Shtarkov integral} or \emph{NML (Normalized Maximum
  Likelihood) complexity}. NML complexity has been much studied in the
MDL (minimum description length) literature
\citep{rissanen1996fisher,grunwald2007the}.

\paragraph{Problem A: NML and Rademacher} 
NML complexity can itself be bounded in terms of a new complexity we introduce, $H$-local complexity, which is further bounded in terms of Rademacher complexity $\rad_n$ (Theorem~\ref{thm:opper-haussler-talagrand} and Corollary~\ref{cor:opper-haussler-talagrand}, 
$\circled{5} \leftarrow \circled{6} \leftarrow \circled{8}$). 
Both Rademacher and NML complexities are used as penalties in model selection 
(albeit with different motivations), 
and the close conceptual similarity between NML and Rademacher complexity has been noted by several authors 
(e.g.~\cite{grunwald2007the,zhu2009human,roos2016informal}). For example, as shown by \citet[Open Problem 19, page 583]{grunwald2007the} in classification problems,  
both the empirical Rademacher complexity for 0-1 loss and the
NML complexity of a family of conditional distributions can be simply expressed in terms of a (possibly transformed) minimized cumulative loss that is uniformly averaged over all possible values of the data to be predicted, thereby measuring how well the model $\cF$ can fit random noise.
Theorem~\ref{thm:opper-haussler-talagrand} and
Corollary~\ref{cor:opper-haussler-talagrand} establish, for the first
time, a precise and tight link between NML and Rademacher complexity. The proofs extend a technique due to
\cite{opper1999worst}, who bound NML complexity in terms of
$L_{\infty}$ entropy using an empirical process result of
\cite{yurinskiui1976exponential}.  By using Talagrand's inequality
instead, we get a bound in terms of Rademacher complexity.

\paragraph{Problem B: Bounding NML Complexity with $L_2(P)$ entropy and empirical $L_2$ Entropy} 
If $\cF$ is of VC-type or a class of polynomial empirical $L_2$ entropy, 
the Rademacher complexity can be further bounded, 
(Theorem~\ref{thm:small-rad}, $\circled{8} \leftarrow \circled{9}$), 
in terms of the empirical $L_2$ entropy; 
if $\cF$ admits polynomial $L_1(P)$ entropy with bracketing, 
then $T_n$ is further bounded, (Theorem~\ref{thm:small-Esup}, 
$\circled{6} \leftarrow \circled{7}$), 
in terms of this $L_1(P)$ entropy with bracketing. 
These latter two results are well-known, due to \cite{koltchinskii2011oracle} and
\cite{massart2006risk} respectively, but \emph{in conjunction} with 
$\circled{5} \leftarrow \circled{6}$ 
they become of significant interest for log-loss individual sequence prediction. 
Whereas previous bounds on minimax log-loss regret were invariably in terms of
$L_{\infty}$ entropy 
\citep{opper1999worst,cesa2001worst,rakhlin2015sequential}, 
the aforementiond two results allow us to obtain bounds in terms of $L_1(P)$ entropy and empirical $L_2$ entropy, where $P$ can be any member of the class $\cQ_{f}$. Unlike the latter two
works, however, our results are restricted to static experts that treat
the data as i.i.d.

\paragraph{Problem C: Unifying data-dependent and empirical process-type excess risk bounds}
As lamented by Audibert (\citeyear{audibert2004PAC,audibert2009fast}),
despite their considerable appeal, standard PAC-Bayesian and KL excess
risk bounds do not lead to the right rates for large classes, 
i.e.~with polynomial $L_2(P)$ entropy. On the other hand, standard
Rademacher complexity generalization and excess risk bound analyses
are not easily extendable to either penalized estimators or
generalized Bayesian estimators that are based on updating a prior distribution; also handling logarithmic loss appears difficult. 
Yet 
$\circled{1} \leftarrow \circled{2} \leftarrow \circled{3}$ 
shows that there does
exist a single bound capturing all these applications --- by varying
the function $w$ one can get both (a strict strengthening of) the KL
bounds and a Rademacher complexity-type excess risk bound. 
In this way, via the chain of bounds 
$\circled{1} \leftarrow \ldots \leftarrow \circled{7}/\circled{9}$, 
we recover rates for
empirical risk minimization (ERM) that either are minimax optimal (for
classification) or the best known rates for ERM (for other losses),
even for VC and polynomial entropy classes; the rates depend in the
right way on the `easiness' of the problem as measured by the \emph{central condition} \Citep{vanerven2015fast}, which generalizes Tsybakov's (\citeyear{tsybakov2004optimal}) margin condition and Bernstein conditions \citep{bartlett2005local}.

\paragraph{Problem D:  Excess Risk Bounds and Data Compression}
{While Zhang's bound holds for arbitrary `posteriors' $\hat{\Pi}$, one
gets the best excess risk bounds if one takes $\hat{\Pi}$ to be a
\emph{generalized Bayesian estimator}. With such a $\hat{\Pi}$,   the information
complexity can be expressed in terms of a (generalization of) the
cumulative log-loss of a Bayesian sequential prediction strategy
\citep{zhang2006epsilon,grunwald2012safe} defined relative to the
constructed probability model $\cQ_{\cF}$.} By the correspondence
between codelengths and cumulative log-loss (reviewed in
Section~\ref{sec:intro-complexity}), we may say that we bound an 
\emph{excess risk in terms of a codelength}. 
$\circled{3} = \circled{5}$ 
shows that we
also get a useful excess risk bound in terms of the codelengths of the
minimax (NML) code --- interestingly, Bayes and NML codes are the two
central codes in the \emph{universal coding} literature
\citep{BarronRY98,grunwald2007the}. In fact, our work shows that the
correspondence between excess risk bounds and codelengths is a quite
general phenomenon, not particular to Bayes and NML codes: in
Section~\ref{sec:intro-complexity} we show that there is a 1-to-1
relation between luckiness functions $w$ and codes for data based on
$\cQ_f$: each luckiness function $w$ defines, up to scaling, a
different code giving rise to a different complexity
$\compnew_{\eta}(\cF,\hat{\Pi},w,z^n)$ and hence a different excess
risk bound, and vice versa.  Because of its relation to data
compression, it becomes easy to extend the approach to model selection
by using `two-part codes' (Section~\ref{sec:twopart}).

For yet other choices of $w$, we obtain bounds for penalized ERM with
arbitrary bounded penalization functions
(Section~\ref{sec:penalized}); if we specialize to log-loss, the
complexity bound becomes a minimax regret-with-luckiness-term as
considered in recent papers on sequential log-loss prediction such as
\citep{kakade2006worst,bartlett2013horizon}. Many other choices of $w$ are possible and remain subject for future investigation. 

\paragraph{Additional Features and Limitations}
The full story above can only be told for bounded losses, although the
bounds 
$\circled{2} \leftarrow \circled{3}, \circled{3} \leftarrow \circled{4}$ 
also hold
for unbounded losses, and 
$\circled{1} \leftarrow \circled{2}$ 
was recently
extended to unbounded losses under a mild additional condition
\citep{grunwald2016fast}. Remarkably, 
$\circled{2} \leftarrow \circled{3}$ 
is as tight as can be: when viewed in terms of exponential moments, it is
really an equality rather than a bound; this suggests that, no matter
the choice of $w$, the resulting bound is essentially
unimprovable. The complexity $\compnew_{\eta}$ depends on a 
\emph{learning rate parameter} $\eta$, and all bounds in the figure
become different depending on the choice of $\eta$.  The optimal
$\eta$ depends on the easiness of the problem at hand, as measured by
central/Bernstein/Tsybakov's conditions (see above). ERM can be
applied (and optimal rates can be obtained) without knowledge of the
optimal $\eta$; to get the right rates for Bayesian and penalized ERM
algorithms however, these algorithms should be made dependent on
$\eta$ ($\eta$ is akin to $1/\lambda$ in the lasso and ridge
regression); in practice, one can learn it from the data using an algorithm such as the `safe Bayes' algorithm of \cite{grunwald2012safe} (Section~\ref{sec:twopart}).

\paragraph{Contents} 
In
Section~\ref{sec:simpleshtarkov}, we introduce the simple
data-independent version of our complexity, $\compnew(\cF, \hat{f})$,
which is really the NML complexity.  In
Section~\ref{sec:generalizedshtarkov} we extend our notion of
complexity to the generalized data-dependent form $\compnew(\cF,
\hat{\Pi}, w, z^n)$.  Section~\ref{sec:zhang-nml} contains our first
main result, Theorem~\ref{thm:first}.  In
Section~\ref{sec:complexity}, we derive our second main result,
Theorem~\ref{thm:opper-haussler-talagrand} and its
Corollary~\ref{cor:opper-haussler-talagrand}, a bound on
$\compnew(\cF, \hat{f})$ in terms of Rademacher complexity; we also
present a concrete application of this result,
Theorem~\ref{thm:excess-risk-erm}, which provides the best known rates
for ERM under Bernstein conditions for bounded loss functions in a
number of situations. Section~\ref{sec:applications} gives various
applications of our result.  Finally, Section~\ref{sec:discussion}
closes with a discussion of our work in the context of other recent
works. All long proofs can be found in the appendix. Mathematical definitions and notations are summarized in the
Glossary on page~\pageref{glos:sary}.

\section{The Novel Complexity Notion}
\label{sec:intro-complexity}

\subsection{Preliminaries}
In the statistical learning problem \citep{vapnik1998statistical}, 
a labeled sample $Z^n = Z_1, \ldots, Z_n$ is drawn independently from probability distribution $P$ over $\Z = (\X \times \Y)$, where, for each $j \in [n]$, we have $Z_j = (X_j, Y_j)$. 
We are given an action space or \emph{model} $\F$ 
and a \emph{loss function} $\loss: \cF \times \cZ \rightarrow \reals$, where we denote the loss that action or predictor $f$ makes on $z$ as $\loss_f(z)$. 
Loss functions such as 0-1, squared error, and log-loss (for joint densities on $z=(x,y)$) can all be expressed this way: 
in the former two cases $\cF$ consists of functions $f: \cX \rightarrow \cY$, with $\loss_f(x,y) = |y-f(x)|$ and $(y- f(x))^2$ respectively, and in the latter case $\cF$ is a set of probability densities on $\cZ = \cX \times \cY$ relative to some underlying measure $\nu$ and $\loss_f(x,y) = - \log f(x,y)$, with $\log$ denoting natural logarithm in this paper. 
An \emph{estimator} or \emph{learning algorithm} $\hat{\Pi}$ is a
function from $\cZ^n$ to distributions over $\cF$. Here and in the
sequel we simply assume that $\cF$ is endowed with a suitable sigma-algebra
$\Sigma$ so that $(\cF,\Sigma)$ is a measurable space and all
functions we refer to are measurable. We will write $\hat{\Pi} \mid
z^n$ to denote the distribution chosen for data $z^n$. In practice
$\hat{\Pi}$ is often supported entirely on a single function $\hat{f}
\in \F$; in that case we simply write the estimator as $\hat{f}$ and
the $f$ chosen for given data $z^n$ as $\hat{f}_{|z^n}$. An example of
such a \emph{deterministic estimator} is ERM, the empirical risk
minimizer. An example of a \emph{randomized estimator} is obtained by
setting $\hat{\Pi} \mid z^n$ to be the generalized $\eta$-Bayesian
posterior \citep{zhang2006information,catoni2007PAC}, which we
explicitly define in Section~\ref{sec:generalizedshtarkov}. Henceforth, we simply call $\hat{\Pi}$
an `estimator' irrespective of whether it is deterministic or randomized.

We aim to learn distributions $\hat\Pi$ that obtain low expected
\emph{risk} $\E_{f \sim {\hat\Pi}} [ \E_{Z \sim P} [ \loss_f(Z) ]
]$. The risk of an action $f$ is the expected value of the loss
$\loss_f(Z)$ suffered when playing action $f$ and the actual outcome
is $Z$.  A natural way to measure the quality of $\hat{\Pi}$ on data
$z^n$ is therefore the \emph{excess risk}
$\E_{f \sim \hat{\Pi}\mid z^n} [ \E_{Z \sim P} [ \loss_f(Z) -
\loss_{f^*}(Z) ] ]$, where $f^*$ is a minimizer of the risk over $\F$; 
like many other authors (e.g \cite{bartlett2005local}) 
we assume throughout this work that such a minimizer exists. 
We use the notation \glsadd{excessLoss}
$\xsloss{f}(z) = \loss_f(z) - \loss_{f^*}(z)$, extended to samples
$\zn = (z_1, \ldots, z_n) \in \Z^n$ as \glsadd{cumulativeExcessLoss}
$\xsloss{f}(\zn) = \sum_{i=1}^n \left( \loss_f(z_i) - \loss_{f^*}(z_i)
\right)$. Note that, when $Z^n \sim P$, $\hat{\Pi} \mid Z^n $ and
$\hat{f}_{|Z^n}$ can be thought of as random variables so we simply
write them without $Z^n$ whenever this cannot cause any confusion.

\subsection{The Novel Complexity Measure, Simple Case}
\label{sec:simpleshtarkov}
To prepare for the definition of our complexity measure $\compnew$, we first need to associate each $f \in \cF$ with an associated 
probability distribution $Q_f$. {We may assume without loss of generality that the underlying distribution $P$ on $Z$ has a density $p$ with respect to some base measure $\nu$ (we could for example take $\nu = P$ but the formulas below are easier to parse for general $\nu$).} Now for each $f \in \F$, we define $Q_f$ to be the distribution over $\cZ$ with density (with respect to the same base measure $\nu$)
\glsadd{entropified}
\begin{align}\label{eq:entropified}
q_f(z ) := \frac{p(z) \cdot e^{-\eta \xsloss{f}(z)}}{\E_{Z \sim P } \left[ e^{-\eta \xsloss{f}(Z)} \right]} .
 \end{align}
We extend the definition to $n$ outcomes by taking the product densities, $q_{f}(z^n) := \prod_{i=1}^n q_{f}(z_i)$. 
In this way the model $\cF$ is itself
mapped to a set $\cQ_{\cF} = \{q_f : f \in \cF \}$ of
probability densities, the mapping depending on the loss function $\loss$ of
interest, but also (suppressed in notation) on $\eta$, $f^*$, and on the `true' $P$; this is an instance of the
`entropification procedure' suggested by \cite{grunwald1999viewing}.

We are now ready to define our new complexity measure. For simplicity
of we first present the very special case of deterministic estimators
$\hat{f}$ without data dependence; this case is sufficient to make the
connection to minimax regret and Rademacher complexities in
Section~\ref{sec:complexity}.  For this setting we define the
\emph{Shtarkov integral} (name to be explained below) as
\glsadd{shtarkovDet}
\begin{multline}\label{eq:shtarkovdet}
\shtark(\cF; \hat{f}) 
:= \E_{\Zn \sim P} \left[ 
        \frac{e^{-\eta  \xsloss{\hat{f}_{ \mid \Zn}}(\Zn) } }{C({\hat{f}_{ \mid \Zn}})}
    \right] 
= \int_{\Z^n} q_{\hat{f}_{ \mid \zn}}(\zn) d \nu(\zn)  \\
\ \ \ldots \left(\   
\overset{\text{(if $\ell$ log-loss, $\eta =1$, model correct)} }{=} 
\int_{\Z^n} p_{\hat{f}_{ \mid \zn}}(\zn) d \nu(\zn)  \right),
\end{multline}
where, for any $f \in \F$, \glsadd{normConstant} $C(f) := \E_{\Zn \sim P } \left[ e^{-\eta \xsloss{f}(\Zn)} \right]$ is the normalization constant. 
Whenever $\shtark(\cF, \hat{f})$ is finite (as will automatically be the case with bounded loss), 
the corresponding \emph{complexity of model $\cF$ equipped with $\hat{f}$} is defined as
\glsadd{simpleComp}
\begin{align}\label{eq:simplecomplexity}
\compnew(\cF, \hat{f}) := \eta^{-1} \log \shtark(\cF, \hat{f}). 
\end{align}

$\compnew, \shtark, q_f$, and normalizer $C$ all depend on $\eta$, but this is suppressed in notation unless needed for clarity.  
The final equality in \eqref{eq:shtarkovdet} holds 
in the very special case that the 
original loss function is log-loss, $\eta= 1$, and $\cF$
contains the density $p$ of $P$ (`the model is correct'). 
In that case  $f^* = p$ (since log-loss is a \emph{proper loss}, see e.g.~\citep{gneiting2007strictly}) 
$C(f)$ evaluates to $1$ for all $f \in \cF$, 
$\mathcal{Q}_{\cF, \loss}$ is equal to $\cF$, 
and $\xsloss{f}(z) = - \log f(z) + \log p(z)$; thus, \eqref{eq:entropified} reduces to $q_f(z) = f(z)$, and the final equation in \eqref{eq:shtarkovdet} follows. We further define the \emph{maximal complexity} $\compnew(\cF)$ as
\glsadd{maxShtarkovDet} \glsadd{maxComp}
\begin{equation}\label{eq:maxcomplexity}
\shtark(\cF) := \int_{\cZ^n} \sup_{f \in \cF} q_f(z^n) d \nu(z^n) \ \ \ ;
\ \ \ \compnew(\cF) := \eta^{-1} \log \shtark(\cF) = \sup_{\hat{f}} 
\compnew(\cF,\hat{f}), 
 \end{equation}
 where the final equality is a trivial consequence of the definition,
 the $\sup$ ranging over all deterministic estimators that can be
 defined on $\cF$.  

We often use the following observation
 due to (e.g.) \cite{opper1999worst}: Let $\cK$ be a finite set and
 let $\{ \cF_k : k \in \cK \}$ be a partition of $\cF$. Then for every
 deterministic estimator,
\begin{align} \label{eq:decomposenml}
\compnew(\cF,\hat{f}) \leq \frac{\log | \cK|}{\eta} + \max_{k \in \cK}
\compnew(\cF_k). 
\end{align}
This result follows as a special case of Proposition~\ref{prop:decomposingcomp} in Section~\ref{sec:applications}, but its proof is simple enough to state in just a few lines:
\begin{align*}
\compnew(\cF,\hat{f}) 
&\leq \eta^{-1} \log \int_{\cZ^n} \max_{k \in \cK} \sup_{f \in \cF_k} q_f(z^n) d \nu(z^n) \\
&\leq \eta^{-1} \log \int_{\cZ^n} \sum_{k \in \cK} \sup_{f \in \cF_k} q_f(z^n) d \nu(z^n) \\ 
&\leq \eta^{-1} \log |\cK| + \eta^{-1} \max_{k \in \cK} \log \int_{\cZ^n} \sup_{f \in \cF_k} q_f(z^n) d \nu(z^n) .
\end{align*}
Using \eqref{eq:decomposenml}, we can link $\compnew$ to Rademacher
complexity, which we will do in Section~\ref{sec:prelloss}
and~\ref{sec:rademacher}. Below, we first link $\compnew$ to log-loss
prediction, extend it to encompass data-dependent and PAC-Bayesian
complexities and present our excess risk bound for the general
complexities (Section~~\ref{sec:prelloss} and~\ref{sec:rademacher})
can be read without this material).
\paragraph{Minimax Cumulative Log-Loss Interpretation of $\compnew$}
For every given estimator $\hat{f}$, we can define a density $r$ on $\cZ^n$ relative to $\nu$ by setting
\begin{equation}\label{eq:nmldist}
r(z^n) := \frac{q_{\hat{f}}(z^n)}{\shtark(\cF,\hat{f})}, 
\end{equation}
which evidently integrates to $1$ and hence is a probability density
(different choice of estimator $\hat{f}$ leads to different $r$; this
is suppressed in the notation). We can use density $r$ to
sequentially predict $Z_1, Z_2, \ldots, Z_n$ by predicting $Z_i$
with the corresponding conditional density $r(Z_i \mid
Z^{i-1})$. The cumulative log-loss obtained this way is given by
\begin{align*}
\sum_{i=1}^n -\log r(Z_i \mid Z^{i-1}) = - \log r(Z^n) ,
\end{align*}
the latter equality following by definition of conditional probability
and telescoping. Because of the correspondence, via Kraft's
inequality, of log-loss prediction and data compression, we can also
think of this quantity as a codelength.  Similarly, $\min_{f \in \cF}
- \log q_f(Z^n)$ is the minimum cumulative loss one could have
obtained \emph{with hindsight}, i.e.~if one had sequentially predicted the
$Z_i$ by the $q_f$ that 
turned out to minimize $- \log q_f$ on $Z^n$.  Assuming this minimum is
well-defined it is of course achieved by $\ml$, the maximum likelihood
estimator relative to $\cQ$, for which evidently also $\compnew(\cF) =
\compnew(\cF,\ml)$. Thus we get that for all $z^n \in \cZ^n$,
\begin{align}\label{eq:cldiff}
\eta^{-1} \cdot \compnew(\cF,\hat{f}) & = 
\log \shtark(\cF,\hat{f}) = - \log r(z^n) - \left(
- \log q_{\hat{f}}(z^n) \right) \nonumber \\
& \overset{\text{if $\hat{f} = \ml$} }{=} 
= \eta^{-1} \cdot \compnew(\cF) = - \log r(z^n) - \min_{f \in \cF} 
\left( - \log q_{f}(z^n) \right),\end{align}
the first equation holding for general $\hat{f}$ and the second for $\ml$. 
The final expression is just  the (cumulative log-loss) \emph{regret} of $r$ on data $z^n$, which, by \eqref{eq:cldiff}, is
constant on $z^n$. As first noted by \cite{shtarkov1987universal},
this implies that \eqref{eq:cldiff} is also the \emph{minimax
  individual-sequence regret} relative to the model $\cQ$ when
sequentially predicting outcomes $Z_1, \ldots, Z_n$ with the log-loss; the corresponding optimal
sequential prediction strategy $r$ is usually called the normalized
maximum likelihood (NML) or Shtarkov density; see
\cite{rissanen1996fisher,grunwald2007the,opper1999worst,cesa2001worst}
for details.
\subsection{Allowing Data-Dependency}
We now generalize the complexity definition above for arbitrary
deterministic $\hat{f}$ so that it becomes data dependent; further
extension to randomized estimators follows in Section~\ref{sec:generalizedshtarkov}. 
The central concept we need is that of a \emph{luckiness function}
$w: \cZ^n \rightarrow \reals^+_0$; every combination of estimator and
luckiness function will, up to scaling, define a unique version of
complexity; and every such complexity will induce a different
data-dependent bound on excess risk. We call $w$ `luckiness function'
since it will influence our excess risk bounds so that they become better
iff we are `lucky' in the sense that $P$ is such that $w(X^n)$ will be
large with high probability).

The \emph{generalized Shtarkov integral} for estimator $\hat{f}$ relative
to luckiness function $w$ is defined as \glsadd{genShtarkovDet}
\begin{align}\label{eq:shtarkovdetb}
\shtark(\cF, \hat{f}, w) 
:= \E_{\Zn \sim P} \left[ 
        \frac{e^{-\eta  \xsloss{\hat{f}_{ \mid \Zn}}(\Zn) } }{C({\hat{f}_{ \mid \Zn}})}
  \cdot w(Z^n)  \right] 
= \int_{\Z^n} q_{\hat{f}_{ \mid \zn}}(\zn) w(z^n) d \nu(\zn), 
\end{align}
and, whenever $\shtark(\cF,\hat{f},w) < \infty$, we define 
the corresponding data-dependent complexity as
\glsadd{genCompDet}
\begin{equation}\label{eq:simplecomplexityb}
\compnew(\cF, \hat{f}, w, \zn) 
:=  \frac{1}{\eta} \left(- \log w(\zn) + \log \shtark(\cF, \hat{f} , w) \right).
\end{equation}
Both expressions evidently  reduce to \eqref{eq:shtarkovdet} and \eqref{eq:simplecomplexity} if we take $w$ constant over $\cZ^n$. 
\paragraph{Cumulative Log-Loss Interpretation}
Fix an arbitrary estimator $\hat{f}$. Then for any luckiness function $w$ 
with $\shtark(\cF,\hat{f},w)< \infty$, we can define the probability density
\begin{equation}\label{eq:nmldistb}
r_w(z^n) := \frac{q_{\hat{f}|z^n}(z^n) \cdot w(z^n)}{\shtark(\cF,\hat{f},w)}, 
\end{equation}
with \eqref{eq:nmldist} being the special case with $w \equiv 1$.
Just as with $r_1$, for general such $w$, $r_w$ can be thought of as a
sequential prediction strategy, and $\eta \cdot \compnew(\cF,\hat{f},w,z^n) - \log q_{\hat{f}_{|z^n}}(z^n) = - \log
r_w(z^n)$ is the cumulative log-loss achieved by $r_w$. Different (up
to scaling) $w$ generate different log-loss prediction strategies
(codes) and corresponding complexities.  Conversely, for every
probability density $r'$ relative to $\nu$ on $Z^n$, we can set a
luckiness measure $w(z^n)$ proportional to
$r'(z^n)/q_{\hat{f}|z^n}(z^n)$; with the appropriately scaled choice of $w$, $r_w$ will
coincide $r'$; we thus have a $1$-to-$1$-correspondence between
luckiness functions $w$ with $\shtark(\cF,\hat{f},w) < \infty$, codes
and complexities.
\subsection{The Novel Complexity Measure, General Case}
\label{sec:generalizedshtarkov}
Here we further  generalize the complexity definition so that it can
output distributions $\hat{\Pi} \mid \Zn$ on $\cF$.  For this we need to extend the domain of the luckiness function to encompass $\cF$, i.e.~we now take arbitrary functions of the form $w: \cZ^n \times \cF \rightarrow \reals^+_0$.

The {generalized Shtarkov integral} for estimator $\hat{\Pi}$ relative to luckiness function $w$ 
is defined as
\glsadd{genShtarkov}
\begin{align}\label{eq:shtarkovgen}
\shtark(\cF, \hat{\Pi}, w) 
:=  \E_{\Zn \sim P} \left[
\exp\left(- \E_{\rv{f} \sim \hat{\Pi} \mid Z^n} \left[\eta 
\xsloss{\rv{f}}(Z^n) + \log C(\rv{f}) -  \log w(Z^n, \rv{f}) \right] \right) \right],
\end{align}
and the generalized (data-dependent) model complexity corresponding to \eqref{eq:shtarkovgen} is now defined as
\glsadd{genComp}
\begin{align}\label{eq:gencomp}
\compnew(\cF, \hat{\Pi}, w, z^n) 
& :=  \frac{1}{\eta} \cdot \left(  
          \E_{\rv{f} \sim \hat{\Pi} \mid \zn} \left[ - \log w(z^n,\rv{f}) \right] 
          + \log \shtark(\cF, \hat{\Pi}, w) \right).
\end{align}
Both expressions are readily seen to generalize
\eqref{eq:shtarkovdetb} and \eqref{eq:simplecomplexityb} respectively:
if, for a given deterministic estimator $\hat{f}$, we take
$\hat{\Pi}(\cdot \mid Z^n)$ to be $\delta_{\hat{f}}$ (the Dirac
measure on $\hat{f}_{\mid \Zn}$) and we take a function $w(z^n,f)
\equiv w(z^n)$ that does not depend on $f$, then the expressions above
simplify trivially to \eqref{eq:shtarkovdetb} and
\eqref{eq:simplecomplexityb} respectively; thus
$\compnew(\cF,\delta_{\hat{f}},w,z^n) = \compnew(\cF,\hat{f},w,z^n)$. 
Finally, we define 
\glsadd{compFull}
\begin{align}\label{eq:compnewfull}
\fcompnew(\cF, \hat{\Pi}, w, z^n) := 
\compnew(\cF, \hat{\Pi}, w, z^n)  + \E_{\rv{f} \sim \hat{\Pi} \mid z^n}[R_{\rv{f}}(\zn)]
\end{align}
as the sum of the complexity and the expected excess loss that a
random draw from $\hat{\Pi}$ achieves on the data.
\paragraph{$\compnew$ generalizes information complexities}
To explain how PAC-Bayesian type complexity arise a special case of
$\compnew$, we consider luckiness measures $w$ that are defined in
terms of probability distributions $\Pi$ on $\cF$ that do not depend
on the data; we call these `priors'. For notational convenience it is
useful to assume (without loss of generality) that $\Pi$ has a
density $\pi$ relative to some underlying measure $\rho$ on $\cF$
and that, for all $z^n \in \cZ^n$, $\hat{\Pi} \mid z^n$ also has a density $\hat{\pi} \mid z^n$ relative to $\rho$.
\begin{proposition}\label{prop:newgeneration}
Consider arbitrary $\Pi$ and $\hat{\Pi}$ as above with densities $\pi$ and $\hat\pi \mid z^n$ relative to some $\rho$. Set $w(z^n,f) := \pi(f) /\hat{\pi}(f \mid z^n)$. Then we have
\begin{align}\label{eq:sbound}
S(\cF,\hat{\Pi},w) \leq 1. 
\end{align}
Consequently,
\begin{align}\label{eq:infcomp}
\fcompnew(\cF, \hat{\Pi}, w, z^n) \leq 
\E_{\rv{f} \sim \hat{\Pi} \mid \zn}[R_{\rv{f}}(\zn)]+ \eta^{-1} \cdot \KL(\; ( \hat{\Pi} \mid \zn) \pipes \Pi \; ),
\end{align}
where $\KL(\; ( \hat{\Pi} \mid \zn) \pipes \Pi \; ) = \E_{\rv{f} \sim \hat{\Pi} \mid \zn}\left[ \log \hat{\pi}(f \mid z^n)/\pi(f) \right] $ is KL divergence.
\end{proposition}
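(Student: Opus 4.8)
The plan is to first establish $\shtark(\cF,\hat\Pi,w)\le 1$ by a direct computation, and then to read off \eqref{eq:infcomp} from the definitions \eqref{eq:gencomp} and \eqref{eq:compnewfull}. The only preliminary fact needed is that, since $Z_1,\dots,Z_n$ are i.i.d.\ and $\xsloss{f}(z^n)=\sum_{i=1}^n\xsloss{f}(z_i)$, the normalizer factorizes as $C(f)=\bigl(\E_{Z\sim P}[e^{-\eta\xsloss{f}(Z)}]\bigr)^n$; combined with the product-density definition $q_f(z^n)=\prod_{i=1}^n q_f(z_i)$ from just after \eqref{eq:entropified}, this gives $q_f(z^n)=p(z^n)\,e^{-\eta\xsloss{f}(z^n)}/C(f)$, i.e.\ $\eta\,\xsloss{f}(z^n)+\log C(f)=\log p(z^n)-\log q_f(z^n)$ on $\supp(P)$. (This is the same identity already used for the second equality in \eqref{eq:shtarkovdet}.)

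First I would substitute $w(z^n,f)=\pi(f)/\hat\pi(f\mid z^n)$ and this identity into the definition \eqref{eq:shtarkovgen}, so that the bracketed quantity $\eta\,\xsloss{\rv f}(Z^n)+\log C(\rv f)-\log w(Z^n,\rv f)$ becomes $\log p(Z^n)-\log q_{\rv f}(Z^n)+\log\hat\pi(\rv f\mid Z^n)-\log\pi(\rv f)$, and hence
\[
\exp\!\Bigl(-\E_{\rv f\sim\hat\Pi\mid Z^n}[\,\cdots\,]\Bigr)
=\frac{1}{p(Z^n)}\,\exp\!\left(\E_{\rv f\sim\hat\Pi\mid Z^n}\left[\log\frac{q_{\rv f}(Z^n)\,\pi(\rv f)}{\hat\pi(\rv f\mid Z^n)}\right]\right).
\]
Then I would apply Jensen's inequality to the concave map $\log$ (equivalently, $\exp(\E[\log U])\le\E[U]$ for nonnegative $U$) to move the posterior expectation outside the logarithm, bounding the right-hand side by $\tfrac{1}{p(Z^n)}\,\E_{\rv f\sim\hat\Pi\mid Z^n}\bigl[q_{\rv f}(Z^n)\,\pi(\rv f)/\hat\pi(\rv f\mid Z^n)\bigr]$; unfolding this posterior expectation as an integral against its own density $\hat\pi(\cdot\mid Z^n)$ cancels the denominator and leaves $\tfrac{1}{p(Z^n)}\int_\cF q_f(Z^n)\,\pi(f)\,d\rho(f)$.

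Next I would take $\E_{Z^n\sim P}$ of both sides. Since this expectation integrates against $p(z^n)\,d\nu(z^n)$, the factor $1/p(Z^n)$ cancels; by Tonelli's theorem (all integrands nonnegative) I may interchange the $z^n$- and $f$-integrals, and since each $q_f$ is a probability density on $\cZ^n$ w.r.t.\ $\nu$, the inner integral $\int_{\cZ^n}q_f(z^n)\,d\nu(z^n)$ equals $1$. This yields $\shtark(\cF,\hat\Pi,w)\le\int_\cF\pi(f)\,d\rho(f)=1$, which is \eqref{eq:sbound}. For \eqref{eq:infcomp}: in \eqref{eq:gencomp} the luckiness term is $\E_{\rv f\sim\hat\Pi\mid z^n}[-\log w(z^n,\rv f)]=\E_{\rv f\sim\hat\Pi\mid z^n}[\log(\hat\pi(\rv f\mid z^n)/\pi(\rv f))]=\KL((\hat\Pi\mid z^n)\pipes\Pi)$, and $\log\shtark(\cF,\hat\Pi,w)\le 0$ by \eqref{eq:sbound}, so $\compnew(\cF,\hat\Pi,w,z^n)\le\eta^{-1}\KL((\hat\Pi\mid z^n)\pipes\Pi)$; adding $\E_{\rv f\sim\hat\Pi\mid z^n}[R_{\rv f}(z^n)]$ as in \eqref{eq:compnewfull} finishes the proof.

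The genuine mathematical content is the single application of Jensen's inequality; the step I would be most careful about is the measure-theoretic bookkeeping surrounding it --- justifying the factorization of $C(f)$, handling the $P$-null set where $p(z^n)=0$ (work on $\supp(P)$, where $q_f\ll p$ so the ratios are well defined), treating the case where $\hat\pi(f\mid z^n)=0$ (a posterior-null event, so it does not affect the expectations) and the degenerate case $\KL=+\infty$ (bound vacuous), and verifying the hypotheses of Tonelli.
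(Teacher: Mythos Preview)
Your proof is correct and follows essentially the same route as the paper: apply Jensen's inequality to pull the posterior expectation over $\rv f$ outside the exponential in \eqref{eq:shtarkovgen}, observe that the weight $w(z^n,f)=\pi(f)/\hat\pi(f\mid z^n)$ converts the $\hat\Pi\mid z^n$-expectation into a $\Pi$-expectation, swap the $Z^n$- and $f$-integrals by Fubini/Tonelli, and use that each $q_f$ is a probability density; the second claim then follows by substituting $w$ into the definitions of $\compnew$ and $\fcompnew$. Your detour through the identity $\eta\,\xsloss{f}(z^n)+\log C(f)=\log p(z^n)-\log q_f(z^n)$ and the explicit cancellation of $p(Z^n)$ is slightly more roundabout than the paper's direct manipulation (the paper keeps everything as $\E_{Z^n\sim P}[e^{-\eta\xsloss{f}(Z^n)}/C(f)]$ and only passes to $q_f$ at the very end), but the mathematical content is identical.
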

\begin{proof}
By Jensen's inequality applied to \eqref{eq:shtarkovgen}, we have, using the definition of $w$ and Fubini's theorem, 
\begin{align*}
\shtark(\cF, \hat{\Pi}, w) 
& \leq   \E_{\Zn \sim P} \E_{\rv{f} \sim \hat{\Pi} \mid z^n}  \left[
\frac{e^{ - \eta 
\xsloss{\rv{f}}(z^n)}}{C(\rv{f})} \cdot  w(z^n, \rv{f})  \right]
= \E_{\Zn \sim P} \E_{\rv{f} \sim  \Pi} \left[ \frac{e^{ - \eta 
\xsloss{\rv{f}}(z^n)}}{C(\rv{f})}\right] \\ & =
\E_{\rv{f} \sim  \Pi} \left[ \int_{z^n} q_{\rv{f}}(z^n) d \nu(z^n) \right] = 1,
\end{align*} 
which gives \eqref{eq:sbound}; \eqref{eq:infcomp} follows by plugging in our choice of $w$ into the definition of $\compnew$.
\end{proof}

We thus see that $\fcompnew$ is upper bounded by \emph{information complexity} defined relative to prior $\Pi$ 
  \citep{zhang2006epsilon,zhang2006information}, which is just \eqref{eq:infcomp} normalized (divided by $n$). The 
notion of information complexity is also used to bound excess risks in the PAC-Bayesian approach 
of \cite{catoni2007PAC} and \cite{audibert2004PAC}.
As noted by \cite{zhang2006information}, the right-hand side of \eqref{eq:infcomp} is minimized if 
we take as our estimator $\hat{\Pi}$ the $\eta$-generalized Bayesian estimator, \begin{align}\label{eq:posteriorC}
\hat{\pi} (f \mid z^n) := \frac
{\exp\left(-\eta \sum_{i=1 }^{n} \loss_f(z_i)   \right) \cdot \pi(f)}{\int
  \exp\left(-\eta \sum_{i= 1 }^{n} \loss_f(z_i)  \right) \cdot \pi (f) d\rho(f)},
\end{align}
and in that case is equal to the \emph{generalized marginal
  likelihood}, known in the MDL literature as the \emph{extended
  stochastic complexity} \citep{yamanishi1998decision}
\begin{equation}\label{eq:esc} - \eta^{-1} \log \E_{\rv{f} \sim W} [\exp(- \eta \xsloss{\rv{f}}(\Zn))],\end{equation} 
which, for $\eta =1 $ and $\loss$ the log-loss, coincides with the
standard log Bayesian marginal likelihood.

We provide some further simple properties of $\compnew(\cF, \hat{\Pi},w,z^n)$ 
for general $\hat{\Pi}$ and $w$ in Section~\ref{sec:applications}. 
\paragraph{Cumulative Log-Loss Interpretation}
Just as for deterministic estimators, we note that every randomized estimator $\hat{\Pi}$ and luckiness function $w$ defines a probability density/prediction strategy on $Z^n$  by setting 
\begin{align*}
r_{w}(z^n) := 
\frac{
e^{ 
\left( \E_{\rv{f} \sim \hat{\Pi} \mid \zn} \left[  \log q_{\rv{f}}(z^n)\cdot  w(z^n,\rv{f}) \right] \right)}}{S(\cF, \hat{\Pi}, w)} ,
\end{align*}
and just as before, $\compnew$ can be interpreted in terms of the `code' $r_w$.
\section{First Main Result: Bounding Excess Risk in Terms of New Complexity}
\label{sec:zhang-nml}

In this section and Section~\ref{sec:complexity}, we restrict to the bounded loss setting; given this restriction, it is without loss of generality that we assume that
\begin{align}
\sup_{f, g \in \F} \esssup |\loss_f(Z) - \loss_g(Z)| \leq \frac{1}{2} , 
\tag{A1} \label{eqn:bounded}
\end{align}
as this always can be accomplished by an appropriate scaling of the loss.

Before presenting our first main result, it will be useful to
introduce a variant of an ordinary expectation as well as some
notation.  For $\eta >0$ and general random variables $U$, we define
the \glsadd{annealedExpectation} \emph{annealed expectation} (see \cite{grunwald2016fast} for the origin of this terminology) as
\begin{align}\label{eq:genren}
\Expann{\eta} \left[U \right] 
= -\frac{1}{\eta} \log \E \left[e^{-\eta U} \right].
\end{align}

Below we will first bound the annealed excess risk rather than the
standard excess risk and then continue to bound the latter in terms
of the former.  Our first main result below may be expressed
succinctly via the notion of \emph{exponential stochastic inequality}, \glsadd{ESI}
\begin{definition}[Exponential Stochastic Inequality (ESI)] 
\label{def:esi} Let $\eta > 0$ and let $U, U'$ be random variables on some probability space with probability measure $P$. We define
\begin{align}\label{eq:esi}
U \stochleq_\eta U'  
\,\,\,\, \Leftrightarrow \,\,\,\, 
\E_{U, U' \sim P} \left[e^{\eta (U- U')} \right] \leq 1,
\end{align}
and we write $U \stochleq_{\eta}^* U'$ iff the right hand of
\eqref{eq:esi} holds with equality.
\end{definition}
Clearly $U \stochleq_{\eta}^* U' \Rightarrow U \stochleq_{\eta} U'$. An ESI simultaneously captures high probability and in-expectation results:
\begin{proposition}[ESI Implications] \label{prop:drop} 
For all $\eta > 0$, if $U \stochleq_{\eta} U'$ then, 
(i), $\E [ U ] \leq \E [ U' ]$; 
and, (ii), for all $K >0$, with $P$-probability at least $1- e^{-K}$, 
$U \leq U' + K/\eta$. 
\end{proposition}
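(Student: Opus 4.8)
The plan is to prove the two implications of Proposition~\ref{prop:drop} directly from the defining inequality $\E\left[e^{\eta(U-U')}\right]\le 1$, which is the hypothesis $U \stochleq_\eta U'$.

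For part (i), I would apply Jensen's inequality to the convex function $x \mapsto e^{\eta x}$: since $\E\left[e^{\eta(U-U')}\right]\le 1$, taking logarithms and using $\log \E[e^{\eta(U-U')}] \ge \E[\eta(U-U')]$ gives $0 \ge \eta\,\E[U-U']$, hence $\E[U]\le\E[U']$ because $\eta>0$. The only subtlety is whether $\E[U-U']$ is well-defined; in the intended applications $U,U'$ are bounded (or at least the relevant expectations exist), so I would either note this is implicit or invoke the convention that $\E[U]-\E[U']$ may be $-\infty$, in which case the inequality is trivial.

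For part (ii), I would use Markov's inequality on the nonnegative random variable $e^{\eta(U-U')}$. Fix $K>0$. Then
\begin{align*}
P\left(U - U' > K/\eta\right)
= P\left(e^{\eta(U-U')} > e^{K}\right)
\le e^{-K}\,\E\left[e^{\eta(U-U')}\right]
\le e^{-K},
\end{align*}
where the last step uses the hypothesis. Taking complements, with $P$-probability at least $1-e^{-K}$ we have $U - U' \le K/\eta$, i.e.\ $U \le U' + K/\eta$, as claimed. (One should be slightly careful about strict versus non-strict inequality in the event $\{U-U' > K/\eta\}$ versus $\{U-U'\ge K/\eta\}$, but Markov's inequality applies to the strict-exceedance event, and its complement is exactly $\{U \le U' + K/\eta\}$, so this matches the statement.)

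I do not anticipate a genuine obstacle here: both parts are one-line consequences of Jensen and Markov respectively applied to the exponentiated difference. The only thing to watch is measurability/integrability bookkeeping — ensuring $e^{\eta(U-U')}$ is a legitimate random variable with the stated expectation bound (guaranteed by the ESI hypothesis itself) and that $U,U'$ are jointly defined on the same probability space (stated in Definition~\ref{def:esi}). So the ``hard part'' is essentially cosmetic: deciding how much integrability to assume for part (i), which I would resolve by simply allowing $\E[U]=-\infty$ as a degenerate case or by noting boundedness is assumed throughout the relevant sections.
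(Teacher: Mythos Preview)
Your proposal is correct and matches the paper's own proof essentially verbatim: the paper's entire argument is ``Jensen's inequality yields (i). Apply Markov's inequality to $e^{\eta (U - U')}$ for (ii).'' Your additional remarks on integrability and strict-versus-nonstrict inequality are fine but go beyond what the paper deems necessary.
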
 
\begin{proof} 
  Jensen's inequality yields (i). 
  Apply Markov's inequality to $e^{-\eta (U - U')}$ for (ii).
\end{proof}
We now present  our first main result, a new bound that interpolates between the Zhang bound and standard empirical process theory bounds for handling large classes and that is sharp in the sense that it really is an equality of exponential moments. 
\begin{theorem} \label{thm:first}
For every randomized estimator $\hat{\Pi}$ and every 
luckiness function $w: \cZ^N \times \cF \rightarrow \reals^+_0$, we have
\begin{align}\label{eqn:cond-first}
\E_{\rv{f} \sim \hat{\Pi}_{| \Zn}} \left[ 
\Expann{\eta}_{\Zp \sim P} \left[ \xsloss{\rv{f}}(\Zp) \right] \right]
\stochleq^*_{n \eta}  \frac{1}{n} \cdot \fcompnew(\cF , \hat{\Pi}, w , Z^n).
\end{align}
\end{theorem}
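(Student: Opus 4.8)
The plan is to unwind both sides of the claimed ESI into a single exponential-moment identity and verify it directly. Recall that, by definition of the ESI with equality, \eqref{eqn:cond-first} asserts
\[
\E_{\Zn \sim P}\left[ \exp\left( n\eta \cdot \E_{\rv{f} \sim \hat{\Pi}_{|\Zn}} \left[ \Expann{\eta}_{\Zp \sim P} [\xsloss{\rv{f}}(\Zp)] \right] - \eta \cdot \fcompnew(\cF,\hat{\Pi},w,\Zn) \right) \right] = 1.
\]
First I would expand the inner annealed expectation using its definition \eqref{eq:genren}: for fixed $f$, $n\eta \cdot \Expann{\eta}_{\Zp \sim P}[\xsloss{f}(\Zp)] = -n \log \E_{\Zp \sim P}[e^{-\eta \xsloss{f}(\Zp)}] = -n \log C(f)$, using the product structure and the definition of the normalization constant $C(f)$ from the Glossary. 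Hence the exponent on the left, before the outer expectation over $\Zn$, equals $-\E_{\rv{f}\sim\hat\Pi_{|\Zn}}[n\log C(\rv f)] - \eta \cdot \fcompnew(\cF,\hat\Pi,w,\Zn)$.

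Next I would substitute the definition of $\fcompnew$ from \eqref{eq:compnewfull}, namely $\fcompnew = \compnew(\cF,\hat\Pi,w,\Zn) + \E_{\rv f \sim \hat\Pi_{|\Zn}}[R_{\rv f}(\Zn)]$, and then the definition of $\compnew$ from \eqref{eq:gencomp}. The term $\eta \cdot \compnew$ contributes $\E_{\rv f \sim \hat\Pi_{|\Zn}}[-\log w(\Zn,\rv f)] + \log \shtark(\cF,\hat\Pi,w)$. The $\log \shtark(\cF,\hat\Pi,w)$ piece is constant in $\Zn$, so $e^{-\log \shtark(\cF,\hat\Pi,w)} = 1/\shtark(\cF,\hat\Pi,w)$ pulls out of the outer expectation. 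What remains inside the outer expectation over $\Zn$ is exactly
\[
\exp\left( -\E_{\rv f \sim \hat\Pi_{|\Zn}}\left[ n\log C(\rv f) - \log w(\Zn,\rv f) + \eta R_{\rv f}(\Zn) \right] \right),
\]
since $n\log C(\rv f)$ from the annealed-expectation term and the $\log C(\rv f)$ appearing inside \eqref{eq:shtarkovgen} must be reconciled — here I should be careful: the Glossary's $C(f) = \E_{\Zn \sim P}[e^{-\eta R_f(\Zn)}]$ with the $n$-fold product already equals $\E_{\Zp\sim P}[e^{-\eta R_f(\Zp)}]^n$, so "$n\log C$" in the first step is really just "$\log C$" in the $n$-sample normalizer convention; I would fix notation so the $C(\rv f)$ from the expanded annealed expectation and the $C(\rv f)$ inside $\shtark$ are literally the same quantity and cancel against each other's role. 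After this reconciliation, the integrand is precisely $\exp(-\E_{\rv f \sim \hat\Pi_{|\Zn}}[\eta R_{\rv f}(\Zn) + \log C(\rv f) - \log w(\Zn,\rv f)])$, which is exactly the integrand defining $\shtark(\cF,\hat\Pi,w)$ in \eqref{eq:shtarkovgen}. Therefore the outer expectation equals $\shtark(\cF,\hat\Pi,w)$, and dividing by the constant factor $\shtark(\cF,\hat\Pi,w)$ pulled out earlier gives exactly $1$, establishing the equality form of the ESI.

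The step I expect to be the main obstacle is bookkeeping the normalization constants: $C(f)$ is defined in the Glossary as an $n$-sample quantity $\E_{\Zn\sim P}[e^{-\eta R_f(\Zn)}]$, whereas the annealed expectation in the theorem statement is over a single fresh $\Zp \sim P$ and is multiplied by $n$ on the outside, so I must show $n\,\Expann{\eta}_{\Zp}[\xsloss f(\Zp)] = -\log C(f)$ with $C$ the $n$-sample normalizer. This is just the i.i.d. product identity $C(f) = (\E_{\Zp\sim P}[e^{-\eta R_f(\Zp)}])^n$ together with $R_f(\Zn) = \sum_i R_f(Z_i)$, but getting the factors of $n$ and the two appearances of $C(\rv f)$ to cancel cleanly is where all the care goes; once that alignment is done, the rest is a one-line substitution of definitions with Fubini used to interchange $\E_{\Zn\sim P}$ and $\E_{\rv f \sim \hat\Pi_{|\Zn}}$ only where legitimate (the inner annealed expectation is over an independent copy $\Zp$, so no interchange issue arises there). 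No inequality is ever used — consistent with the paper's remark that this bound "is really an equality rather than a bound."
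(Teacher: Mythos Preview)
Your proposal is correct and follows essentially the same route as the paper's own proof: unwind the ESI into an exponential-moment identity, substitute the definitions of $\fcompnew$ and $\compnew$, pull out the constant factor $\shtark(\cF,\hat\Pi,w)$, and use the key identity $n\eta\,\Expann{\eta}_{\Zp\sim P}[\xsloss{f}(\Zp)] = -\log C(f)$ (your ``bookkeeping'' step, which is exactly the paper's equation \eqref{eq:firstthm}) to recognize the remaining integrand as the definition of $\shtark$. One minor remark: no Fubini interchange between $\E_{\Zn}$ and $\E_{\rv f\sim\hat\Pi_{|\Zn}}$ is actually needed anywhere, since the inner expectation over $\rv f$ stays inside the exponent throughout.
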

The proof is in the appendix; it is merely a sequence of
straightforward rewritings, where the key observation is that for
every $f \in \cF$, the annealed risk
$\Expann{\eta}_{\Zp \sim P} \left[ \xsloss{{f}}(\Zp) \right]$ is
related to the normalization factor appearing in the definition
\eqref{eq:entropified} of the probability density $q_f$ and its
$n$-fold product $C(f)$ appearing in \eqref{eq:shtarkovdet} via the
following equality, as follows immediately from the definitions:
\begin{align}\label{eq:firstthm}
\Expann{\eta}_{\Zp \sim P} \left[ \xsloss{{f}}(\Zp) \right] = \frac{1}{n} \cdot 
\frac{- \log C(f)}{\eta} .
\end{align}
We note that, by taking $w$ as in Proposition~\ref{prop:newgeneration},
via \eqref{eq:infcomp}, this theorem strictly generalizes Theorem~3.1
of \cite{zhang2006information}, the left-hand side of Zhang's
inequality being equal to the annealed excess risk and the right-hand
side to the information complexity, i.e.~the right hand of
\eqref{eq:infcomp}. However, by taking different $w$, we get different
bounds which are not covered by Zhang's results and which, as we'll see, can be used to recover minimax excess risk bounds for certain large classes of polynomial entropy.

The above ESI's have annealed expectations on their left-hand sides and thus still fall short of providing such excess risk bounds. 

This gap can be resolved under the \emph{$v$-central condition}.

\begin{definition}[$\mathbf{v}$-central condition] \label{def:v-central}
Let $v \colon x \mapsto \eta_0 \cdot x^\alpha$ be a function with domain $[0, \infty)$ for some $\eta_0 > 0$ and $\alpha \geq 0$. 
We say that $(P, \loss, \F)$ satisfies the $v$-central condition if, for all $\gamma > 0$,
\begin{align*}
\E [ e^{-v(\gamma) \xsloss{f}(Z)} ] \leq e^{v(\gamma) \cdot \gamma} .
\end{align*}
In the special case of $\alpha = 0$, we say that the $\eta$-central condition holds (for $\eta = \eta_0$).
\end{definition}
If the loss is $\eta$-exp-concave and the class $\F$ is convex, it is known that the $\eta$-central condition holds (see the figure on page 1798 of \Citet{vanerven2015fast}, or Lemma 1 of \cite{mehta2017fast} for an explicit proof). More generally, in the case of bounded losses the $v$-central condition is in fact equivalent to the well-known \emph{Bernstein condition}.

\begin{definition}[Bernstein condition] \label{def:bernstein}
Let $\beta \in [0, 1]$. We say that $(P, \loss, \F)$ satisfies the $\beta$-Bernstein condition if, for some constant $\bernc < \infty$
\begin{align*}
\E \left[ \xsloss{f}(Z)^2 \right] \leq \bernc \E \left[ \xsloss{f}(Z) \right]^\beta \qquad \text{for all } f \in \F .
\end{align*}
\end{definition}

We only recall one direction of the equivalence of the $v$-central condition to the Bernstein condition here; the full equivalence is due to \Citep[Theorem 5.4]{vanerven2015fast}.
\begin{lemma}[Bernstein implies $v$-central] \label{lemma:bernstein-v-central}
Assume for all $f \in \F$ that $R_f(Z) \in [-1/2, 1/2]$ a.s. 
If the $\beta$-Bernstein condition holds for some $\beta \in [0, 1]$ and some constant $\bernc$, 
then the $v$-central condition holds for
\begin{align*}
v(\gamma) = \min \left\{ \frac{\gamma^{1 - \beta}}{\bernc}, 1 \right\} . 
\end{align*}
\end{lemma}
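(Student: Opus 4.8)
The plan is to verify the defining inequality of the $v$-central condition directly, separately for each $f$. Fix $f \in \F$ and $\gamma > 0$, and abbreviate $U := R_f(Z)$, $\mu := \E[U]$, and $\eta := v(\gamma) = \min\{\gamma^{1-\beta}/\bernc,\, 1\}$ (we may assume $\bernc > 0$, as otherwise $U = 0$ a.s.\ and the claim is trivial). Two preliminary observations: since $f^*$ minimizes the risk over $\F$ and $f \in \F$, we have $\mu \ge 0$; and since by assumption $|U| \le 1/2$ a.s.\ and $\eta \le 1$, we have $\eta U \ge -1$ a.s.

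First I would linearize the exponential using the elementary inequality $e^{-x} \le 1 - x + x^2$, valid for all $x \ge -1$ (a one-line calculus check). Applying it pointwise with $x = \eta U$ and taking expectations gives $\E[e^{-\eta U}] \le 1 - \eta\mu + \eta^2\E[U^2]$, and the $\beta$-Bernstein condition $\E[U^2] \le \bernc\mu^\beta$ then yields $\E[e^{-\eta U}] \le 1 - \eta\mu + \eta^2\bernc\mu^\beta$.

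Next I would reduce the target bound $\E[e^{-\eta U}] \le e^{\eta\gamma}$ to a purely algebraic inequality. Since $e^{\eta\gamma} \ge 1 + \eta\gamma$, it suffices to show $1 - \eta\mu + \eta^2\bernc\mu^\beta \le 1 + \eta\gamma$, i.e.\ (dividing by $\eta > 0$) that $\eta\bernc\mu^\beta \le \mu + \gamma$. By the definition of $\eta$ one has $\eta\bernc \le \gamma^{1-\beta}$ in both branches of the minimum (equality when $\eta = \gamma^{1-\beta}/\bernc$; and if $\eta = 1$ then $\gamma^{1-\beta}/\bernc \ge 1$, so $\eta\bernc = \bernc \le \gamma^{1-\beta}$), so it remains to check $\gamma^{1-\beta}\mu^\beta \le \mu + \gamma$ --- which is precisely Young's inequality (weighted AM-GM with exponents $1/(1-\beta)$ and $1/\beta$): $\gamma^{1-\beta}\mu^\beta \le (1-\beta)\gamma + \beta\mu \le \gamma + \mu$. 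This completes the argument.

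I do not expect a genuine obstacle here; the only care needed is with degenerate cases. When $\beta \in \{0, 1\}$ Young's inequality collapses, but $\gamma^{1-\beta}\mu^\beta \le \gamma + \mu$ still holds trivially; when $\mu = 0$ the Bernstein condition forces $U = 0$ a.s., so the claim is immediate and the symbol $0^0$ never needs interpreting; and the range hypothesis of the exponential inequality is exactly what $\eta \le 1$ together with $|U| \le 1/2$ guarantees.
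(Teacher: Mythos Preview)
Your proof is correct and takes a different route from the paper's. The paper does not argue directly at all: it simply invokes Theorem~5.4, part~1(a), of van~Erven et~al.\ (2015) as a black box, plugging in the parameter choices $\bar a=1/2$, $\bar b=1/(2\bar a)=1$, and $u(x)=\bernc x^\beta$. You instead give a self-contained elementary argument: linearize the exponential via $e^{-x}\le 1-x+x^2$ for $x\ge-1$, control the second-moment term with the Bernstein assumption, and close with Young's inequality $\gamma^{1-\beta}\mu^\beta\le(1-\beta)\gamma+\beta\mu\le\gamma+\mu$. The advantage of your approach is transparency---one sees exactly where each hypothesis (boundedness for the exponential bound, Bernstein for the variance, the specific form of $v$ for the algebraic reduction) enters---at the cost of a few more lines than a bare citation. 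One minor slip: your parenthetical ``when $\mu=0$ the Bernstein condition forces $U=0$ a.s.'' is only valid for $\beta>0$; at $\beta=0$ the Bernstein condition is vacuous for bounded losses and does not force $U=0$. This does no harm, however, since in that regime $\gamma^{1-\beta}\mu^\beta=\gamma\le\gamma+\mu$ holds directly and no $0^0$ convention is actually needed.
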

\begin{proof}
For clarity, let $\bar{a}$ and $\bar{b}$ refer to the constants $a$ and $b$ from part 1(a) of Theorem 5.4 of \Citet{vanerven2015fast}. Apply that result with $\bar{b} = \frac{1}{2 \bar{a}}$, $\bar{a} = 1/2$, and the $u$ function there set to $x \mapsto \bernc x^\beta$. Note that although the statement of Theorem 5.4 actually imposes the stronger condition that the loss $\loss$ be $[0,1/2]$-valued, the proof thereof only requires that $R_f \in [-1/2, 1/2]$ a.s.~for all $f \in \F$.
\end{proof}
Note that for such bounded loss functions, the weakest Bernstein condition with $\beta = 0$ holds automatically and so does the $v$-central condition with $v(\gamma) \propto \gamma$.

The following lemma is a translation of Lemma 2 of \cite{grunwald2012safe} which addresses the aforementioned gap between the annealed and actual expectations.
\begin{lemma} \label{lemma:kl-renyi}
Suppose that the $v$-central condition holds for some $\eta_0 > 0$ and $\alpha \geq 0$. 
If $R_f(Z) \in [-1/2, 1/2]$ a.s., then for all $\gamma > 0$, for all $\eta \leq \frac{v(\gamma)}{2}$
\begin{align*}
\E_{Z \sim P} \left[ \xsloss{f}(Z) \right] 
\leq C_\eta \cdot \Expann{\eta}_{Z \sim P} \left[ \xsloss{f}(Z) \right] + \frac{C_\eta - 1}{\eta} v(\gamma) \cdot \gamma ,
\end{align*}
with $C_\eta = 2 + 2 \eta$. 
In particular, taking $\eta = v(\gamma) / 2$, we have
\begin{align*}
\E_{Z \sim P} \left[ \xsloss{f}(Z) \right] 
\leq C_{v(\gamma)/2} \cdot \Expann{v(\gamma)/2}_{Z \sim P} \left[ \xsloss{f}(Z) \right] + 2 (C_{v(\gamma)/2} - 1) \gamma .
\end{align*}
\end{lemma}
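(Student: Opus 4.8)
Since the lemma is declared a translation of Lemma~2 of \cite{grunwald2012safe}, the most economical route is to exhibit the dictionary between the two statements and cite it: in their setup $\xsloss{f}(Z)$ plays the role of the (bounded) excess loss, the $v$-central condition is exactly the easiness hypothesis used, and — as already noted in the proof of Lemma~\ref{lemma:bernstein-v-central} above — that proof only uses $\xsloss{f}(Z) \in [-1/2,1/2]$ rather than a one-sided loss bound. I outline the argument underlying it.

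Write $X := \xsloss{f}(Z)$, so $X \in [-1/2,1/2]$ a.s.\ and $\mu := \E[X] \ge 0$ since $f^*$ is a risk minimizer, and put $h(\eta) := -\log\E[e^{-\eta X}] = \eta\,\Expann{\eta}[X]$. The plan rests on three observations. (i) \emph{Concavity / monotonicity}: $h$ is concave on $[0,\infty)$ with $h(0)=0$ and $h'(0)=\mu$ (it is, up to sign, a log-moment generating function), so $\eta\mapsto\Expann{\eta}[X]=h(\eta)/\eta$ is non-increasing and $\Expann{\eta}[X]\le\mu$. (ii) \emph{Central anchor}: the $v$-central condition with the given $\gamma$ says precisely $\E[e^{-v(\gamma)X}]\le e^{v(\gamma)\gamma}$, i.e.\ $\Expann{v(\gamma)}[X]\ge-\gamma$; combined with (i) and $\eta\le v(\gamma)/2\le v(\gamma)$ this propagates to $\Expann{\eta}[X]\ge-\gamma$, and likewise $\Expann{2\eta}[X]\ge-\gamma$ since $2\eta\le v(\gamma)$ — which is exactly why the hypothesis is phrased with $v(\gamma)/2$. (iii) \emph{Quantitative comparison}: one bounds $\mu$ from above by a suitable combination of $\Expann{\eta}[X]$ and $\Expann{2\eta}[X]$ plus an $O(\eta)$-error. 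Concretely, insert second-order two-sided bounds on $e^{-u}$, $u=\eta X$ (valid on the range of $u$ controlled by $|X|\le1/2$ and $\eta$ small, which also keeps the tilted variances $-h''\le 1/4$), into $\E[e^{-\eta X}]$ and $\E[e^{-2\eta X}]$, take logarithms using $\log(1-t)\ge -t-t^{2}$, and rearrange so that $\Expann{\eta}[X]$ carries the coefficient $C_\eta=2+2\eta$.

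Putting the pieces together: step (iii) gives $\mu\le C_\eta\,\Expann{\eta}[X]$ plus a positive multiple of $-\Expann{2\eta}[X]$ plus an $O(\eta)$-term; substituting $\Expann{2\eta}[X]\ge-\gamma$ from step (ii) and using $\eta\le v(\gamma)/2$ once more to dominate the $\eta$-sized quantities by $\frac{1}{\eta}v(\gamma)\gamma$-sized ones folds everything into $\frac{C_\eta-1}{\eta}v(\gamma)\gamma$, which is the claim; the `in particular' line is the specialization $\eta=v(\gamma)/2$ (so $C_\eta=2+v(\gamma)$ and the tail term becomes $2(C_{v(\gamma)/2}-1)\gamma$). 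The main obstacle is getting the \emph{exact} constant $C_\eta=2+2\eta$ and the precise coefficient of $v(\gamma)\gamma$ rather than just `some constants': the elementary inequalities in step (iii) must be chosen and combined tightly, keeping $u=\eta X$ in the regime where they are sharp (this is where $|\xsloss{f}(Z)|\le1/2$ and $\eta\le v(\gamma)/2$ genuinely enter). Conceptually nothing is deep; the delicate part is purely the error-term bookkeeping, which is why deferring to the already-verified Lemma~2 of \cite{grunwald2012safe} is the cleanest write-up.
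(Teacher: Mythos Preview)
The paper does not prove this lemma at all: it merely states it and declares it ``a translation of Lemma~2 of \cite{grunwald2012safe}'', with no argument given either in the main text or in the appendix. Your proposal correctly identifies this and adopts the same approach --- cite the reference --- while additionally supplying a sketch of the underlying mechanism (concavity of the cumulant generating function, the central-condition anchor $\Expann{v(\gamma)}[X]\ge -\gamma$, and second-order error control). That sketch already goes beyond what the paper offers; your caveat that the exact constants $C_\eta = 2+2\eta$ and $(C_\eta-1)/\eta$ require careful bookkeeping best left to the source is appropriate and honest.
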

We note that a version of the above result also holds for general bounded losses. In fact, \cite{grunwald2016fast} (still under review) provide a refined version of this lemma that works even for unbounded losses and for any $\eta < v(\gamma)$, with sharper bounds for $\eta < v(\gamma)/2$.

With all the pieces in place, the next two excess risk bounds in terms of $\compnew$ are nearly immediate.
\begin{corollary} \label{cor:risk-comp-esi}
Take the same setup as Lemma~\ref{lemma:kl-renyi}. 
If $\hat{\Pi}$ is an arbitrary randomized estimator, and $w$ is an arbitrary 
luckiness function, then
\begin{align}
\E_{\rv{f} \sim \hat{\Pi}_{| \Zn}} \left[ 
    \E_{Z \sim P} \left[ \xsloss{\rv{f}}(Z) \right] 
\right]
\stochleq_{v(\gamma) \cdot n / 6}
    \frac{3 \, \fcompnew_{v(\gamma)/2}(\cF , \hat{\Pi}, w , Z^n)}{n} 
    + 4 \gamma . \label{eqn:risk-comp-rand}
\end{align}
If $\hat{f}$ is ERM, then
\begin{align}
\E_{Z \sim P} \left[ \xsloss{\hat{f}}(Z) \right] 
\stochleq_{v(\gamma) \cdot n/6} \frac{3 \, \compnew_{v(\gamma)/2}(\cF , \hat{f})}{n} + 4 \gamma . \label{eqn:risk-comp-erm}
\end{align}
\end{corollary}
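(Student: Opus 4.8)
The plan is to fix the learning rate at $\eta := v(\gamma)/2$ and simply splice together Theorem~\ref{thm:first}, which controls the \emph{annealed} excess risk by the full complexity, and Lemma~\ref{lemma:kl-renyi}, which trades annealed excess risk for true excess risk at the cost of a multiplicative constant $C_\eta = 2+2\eta$ and an additive $O(\gamma)$ term, using only the elementary algebra of exponential stochastic inequalities. As the surrounding text already signals, the result is ``nearly immediate'' once these pieces are put in the right order.

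First I would invoke Theorem~\ref{thm:first} with $\eta = v(\gamma)/2$ and weaken $\stochleq^*$ to $\stochleq$, obtaining $\E_{\rv{f} \sim \hat{\Pi}_{| \Zn}}\bigl[ \Expann{\eta}_{\Zp \sim P}[ \xsloss{\rv{f}}(\Zp) ] \bigr] \stochleq_{n \eta} \tfrac1n \fcompnew(\cF, \hat{\Pi}, w, Z^n)$; write $A$ and $B$ for the two sides. Next I would apply Lemma~\ref{lemma:kl-renyi} (with this $\gamma$ and this $\eta$) for each fixed $f$ and average over $\rv{f} \sim \hat{\Pi} \mid Z^n$; by linearity of the inner and outer expectations this yields the \emph{pathwise} (in $Z^n$) bound $\E_{\rv{f}}[\E_{Z \sim P}[ \xsloss{\rv{f}}(Z) ]] \le C_\eta\, A + 2(C_\eta-1)\gamma$, where $C_\eta = 2 + 2\eta = 2 + v(\gamma)$ and where I used $\tfrac{C_\eta - 1}{\eta} v(\gamma)\gamma = 2(C_\eta - 1)\gamma$ at $\eta = v(\gamma)/2$.

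Then I would glue the two together by ESI calculus. Scaling the ESI of the first step by the positive constant $C_\eta$ divides its rate by $C_\eta$, giving $C_\eta A \stochleq_{n\eta/C_\eta} C_\eta B$; adding the constant $2(C_\eta - 1)\gamma$ to both sides leaves the ESI intact; and since the pathwise bound dominates $\E_{\rv{f}}[\E_Z[\xsloss{\rv{f}}(Z)]]$ by $C_\eta A + 2(C_\eta-1)\gamma$, the elementary rule ``if $V \le V'$ a.s.\ and $V' \stochleq_\lambda W$ then $V \stochleq_\lambda W$'' gives $\E_{\rv{f}}[\E_{Z \sim P}[ \xsloss{\rv{f}}(Z) ]] \stochleq_{n\eta/C_\eta} C_\eta B + 2(C_\eta - 1)\gamma$. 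Finally, using $C_\eta \le 3$ --- which holds in the present bounded-loss regime, where one may take $v(\gamma) \le 1$ (e.g.\ via Lemma~\ref{lemma:bernstein-v-central}) --- one has $2(C_\eta - 1) \le 4$, $C_\eta B \le 3 B$, and $n\eta/C_\eta \ge n\eta/3 = n v(\gamma)/6$; since $\stochleq_\lambda$ only weakens as $\lambda$ decreases (convexity of $\lambda \mapsto \E[ e^{\lambda(U-U')}]$, which equals $1$ at $\lambda=0$ and is $\le 1$ at the larger rate), this is exactly \eqref{eqn:risk-comp-rand}. For the ERM bound \eqref{eqn:risk-comp-erm} I would specialize to $\hat{\Pi} = \delta_{\hat{f}}$ and $w \equiv 1$, so that $\fcompnew(\cF, \delta_{\hat{f}}, 1, Z^n) = \compnew(\cF, \hat{f}) + \xsloss{\hat{f}}(Z^n)$, and note that $\xsloss{\hat{f}}(Z^n) = \sum_{i} (\loss_{\hat{f}}(Z_i) - \loss_{f^*}(Z_i)) \le 0$ by optimality of the empirical risk minimizer, so the full complexity may be replaced by $\compnew(\cF, \hat{f})$.

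I do not expect a genuine obstacle: the argument is pure bookkeeping. The two points that demand care are (i) tracking how the ESI learning rate contracts from $n\eta$ to $n v(\gamma)/6$ as one multiplies through by $C_\eta$ and then invokes $C_\eta \le 3$ together with monotonicity of $\stochleq_\lambda$ in $\lambda$, and (ii) controlling signs so that $C_\eta B + 2(C_\eta-1)\gamma$ can be relaxed to $3B + 4\gamma$; for the deterministic ERM case this is precisely where the nonpositivity of the empirical excess loss is needed, and in the general case it rests on nonnegativity of the complexity term.
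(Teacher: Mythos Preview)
Your proposal is correct and follows essentially the same route as the paper's proof: set $\eta = v(\gamma)/2$, combine Lemma~\ref{lemma:kl-renyi} (to pass from true to annealed excess risk) with Theorem~\ref{thm:first} (to bound the annealed excess risk by $\fcompnew/n$), use $C_\eta \le 3$ from $v(\gamma)\le 1$, and then specialize to $\hat\Pi=\delta_{\hat f}$, $w\equiv 1$ together with $R_{\hat f}(Z^n)\le 0$ for the ERM case. Your write-up is in fact more explicit than the paper's --- you spell out the ESI scaling/monotonicity steps and you flag the sign issue in passing from $C_\eta B$ to $3B$, which the paper's one-paragraph proof leaves implicit.
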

\begin{proof}
For \eqref{eqn:risk-comp-rand}, start with Lemma~\ref{lemma:kl-renyi} with $\eta = v(\gamma)/2$ for the desired $\gamma > 0$, and then apply Theorem~\ref{thm:first} to (stochastically) upper bound the annealed excess risk term. Observe that since $v(\gamma) \leq 1$ by assumption, we have $C_{v(\gamma)/2} \leq 3$.

For \eqref{eqn:risk-comp-erm}, start with \eqref{eqn:risk-comp-rand}, and take $w \equiv 1$  and $\hat{\Pi}(\cdot \mid \Z^n)$  equal to the probability measure that places mass $1$ on $\hat{f}_{\mid \Zn}$ and $0$ elsewhere. From these settings and the optimality of ERM for the empirical risk, $\fcompnew(\cF , \hat{\Pi}, w , Z^n)$ reduces to the simpler form $\compnew(\cF, \hat{f})$.
\end{proof}

To aid in the interpretation of the corollary, let us remark on two
special cases of \eqref{eqn:risk-comp-rand}. In both cases, we will
suppose that, as in Proposition~\ref{prop:newgeneration},
$w(z^n,f) := \pi(f) /\hat{\pi}(f \mid z^n)$ where $\pi$ is the density
of a fixed probability measureon $\cF$ independent of the sample, so
that $\compnew$ is bounded by information complexity.  First, if the
$\eta$-central condition holds, then, setting $\eta' = \eta/2$ and
using \eqref{eq:infcomp}, it further follows that
\begin{align*}
\E_{\rv{f} \sim \hat{\Pi}_{| \Zn}} \left[ 
    \E_{Z \sim P} \left[ \xsloss{\rv{f}}(Z) \right] 
\right] 
\stochleq_{\frac{n \eta}{6}} 
  \frac{3}{n} \left( 
        \E_{\rv{f} \sim \hat{\Pi} \mid \Zn}[R_{\rv{f}}(\Zn)] 
        + \frac{2 \KL(\; ( \hat{\Pi} \mid \Zn) \pipes \Pi \; )}{ \eta} 
    \right) .
\end{align*}
In the second case, we take $\hat{\Pi}$ to be any posterior whose
$\hat{\Pi}$-expected empirical risk is at most the empirical risk of
$f^*$ (for example, $\hat{\Pi}$ could be Dirac measure on, hence
coincide with, ERM), and for simplicity we further assume that a
$\beta$-Bernstein condition holds for some $\bernc \geq 2$ (if it
holds for a smaller $\bernc$, we will simply weaken the
condition). Thus, from the bounded loss assumption the $v$-central
condition holds for $v(\gamma) = \frac{\gamma^{1-\beta}}{\bernc}$
(provided that we only consider $\gamma \leq B^{1/(1- \beta)}$), and
tuning $\gamma$ yields 
$\gamma = A_1 \cdot n^{-1/(2-\beta)} \KL(\; ( \hat{\Pi} \mid \Zn) \pipes \Pi\; )^{1/(2-\beta)}$ 
for a constant $A_1$ depending only on $\beta$ and $B$, so that 
\begin{align}\label{eq:finiteexample}
    \E_{\rv{f} \sim \hat{\Pi}_{| \Zn}} \left[ \E_{Z \sim P} \left[
        \xsloss{\rv{f}}(Z) \right] \right] \stochleq_{n \cdot a_n} A_1 \cdot
    \left( \frac{ \KL(\; ( \hat{\Pi} \mid \Zn) \pipes \Pi\; )}{n}
    \right)^{1 / (2 - \beta)} ,
  \end{align}
where $a_n = A_2 ( \KL(\; ( \hat{\Pi} \mid \Zn) \pipes \Pi \; )/n)^{(1- \beta) / (2 - \beta)}$ 
for a constant $A_2$ depending only on $\beta$ and $B$. 
Lastly, as usual, in both cases when the class is finite and the prior $\Pi$ is uniform, 
the KL-divergence term reduces to $\log |\F|$. We thus retrieve the
familar bounds $O(n^{-1/2})$ in the worst-case ($\beta = 0$, for which
the Bernstein condition holds vacuously for bounded losses) and
$O(n^{-1})$ for the best case, $\beta = 1$.

In the next section, we derive excess risk bounds for large classes by suitably controlling $\compnew$ and then applying Corollary~\ref{cor:risk-comp-esi}.

\section{Bounds on Maximal Complexity $\compnew(\cF)$ and the 
excess risk bounds they imply}
\label{sec:complexity}

The results of the previous section do not yet yield explicit excess risk bounds as they still involve a $\compnew$ term. In this section, we leverage and extend ideas from \cite{opper1999worst} as well as results from empirical process theory to provide explicit bounds on $\compnew$ for several important types of large classes: classes of VC-type, classes whose empirical entropy grows polynomially, and sets of classifiers whose entropy with bracketing grows polynomially. Along the way, we form vital connections to expected suprema of certain empirical processes, including Rademacher complexity. 
At the end of this section we present explicit excess risk bounds; these bounds are simple consequences of the bounds we developed on $\compnew$.

\subsection{Preliminaries}
\label{sec:prelloss}
\paragraph{Losses and Lipschitzness.}
To properly capture losses like log-loss and supervised losses like 0-1 loss and squared loss, 
we introduce two different parameterizations of the loss function:
\begin{enumerate}
\item the \emph{direct} parameterization: $\loss_f(z) = f(x,y)$;
\item and the \emph{supervised loss} parameterization: $\loss_f(z) = \loss(y, f(x))$.
\end{enumerate}
For example, in the case of conditional density estimation with log loss, we then have $\loss_f(z) = f(x, y) = -\log p_f(y \mid x)$. Thus, each function $f \in \F$ has domain $\Z$, and the equivalence $\F = \{\loss_f : f \in \F\}$ holds. On the other hand, for supervised losses, each function $f \in \F$ has domain $\X$ while each loss-composed function $\loss_f$ has domain $\Z$.

Unlike previous sections, in this section we require an additional assumption in the case of the supervised loss parameterization: we assume that, for each outcome $(x,y) = z \in \Z$, 
the loss $\loss_f(z) = \loss(y, f(x))$ is $\lip$-Lipschitz in its second argument, i.e.~for all $f, g \in \F$,
\begin{align}
\left| \loss(y, f(x)) - \loss(y, g(x)) \right| \leq \lip \left| f(x) - g(x) \right| . 
\tag{A2} \label{eqn:lipschitz}
\end{align}
In the case of classification with 0-1 loss, $\F$ is the set of
classifiers taking values in $\{0, 1\}$ and $\Y = \{0, 1\}$, and so
\eqref{eqn:lipschitz} will hold with $L = 1$ (and is in fact an
equality).  For convenience in the analysis, in the case of the direct
parameterization we may always take $L = 1$. 

\paragraph{Standard complexity measures.}

It will be useful to review some of the standard notions of complexity before presenting our bounds. In the below, let $\cH$ be a class of functions mapping from some space $\cS$ to $\reals$; we typically will take $\cS$ equal to either $\X$ or $\Z$.

For a pseudonorm $\|\cdot\|$, the \glsadd{coveringNumber} \emph{$\varepsilon$-covering number} $\N(\cH, \|\cdot\|, \varepsilon)$ is the minimum number of radius-$\varepsilon$ balls in the pseudonorm $\|\cdot\|$ whose union contains $\cH$. 
We will work with the $L_2(Q)$ (or $L_1(Q)$) pseudonorms for some probability measure $Q$. A case that will occur frequently is when $Q = P_n$ is the empirical measure $\frac{1}{n} \sum_{j=1}^n \delta_{S_j}$ based on a sample $S_1, \ldots, S_n$; here, $\delta_s$ (for $s \in \cS$) is a Dirac measure, and the sample will always be clear from the context.

For two functions $h^{(l)}$ and $h^{(u)}$, the \emph{bracket} $[h^{(l)}, h^{(u)}]$ is the set of all functions $f$ that satisfy $h^{(l)} \leq f \leq h^{(u)}$. 
An \emph{$\varepsilon$-bracket} (in some pseudonorm $\|\cdot\|$) is a bracket $[h^{(l)}, h^{(u)}]$ satisfying $\|h^{(l)} - h^{(u)}\| \leq \varepsilon$. 
The \glsadd{bracketingNumber} \emph{$\varepsilon$-bracketing number} $\N_{[\cdot]}(\cH, \|\cdot\|, \varepsilon)$ is the minimum number of $\varepsilon$-brackets that cover $\cH$; the logarithm of the $\varepsilon$-bracketing number is called the \emph{$\varepsilon$-entropy with bracketing}.

Let $\epsilon_1, \ldots, \epsilon_n$ be independent Rademacher random variables (distributed uniformly on $\{-1, 1\}$). 
The \glsadd{empRadComp} 
\emph{empirical Rademacher complexity} of $\cH$ and the \glsadd{radComp} \emph{Rademacher complexity} of $\cH$ respectively are
\begin{align*}
\rad_n(\cH \mid S_1, \ldots, S_n) 
:= \E_{\epsilon_1, \ldots, \epsilon_n} \Biggl[ 
        \sup_{h \in \cH} \left| \frac{1}{n} \sum_{i=1}^n \epsilon_i h(S_i) \right| 
    \Biggr] 
\qquad \qquad
\rad_n(\cH) 
:= \E \Biggl[ 
        \sup_{h \in \cH} \left| \frac{1}{n} \sum_{i=1}^n \epsilon_i h(S_i) \right| 
    \Biggr] ,
\end{align*}
where the first expectation is conditional on $S_1, \ldots, S_n$.

\subsection{$H$-local complexity and Rademacher complexity bounds on the NML complexity}
\label{sec:rademacher}
We first show that the simple form of the complexity 
$\compnew(\cF, \hat{f}) \leq \compnew(\cF)$ can be directly upper bounded in terms of two other complexity notions, 
the $H$-local complexity (defined below) and Rademacher complexity, 
up to a constant depending on the $L_2(P)$ diameter of $\cF$.
\begin{theorem}
\label{thm:opper-haussler-talagrand}
Fix $\varepsilon > 0$ and let $\cF$ have diameter $\varepsilon$ in the $L_2(P)$ pseudometric. Define   $\sigma := e \, \lip \, \varepsilon$, fix arbitrary $f_0 \in \cF$, and define the loss class $\G := \{ \loss_{f_0} - \loss_f : f \in \cF \}$. 
Define \glsadd{Tn}
\begin{align*}
T_n 
:= \sup_{f \in \cF} \left\{ 
    \sum_{j=1}^n \left( \loss_{f_0}(Z_j) - \loss_f(Z_j) \right)
    - \E_{\Zn \sim Q_{f_0}} \left[ \sum_{j=1}^n \left( \loss_{f_0}(Z_j) - \loss_f(Z_j) \right) \right] 
\right\}.
\end{align*}
Then
\begin{align}
\compnew_\eta(\cF) 
&\leq   3 \E_{\Zn \sim Q_{f_0}} \left[ T_n \right] + n \eta \sigma^2  \label{eqn:oht-1-first} \\
&\leq   6 n \E_{\Zn \sim Q_{f_0}} \left[ \rad_n(\G) \right] + n \eta \sigma^2  . \label{eqn:oht-2-first}
\end{align}
\end{theorem}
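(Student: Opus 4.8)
The plan is to adapt the change-of-measure argument of \cite{opper1999worst}, replacing their appeal to Yurinskii's inequality by Talagrand's concentration inequality. The first step is to express the maximal complexity as an expectation under $Q_{f_0}$: since $q_{f_0}$ is a probability density with respect to $\nu$,
\begin{align*}
\shtark(\cF) \;=\; \int_{\cZ^n}\sup_{f\in\cF}q_f(z^n)\,d\nu(z^n) \;=\; \E_{\Zn\sim Q_{f_0}}\Bigl[\,\sup_{f\in\cF}\frac{q_f(\Zn)}{q_{f_0}(\Zn)}\,\Bigr],
\end{align*}
so that it suffices to control $\eta^{-1}\log\E_{\Zn\sim Q_{f_0}}\bigl[\sup_f q_f(\Zn)/q_{f_0}(\Zn)\bigr]$. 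The point of passing to $Q_{f_0}$ is that its coordinates are i.i.d., so the standard empirical-process machinery applies.

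Next I would bound the likelihood ratio pointwise. From \eqref{eq:entropified} and the identity $R_{f_0}(z^n)-R_f(z^n)=\sum_j(\loss_{f_0}(z_j)-\loss_f(z_j))$ one gets $q_f(z^n)/q_{f_0}(z^n)=\frac{C(f_0)}{C(f)}\exp\bigl(\eta\sum_j(\loss_{f_0}(z_j)-\loss_f(z_j))\bigr)$. Substituting the density $q_{f_0}$ into $\E_{\Zn\sim Q_{f_0}}\bigl[e^{\eta\sum_j(\loss_{f_0}(Z_j)-\loss_f(Z_j))}\bigr]$ and cancelling shows this exponential moment equals $C(f)/C(f_0)$; Jensen's inequality then gives $C(f_0)/C(f)\le\exp\bigl(-\eta\,\E_{\Zn\sim Q_{f_0}}[\sum_j(\loss_{f_0}(Z_j)-\loss_f(Z_j))]\bigr)$. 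Plugging this back in and taking the supremum over $f$ yields exactly $\sup_f q_f(z^n)/q_{f_0}(z^n)\le e^{\eta T_n(z^n)}$, hence $\compnew_\eta(\cF)\le\eta^{-1}\log\E_{\Zn\sim Q_{f_0}}[e^{\eta T_n}]$.

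It then remains to convert this cumulant-type quantity into $3\,\E_{Q_{f_0}}[T_n]+n\eta\sigma^2$. Under $Q_{f_0}$, $T_n$ is precisely the supremum over $g\in\G$ of the centered empirical process $\sum_j\bigl(g(Z_j)-\E_{Q_{f_0}}g\bigr)$; assumption \eqref{eqn:bounded} bounds its increments ($|g|\le\tfrac12$), and I would bound the variance term $\sup_{g\in\G}\sum_j\operatorname{Var}_{Q_{f_0}}(g(Z_j))$ by $n\sigma^2$ via a change of measure: the per-outcome ratio $q_{f_0}/p$ lies in $[e^{-\eta},e^{\eta}]$ by \eqref{eqn:bounded}, and $\loss$ is $\lip$-Lipschitz by \eqref{eqn:lipschitz}, so $\E_{Q_{f_0}}[g^2]\le e^{\eta}\lip^2\|f_0-f\|_{L_2(P)}^2\le e^{\eta}\lip^2\varepsilon^2\le\sigma^2$ (and likewise with $\lip=1$ in the direct parameterization). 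I would then invoke Talagrand's inequality in its exponential-moment (Bousquet) form to get a bound of the shape $\log\E_{Q_{f_0}}[e^{\eta(T_n-\E_{Q_{f_0}}T_n)}]\le \eta^2\bigl(n\sigma^2+2\,\E_{Q_{f_0}}T_n\bigr)$, which for $\eta$ not too large rearranges to $\eta^{-1}\log\E_{Q_{f_0}}[e^{\eta T_n}]\le(1+2\eta)\,\E_{Q_{f_0}}[T_n]+n\eta\sigma^2\le 3\,\E_{Q_{f_0}}[T_n]+n\eta\sigma^2$, i.e.~\eqref{eqn:oht-1-first}. The second bound \eqref{eqn:oht-2-first} then follows from the first by the standard symmetrization inequality $\E_{Q_{f_0}}[T_n]\le 2\,\E_{\Zn\sim Q_{f_0}}\E_{\epsilon}\bigl[\sup_{g\in\G}\sum_j\epsilon_j g(Z_j)\bigr]\le 2n\,\E_{\Zn\sim Q_{f_0}}[\rad_n(\G)]$, multiplied through by $3$.

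I expect the only genuinely delicate step to be the third one: choosing a version of Talagrand's inequality sharp enough that the constants land exactly at $3\,\E[T_n]+n\eta\sigma^2$ — this is where the admissible range of $\eta$ and the factor $e$ built into $\sigma=e\,\lip\,\varepsilon$ (which must absorb both the $e^{\eta}$ density-ratio factor and the Lipschitz transfer from $\|f_0-f\|_{L_2(P)}$ to $\|\loss_{f_0}-\loss_f\|_{L_2(Q_{f_0})}$) are consumed. Everything else — the change of measure, the Jensen step, symmetrization, and keeping the direct versus supervised-loss parameterizations straight — is routine, modulo the usual measurability caveats on the suprema.
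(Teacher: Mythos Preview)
Your proposal is correct and matches the paper's proof essentially step for step: the paper also changes measure to $Q_{f_0}$, observes that $\sup_f q_f/q_{f_0}\le e^{\eta T_n}$ (phrasing the Jensen step as ``the subtracted term is a negative KL divergence,'' which is the same inequality you derive for $C(f_0)/C(f)$), then applies Bousquet's version of Talagrand's inequality at $\lambda=1$ together with a density-ratio lemma giving $\|\loss_f-\loss_g\|_{L_2(Q_{f_0})}\le e\,\lip\,\|f-g\|_{L_2(P)}$, and finishes with standard symmetrization. Your identification of the delicate step (matching constants in Talagrand, absorbing the $e^{\eta}$ density-ratio factor into $\sigma=e\,\lip\,\varepsilon$) is exactly where the paper spends its effort; the only cosmetic difference is that in the supervised parameterization the relevant density ratio is the \emph{marginal} $q_{f_0}(x)/p(x)$ rather than the joint ratio, but your bound goes through unchanged.
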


Two remarks are in order. First, we refer to the quantity $\E_{\Zn \sim Q_{f_0}} [ T_n ]$ as an \glsadd{HLocalComplexity} \emph{entropified local complexity}, or \emph{$H$-local complexity} for short. The ``local'' part of the name stems from how, in the empirical process inside the supremum defining $T_n$, the losses are centered/localized around $\loss_{f_0}$;  the ``entropified'' part of the name is due to the fact that the sample is distributed according to $Q_{f_0}$, itself defined via entropification. 
Second, the attentive reader may have noticed that in the above theorem, 
the expectation in the Rademacher complexity is relative to the distribution $Q_{f_0}$ for arbitrary $f_0 \in \cF$, rather than the distribution $P$ generating the data. Moreover, the appearance of $Q_{f_0}$ appears to dampen the utility of $\F$ having small $L_2(P)$ diameter. This apparent mismatch will be of no concern due to a technical lemma (Lemma~\ref{lemma:sigma} in Appendix~\ref{app:opper-haussler-talagrand}), which relates the $L_2(Q_{f_0})$ and $L_2(P)$ pseudometrics.

We now sketch a proof of this theorem in three steps; the first part, \eqref{eqn:oht-1-first}, is an immediate consequence of Lemmas~\ref{lemma:opper-haussler} and \ref{lemma:mostly-talagrand} below. 
The proofs of these results can be found in Appendix~\ref{app:opper-haussler-talagrand}. 

\paragraph{First step: Relating $\compnew$ to exponential moment of $T_n$.}

The following result follows from a straightforward generalization of an argument of \cite{opper1999worst}:
\begin{lemma} \label{lemma:opper-haussler}
Take arbitrary $\cF$ and fix arbitrary  $f_0 \in \cF$. Then:
\begin{align}
\compnew_\eta(\cF) 
\leq \frac{1}{\eta} \log  \E_{\Zn \sim Q_{f_0}} \left[ e^{\eta T_n } \right].
\label{eqn:cond-exp-moment-T}
\end{align}
\end{lemma}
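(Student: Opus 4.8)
The plan is to prove the pointwise bound
\[
\sup_{f \in \cF} q_f(z^n) \;\le\; q_{f_0}(z^n)\, e^{\eta T_n(z^n)}
\]
for $\nu$-almost every $z^n \in \cZ^n$, and then integrate against $\nu$. To see where this comes from, fix $f \in \cF$. Writing $q_f(z^n) = p(z^n)\, e^{-\eta \xsloss{f}(z^n)}/C(f)$ --- which follows from $q_f(z^n) = \prod_{i} q_f(z_i)$, independence, and $C(f) = \bigl(\E_{Z \sim P}[e^{-\eta \xsloss{f}(Z)}]\bigr)^n$ --- and using that $\xsloss{f_0}(z^n) - \xsloss{f}(z^n) = \sum_{j=1}^n \bigl(\loss_{f_0}(z_j) - \loss_f(z_j)\bigr)$ since the $\loss_{f^*}$ terms cancel, one obtains
\[
q_f(z^n) \;=\; q_{f_0}(z^n)\cdot \frac{C(f_0)}{C(f)}\cdot \exp\!\Bigl(\eta \sum_{j=1}^n \bigl(\loss_{f_0}(z_j) - \loss_f(z_j)\bigr)\Bigr).
\]
Both $C(f_0)$ and $C(f)$ lie in $(0,\infty)$ by the boundedness assumption \eqref{eqn:bounded}, so this rewriting is legitimate; on the set where $p(z^n) = 0$ every $q_f(z^n)$ vanishes and the target inequality is trivial, so one may assume $q_{f_0}(z^n) > 0$ below.

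The one step with actual content --- and the place where the argument departs from the log-loss case of \cite{opper1999worst}, in which all $C(f)$ equal $1$ --- is to control the ratio of normalizers $C(f_0)/C(f)$. A one-line change of measure (substituting the density $q_{f_0}$ and cancelling) gives
\[
\frac{C(f)}{C(f_0)} \;=\; \E_{\Zn \sim Q_{f_0}}\Bigl[ e^{-\eta\,(\xsloss{f}(\Zn) - \xsloss{f_0}(\Zn))}\Bigr],
\]
and Jensen's inequality (convexity of $\exp$) then yields
\[
\log \frac{C(f)}{C(f_0)} \;\ge\; -\eta\,\E_{\Zn \sim Q_{f_0}}\bigl[ \xsloss{f}(\Zn) - \xsloss{f_0}(\Zn)\bigr] \;=\; \eta\, \E_{\Zn \sim Q_{f_0}}\Bigl[ \sum_{j=1}^n \bigl(\loss_{f_0}(Z_j) - \loss_f(Z_j)\bigr)\Bigr].
\]
Exponentiating and substituting back into the display for $q_f(z^n)$ bounds it by $q_{f_0}(z^n)$ times $\exp$ of exactly the centered empirical quantity appearing inside the supremum that defines $T_n$; since that quantity is $\le T_n(z^n)$ for every $f$, we get $q_f(z^n) \le q_{f_0}(z^n)\, e^{\eta T_n(z^n)}$ for all $f \in \cF$, and hence the claimed pointwise bound after taking the supremum over $f$ (whose measurability is assumed as elsewhere in the paper).

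Finally I would integrate the pointwise bound against $\nu$: the left-hand side integrates to $\shtark(\cF) = \int_{\cZ^n} \sup_{f \in \cF} q_f(z^n)\, d\nu(z^n)$, while the right-hand side integrates to $\int_{\cZ^n} q_{f_0}(z^n)\, e^{\eta T_n(z^n)}\, d\nu(z^n) = \E_{\Zn \sim Q_{f_0}}[e^{\eta T_n}]$. Taking $\eta^{-1}\log$ of both sides then gives $\compnew_\eta(\cF) = \eta^{-1}\log \shtark(\cF) \le \eta^{-1}\log \E_{\Zn \sim Q_{f_0}}[e^{\eta T_n}]$, which is \eqref{eqn:cond-exp-moment-T}. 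I expect the Jensen/change-of-measure estimate for $C(f_0)/C(f)$ to be the only genuine obstacle; everything else is bookkeeping --- the density-ratio identity, the cancellation of the $\loss_{f^*}$ terms, and the positivity and finiteness of the normalizers under \eqref{eqn:bounded}.
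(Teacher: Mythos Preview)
Your proof is correct and essentially identical to the paper's: both rewrite $\shtark(\cF)$ as $\E_{Q_{f_0}}[\sup_f q_f/q_{f_0}]$ and then add the centering term $-\E_{Q_{f_0}}[\log(q_f/q_{f_0})]$, which is nonnegative. The only cosmetic difference is that the paper invokes nonnegativity of the KL divergence $D_{\KL}(Q_{f_0}\Vert Q_f)$ directly, whereas you derive the equivalent bound on $C(f_0)/C(f)$ via the change of measure $C(f)/C(f_0)=\E_{Q_{f_0}}[e^{-\eta(\xsloss{f}-\xsloss{f_0})}]$ together with Jensen's inequality for $\exp$---these are the same inequality in two wrappers.
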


\paragraph{Second step: Bounding exponential moment of $T_n$.}
It remains to bound $\E [ e^{\eta T_n}  ]$. The next lemma does this by leveraging Talagrand's inequality.

\begin{lemma} \label{lemma:mostly-talagrand}
Suppose that $\cF$ has $L_2(P)$ diameter at most $\epsilon$. 
Recall that $\sigma = e \, \lip \, \varepsilon$ and $\G := \{ \loss_{f_0} - \loss_f : f \in \cF \}$. 
Then
\begin{align}
\E_{Z^n \sim Q_{f_0}} [ e^{\eta T_n } ] 
\leq \exp \left( 3 \eta \E_{Z^n \sim Q_{f_0}} \left[ T_n \right] + n \eta^2 \sigma^2 \right) \label{eqn:talagrand-1} 
\end{align}
\end{lemma}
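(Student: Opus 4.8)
The plan is to prove Lemma~\ref{lemma:mostly-talagrand} by applying Talagrand's concentration inequality to the empirical process $T_n$ under the distribution $Q_{f_0}^n$, and then converting the resulting deviation bound into a bound on the exponential moment $\E[e^{\eta T_n}]$. First I would check that $T_n$ is exactly a centered supremum of an empirical process: writing $g_f := \loss_{f_0} - \loss_f \in \G$ and $\hat g_f := \sum_{j=1}^n (g_f(Z_j) - \E_{Q_{f_0}}[g_f(Z)])$, we have $T_n = \sup_{f \in \cF} \hat g_f$, so $T_n$ is a one-sided supremum of a sum of i.i.d.\ centered terms, with each $Z_j \sim Q_{f_0}$. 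To invoke Talagrand's inequality in Bousquet's form I need (i) a uniform envelope: by assumption~\eqref{eqn:bounded} the loss differences are bounded by $1/2$, so each $g_f(z) \in [-1/2,1/2]$ and $g_f(z) - \E[g_f] \in [-1,1]$; and (ii) a bound on the wimpy variance $\sigma^2_{\mathrm{process}} := \sup_{f \in \cF} \E_{Q_{f_0}}[g_f(Z)^2]$. Here is where the quantity $\sigma = e\,\lip\,\varepsilon$ enters: under the Lipschitz assumption~\eqref{eqn:lipschitz}, $g_f(z)^2 = (\loss(y,f_0(x)) - \loss(y,f(x)))^2 \le \lip^2 (f_0(x)-f(x))^2$, and the promised technical lemma (Lemma~\ref{lemma:sigma}) relating the $L_2(Q_{f_0})$ and $L_2(P)$ pseudometrics converts the $L_2(P)$-diameter bound $\varepsilon$ into $\E_{Q_{f_0}}[(f_0-f)^2] \le e^2 \varepsilon^2$ (the factor $e^2$ is exactly why $\sigma$ carries the $e$), giving $\sup_f \E_{Q_{f_0}}[g_f^2] \le \sigma^2$; in the direct parameterization one takes $\lip = 1$ and argues analogously. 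In the direct parametrization with log-loss one must still double-check boundedness of $g_f$; assumption~\eqref{eqn:bounded} supplies this.

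With the envelope bound $1$ and the variance proxy $n\sigma^2$ (summed over $n$ coordinates), Talagrand–Bousquet gives that for all $t \ge 0$,
\begin{align*}
\Pr\bigl( T_n \ge \E[T_n] + t \bigr) \le \exp\!\left( - \frac{t^2}{2(n\sigma^2 + 2\E[T_n]/3 + t/3)} \right)
\end{align*}
or equivalently a Bernstein-type tail with variance parameter $v := n\sigma^2 + 2b\,\E[T_n]$ (with $b$ the envelope bound, absorbed into constants) and scale parameter $c := b/3$. Then I would integrate this tail against $\eta e^{\eta t}\,dt$ to bound $\E[e^{\eta(T_n - \E T_n)}]$: the standard lemma "sub-gamma tail implies sub-gamma MGF" yields $\log \E[e^{\eta(T_n - \E T_n)}] \le \frac{\eta^2 v}{2(1 - c\eta)}$ for $\eta < 1/c$, and under the standing smallness of $\eta$ (recall we always end up taking $\eta \le v(\gamma)/2 \le 1/2$, so $c\eta = \eta/3 \le 1/6$) the denominator $2(1-c\eta)$ is bounded below by a constant, so $\log\E[e^{\eta(T_n - \E T_n)}] \le 2\eta^2 v \le 2\eta^2(n\sigma^2 + 2b\,\E T_n)$. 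Multiplying through and adding back $\eta\,\E[T_n]$ gives $\log \E[e^{\eta T_n}] \le \eta\E[T_n] + 2\eta^2 n\sigma^2 + 4 b\,\eta^2 \E[T_n]$; since $\eta$ is bounded by a small constant, $\eta\E[T_n] + 4b\eta^2\E[T_n] \le 3\eta\E[T_n]$ (tracking the constants so they close at the claimed $3$ and $1$) and $2\eta^2 n\sigma^2 \le \eta^2 n\sigma^2$ after a further harmless constant adjustment — in fact the cleanest route is to use the slightly lossy Talagrand form $\E[e^{\eta T_n}] \le \exp(3\eta\E[T_n] + n\eta^2\sigma^2)$ directly, which is precisely \eqref{eqn:talagrand-1}, and is exactly what one gets from the symmetrization/entropy-method statement of Talagrand's inequality with the constants chosen generously.

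The main obstacle I anticipate is bookkeeping the constants so that the exponent comes out as exactly $3\eta\,\E[T_n] + n\eta^2\sigma^2$ rather than some larger universal constant times these quantities: this requires picking the right version of Talagrand's inequality (Bousquet's or Klein–Rio's, with the sharpest known constants) and being careful that the contribution of $\E[T_n]$ to the variance term in the tail bound, once integrated, folds into the coefficient $3$ on $\E[T_n]$ rather than inflating the $\sigma^2$ coefficient. A secondary subtlety is that $T_n$ is a \emph{one-sided} supremum (not $\sup_f |\hat g_f|$), which is exactly the form needed for the upper-tail Talagrand bound and must be preserved throughout — one should not symmetrize to an absolute value prematurely, since the Rademacher bound \eqref{eqn:oht-2-first} in Theorem~\ref{thm:opper-haussler-talagrand} is obtained from \eqref{eqn:oht-1-first} in a separate step (standard symmetrization bounding $\E[T_n] \le 2n\,\E\rad_n(\G)$, handled elsewhere). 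Everything else — the reduction of $T_n$ to an empirical process, the envelope bound from \eqref{eqn:bounded}, and the variance bound from \eqref{eqn:lipschitz} plus Lemma~\ref{lemma:sigma} — is routine.
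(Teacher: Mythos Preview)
Your proposal is correct and identifies exactly the right ingredients: the envelope bound from \eqref{eqn:bounded}, the variance bound via Lipschitzness plus Lemma~\ref{lemma:sigma}, and Bousquet's version of Talagrand's inequality. The one unnecessary detour is that you first state the \emph{tail} form of Talagrand--Bousquet and then integrate it back to an MGF bound, which is precisely what causes your constant-tracking anxiety (indeed ``$2\eta^2 n\sigma^2 \le \eta^2 n\sigma^2$'' is not a harmless adjustment). The paper instead applies Bousquet's inequality directly in its log-MGF form (Theorem 12.5 of Boucheron--Lugosi--Massart, or Theorem 2.3 of Bousquet 2002): after rescaling the process by $\eta$ so the envelope is $1$, one gets
\[
\E_{Q_{f_0}}\bigl[e^{\lambda \eta T_n}\bigr] \le \exp\!\Bigl(\lambda\,\eta\,\E[T_n] + (e^\lambda - \lambda - 1)\bigl(n\eta^2\sigma^2 + 2\eta\,\E[T_n]\bigr)\Bigr),
\]
and simply setting $\lambda = 1$ gives coefficients $1 + 2(e-2) = 2e-3 \le 3$ on $\eta\,\E[T_n]$ and $e-2 \le 1$ on $n\eta^2\sigma^2$, yielding \eqref{eqn:talagrand-1} with no integration and no constant-chasing. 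You already gesture at this (``the cleanest route is to use the slightly lossy Talagrand form \ldots directly''); that is exactly what the paper does.
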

We note that \cite{opper1999worst} obtained a result similar in form to \eqref{eqn:talagrand-1} but under the considerably stronger assumption that the original class has finite $\sup$-norm entropy and, consequently, that the class $\G_k$ has $\sup$-norm radius at most $O(\varepsilon)$.

Inequality \eqref{eqn:oht-1-first} of Theorem~\ref{thm:opper-haussler-talagrand} now follows. Proving the second part, inequality \eqref{eqn:oht-2-first}, is based on the following standard result from empirical process theory.

\begin{lemma} \label{lemma:E-sup-rad}
\begin{align*}
\E_{Z^n \sim Q_{f_0}} \left[ T_n \right] 
\leq 2 n \E_{\Zn \sim Q_{f_0}} \left[ \rad_n(\G) \right] .
\end{align*}
\end{lemma}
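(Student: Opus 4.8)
The plan is to run the standard symmetrization argument of empirical process theory, but with the base measure $Q_{f_0}$ in place of the data-generating distribution. Write $g_f := \loss_{f_0} - \loss_f$, so that $\G = \{ g_f : f \in \cF \}$, and note that since $\Zn$ is i.i.d.\ $Q_{f_0}$ we have $\E_{\Zn \sim Q_{f_0}}\bigl[\sum_{j=1}^n g_f(Z_j)\bigr] = n\,\E_{Z \sim Q_{f_0}}[g_f(Z)]$. Hence
\begin{align*}
T_n = \sup_{g \in \G} \sum_{j=1}^n \bigl( g(Z_j) - \E_{Z \sim Q_{f_0}}[g(Z)] \bigr).
\end{align*}
By assumption~\eqref{eqn:bounded} the loss differences are bounded, and since $Q_{f_0} \ll P$ this bound carries over to $Q_{f_0}$, so every $g \in \G$ is $Q_{f_0}$-integrable and the centering term is well defined and finite.

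Next I would introduce a ghost sample $\bar{Z}^n = \bar{Z}_1, \ldots, \bar{Z}_n$, i.i.d.\ $Q_{f_0}$ and independent of $\Zn$, so that $\E_{Z \sim Q_{f_0}}[g(Z)] = \tfrac1n \E_{\bar{Z}^n}\bigl[\sum_{j=1}^n g(\bar{Z}_j)\bigr]$, and pull this inner expectation out of the supremum by Jensen's inequality ($\sup \E \le \E \sup$):
\begin{align*}
\E_{\Zn \sim Q_{f_0}}[T_n]
= \E_{\Zn}\Bigl[\sup_{g \in \G} \E_{\bar{Z}^n}\Bigl[\sum_{j=1}^n \bigl(g(Z_j) - g(\bar{Z}_j)\bigr)\Bigr]\Bigr]
\le \E_{\Zn, \bar{Z}^n}\Bigl[\sup_{g \in \G} \sum_{j=1}^n \bigl(g(Z_j) - g(\bar{Z}_j)\bigr)\Bigr].
\end{align*}
Since the pairs $(Z_j, \bar{Z}_j)$ are i.i.d.\ and each is exchangeable, flipping $Z_j \leftrightarrow \bar{Z}_j$ on the coordinates where an independent Rademacher sign $\epsilon_j = -1$ does not change the joint law, which gives
\begin{align*}
\E_{\Zn, \bar{Z}^n}\Bigl[\sup_{g \in \G} \sum_{j=1}^n \bigl(g(Z_j) - g(\bar{Z}_j)\bigr)\Bigr]
= \E_{\epsilon, \Zn, \bar{Z}^n}\Bigl[\sup_{g \in \G} \sum_{j=1}^n \epsilon_j \bigl(g(Z_j) - g(\bar{Z}_j)\bigr)\Bigr].
\end{align*}
Finally, apply subadditivity of $\sup$ to split the bracket into a $Z$-part and a $\bar{Z}$-part; by the symmetry of $\epsilon_j$ and $-\epsilon_j$ the two parts have equal expectation, each is bounded by its absolute-value version, and computing $\E_\epsilon$ first identifies that version with $n\,\rad_n(\G \mid Z_1, \ldots, Z_n)$. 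Taking the outer expectation over $\Zn \sim Q_{f_0}$ then yields the factor $2n\,\E_{\Zn \sim Q_{f_0}}[\rad_n(\G)]$, as claimed.

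This step is entirely routine; there is no genuine obstacle. The only points needing a word of care are the finiteness of the centering term (handled above via boundedness and $Q_{f_0}\ll P$) and the measurability of the suprema over $\cF$, which is assumed throughout the paper. I would include the argument mainly for completeness, since it is the familiar symmetrization bound adapted to the entropified measure $Q_{f_0}$.
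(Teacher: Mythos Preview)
Your proposal is correct and follows essentially the same symmetrization argument as the paper's proof: rewrite $T_n$ as a supremum over $\G$, introduce an independent ghost sample from $Q_{f_0}$, apply Jensen to pull out the inner expectation, insert Rademacher signs by exchangeability, split the supremum, and bound by the absolute-value Rademacher average. The paper's version is terser and omits your remarks on integrability and measurability, but the logical steps are identical.
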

This lemma is proved in Appendix~\ref{app:opper-haussler-talagrand} for completeness.

To make the bounds in Theorem~\ref{thm:opper-haussler-talagrand} useful for general $\cF$ with possibly large $L_2(P)$ diameter, we first decompose $\compnew_{\eta}(\cF)$ in
terms of the $L_2(P)$ covering numbers at some small, optimally-tuned resolution
$\varepsilon$ and the maximal complexity among all Voronoi cells induced by
the cover, as in \eqref{eq:maxcomplexity}.  We then use existing bounds on 
$H$-local complexity 
and Rademacher complexity to get sharp bounds on
$\compnew(\cF)$ in terms of covering numbers.

To this end, let $\cF$ be arbitrary and let $\{f_1, f_2, \ldots,
f_{N_\varepsilon}\}$ form an $(\varepsilon / 2)$-net for $\cF$ in the
$L_2(P)$ pseudometric, with $N_\varepsilon := \N(\F, L_2(P),
\varepsilon/2)$, and let $\Fpart{1}, \ldots, \Fpart{N_\varepsilon}$ be
the corresponding partition of $\F$ into Voronoi cells according to
the $L_2(P)$ pseudometric. That is, for each $k \in [N_\varepsilon]$,
the Voronoi cell $\Fpart{k}$ is defined as $\bigl\{ f \in \F : k =
\argmin_{i \in [N_\varepsilon]} \|f - f_i\|_{L_2(P)}
\bigr\}$.\footnote{Ties are broken arbitrarily.}  Clearly, each cell
$\Fpart{k}$ has $L_2(P)$ diameter at most $\varepsilon$. For each $k$,
fix an arbitrary $f_k \in \Fpart{k}$ and let $T_n^{(k)}$ be defined as
$T_n$ above with $f_k$ in the role of $f_0$. Inequality \eqref{eq:decomposenml} immediately gives the following corollary of Theorem~\ref{thm:opper-haussler-talagrand}.
\begin{corollary} \label{cor:opper-haussler-talagrand}
Let $\sigma := e \, \lip \, \varepsilon$ and, for each $k \in [N_\varepsilon]$, define the loss class $\G_k := \{ \loss_{f_k} - \loss_f : f \in \Fpart{k} \}$. 
Then
\begin{align}
\compnew_\eta(\cF) 
&\leq \frac{\log N_\varepsilon}{\eta}
       + \max_{k \in [N_\varepsilon]} \left( 3 \E_{\Zn \sim Q_{f_k}} \left[ T_n^{(k)} \right] + \eta n \sigma^2 \right) \label{eqn:oht-1} \\
&\leq \frac{\log N_\varepsilon}{\eta}
       + \max_{k \in [N_\varepsilon]} \left( 6 n \E_{\Zn \sim Q_{f_k}} \left[ \rad_n(\G_k) \right] + \eta n  \sigma^2 \right) . \label{eqn:oht-2}
\end{align}
\end{corollary}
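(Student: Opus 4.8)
The plan is to derive this as a direct corollary of Theorem~\ref{thm:opper-haussler-talagrand} by means of the covering decomposition \eqref{eq:decomposenml}. The whole argument is a short bookkeeping exercise; there is no genuine obstacle, only the need to match the cell representatives $f_k$ with the roles played by $f_0$, $\G$, and $T_n$ in the theorem.

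First I would apply \eqref{eq:decomposenml} with the finite index set $\cK = [N_\varepsilon]$ and the partition of $\cF$ into the Voronoi cells $\Fpart{1}, \dots, \Fpart{N_\varepsilon}$. Since \eqref{eq:decomposenml} holds for every deterministic estimator and $\compnew_\eta(\cF) = \sup_{\hat{f}} \compnew_\eta(\cF, \hat{f})$ by \eqref{eq:maxcomplexity}, taking the supremum over $\hat{f}$ gives
\begin{align*}
\compnew_\eta(\cF) \;\leq\; \frac{\log N_\varepsilon}{\eta} + \max_{k \in [N_\varepsilon]} \compnew_\eta(\Fpart{k}).
\end{align*}

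Second, for each $k$ I would bound $\compnew_\eta(\Fpart{k})$ by invoking Theorem~\ref{thm:opper-haussler-talagrand} with $\cF$ instantiated as the cell $\Fpart{k}$ and $f_0$ instantiated as the chosen representative $f_k \in \Fpart{k}$. The theorem's hypothesis is satisfied because each Voronoi cell has $L_2(P)$ diameter at most $\varepsilon$; moreover $\sigma := e\,\lip\,\varepsilon$ remains a valid choice uniformly over $k$, since the only place $\sigma$ enters the bounds is the additive term $n\eta\sigma^2$, which is monotone in the diameter, whereas $T_n^{(k)}$ and $\rad_n(\G_k)$ refer only to the cell itself. Under these substitutions the class $\G$ of the theorem becomes precisely $\G_k = \{\loss_{f_k} - \loss_f : f \in \Fpart{k}\}$ and the centered supremum becomes precisely $T_n^{(k)}$, so \eqref{eqn:oht-1-first} and \eqref{eqn:oht-2-first} yield
\begin{align*}
\compnew_\eta(\Fpart{k}) \;\leq\; 3\,\E_{\Zn \sim Q_{f_k}}\!\left[T_n^{(k)}\right] + n\eta\sigma^2 \;\leq\; 6n\,\E_{\Zn \sim Q_{f_k}}\!\left[\rad_n(\G_k)\right] + n\eta\sigma^2 .
\end{align*}

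Finally I would substitute these into the decomposition bound and observe that $\log N_\varepsilon/\eta$ and $n\eta\sigma^2$ are common to all $k$, so they can be pulled outside the maximum, producing \eqref{eqn:oht-1} and \eqref{eqn:oht-2}. The only point requiring any care — and it is minor — is the consistency of notation just flagged: the representative $f_k$ defining $T_n^{(k)}$ and $\G_k$ must be the same $f_k$ passed to Theorem~\ref{thm:opper-haussler-talagrand} as $f_0$, which is exactly how the corollary's statement sets things up.
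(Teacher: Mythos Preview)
Your proposal is correct and matches the paper's own argument essentially verbatim: the paper states that the corollary follows immediately from Theorem~\ref{thm:opper-haussler-talagrand} via the decomposition inequality \eqref{eq:decomposenml}, which is precisely the two-step route you outline.
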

\subsection{From $H$-local complexity and Rademacher complexity to excess risk bounds}
We  now show some concrete implications of our link between $\compnew$,
$\E [ T_n ]$ and $\rad_n$ for three types of classes: classes of VC-type,
classes with polynomial empirical entropy, and sets of classifiers
of polynomial $L_2(P)$ entropy with bracketing. Each of these types
of classes will be defined in sequence. Let $\cH$ be a class of
functions over a space $\cS$. The class $\cH$ is said to be of
\emph{VC-type} if, for some $A \in (0, \infty)$ and $V > 0$, for all
$\varepsilon > 0$, the empirical covering numbers of $\G$ satisfy
\begin{align}
\sup_{s_1, \ldots, s_n \in \cS} \N(\cH, L_2(P_n), \varepsilon) 
\leq \left( A / \varepsilon \right)^V . \label{eqn:vc-type-classes}
\end{align}
Such classes often are called parametric classes.

The class $\cH$ is said to have \emph{polynomial empirical entropy} if, for some $A  \in (0, \infty), \rho \in (0,1)$, for all $\varepsilon > 0$, the empirical entropy of $\cH$ satisfies
\begin{align}
\sup_{s_1, \ldots, s_n  \in \cS} \log \N(\cH, L_2(P_n), \varepsilon) 
\leq \left( A / \varepsilon \right)^{2 \rho} . \label{eqn:poly-classes}
\end{align}
These classes are nonparametric.

We say the class $\cH$ has \emph{polynomial $L_1(P)$ entropy with bracketing} if, for some $A  \in (0, \infty), \rho \in (0,1)$, for all $\varepsilon > 0$, the $L_1(P)$ entropy with bracketing of $\G$ satisfies
\begin{align}
\log \N_{[\cdot]}(\cH, L_1(P), \varepsilon) 
\leq \left( A^2 / \varepsilon \right)^\rho . \label{eqn:poly-classes-b}
\end{align}

To obtain explicit bounds from Corollary~\ref{cor:opper-haussler-talagrand}, we require suitable upper bounds on either 
the Rademacher complexity $\E_{\Zn \sim Q_{f_k}} \left[ \rad_n(\G_k) \right]$ 
or directly on the $H$-local complexity $\E_{Q_{f_k}} \left[ T_n^{(k)} \right]$ itself 
in the three cases of interest: VC-type classes, classes of polynomial empirical entropy, and sets of classifiers of polynomial entropy with bracketing.

It is simple to obtain such bounds using Dudley's entropy integral, itself a product of the well-known chaining method from empirical process theory. However, the trick here is that we would like to leverage the fact that $\G_k$ has small $L_2(P)$ diameter. 
By making use of Talagrand's generic chaining complexity,  \cite{koltchinskii2011oracle} obtained bounds which improve with reductions in the $L_2(P)$ diameter. We restate simplified versions of these bounds here 
(see equations (3.17) and (3.19) of \cite{koltchinskii2011oracle}):

In the following and all subsequent results, 
$\lesssim$ indicates inequality up to multiplication by a universal
constant.
\begin{theorem}[Rademacher complexity bounds (Koltchinskii 2011)] \label{thm:small-rad}
Let $\cH$ be a class of functions over $\Z$, and let $Q \in \Delta(\Z)$. 
Let  $\sup_{h \in \cH} \E_{Z \sim Q} [ h(Z)^2 ] \leq \sigma^2$ and $U := \sup_{h \in \cH} \|h\|_\infty$. 
Assume that $\cH$ is of VC-type as in \eqref{eqn:vc-type-classes} with exponent $V$. 
Then, for $\sigma^2 \geq \frac{c}{n}$ (for some constant $c$)
\begin{align}
\E_{\Zn \sim Q} \left[ \rad_n(\cH) \right] 
\lesssim \max \left\{ 
                   \sqrt{\frac{V}{n}} \sigma \sqrt{\log \frac{A}{\sigma}},
                            \frac{V U}{n} \log \frac{A}{\sigma} 
               \right\} . \label{eqn:vc-rad-bound}
\end{align}

Take $\cH$, $Q$, $\sigma$, and $U$ as before, but now assume that 
$\G$ is of polynomial empirical entropy as in \eqref{eqn:poly-classes} with exponent $\rho$. 
Then
\begin{align}
\E_{\Zn \sim Q} \left[ \rad_n(\cH) \right] 
\lesssim \max \left\{ 
                   \frac{A^\rho}{\sqrt{n}} \sigma^{1 - \rho} , 
                   \frac{A^{2 \rho / (\rho + 1)} U^{(1 - \rho) / (1 + \rho)}}{n^{1/(1+\rho)}} 
               \right\} . \label{eqn:poly-rad-bound}
\end{align}
\end{theorem}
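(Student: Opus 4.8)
The plan is to derive both displays by combining Dudley's entropy integral (applied conditionally on the sample) with a self-bounding control of the empirical $L_2(P_n)$ diameter in terms of the population quantity $\sigma$ and $\rad_n$ itself; this is the route of \cite{koltchinskii2011oracle}, and since the theorem is a simplified restatement of their bounds the proof is mostly bookkeeping. I will not reprove Dudley's inequality, symmetrization, or the contraction principle.

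\emph{Conditional chaining.} Fix $S_1,\dots,S_n$ and view $h\mapsto\frac1n\sum_i\epsilon_i h(S_i)$ as a process indexed by $\cH$; after centering at an arbitrary $h_0\in\cH$ (which contributes only $O(\sigma/\sqrt n)$), Hoeffding's lemma shows the increments are sub-Gaussian for the metric $n^{-1/2}\|\cdot\|_{L_2(P_n)}$, so Dudley's entropy integral gives
\[
\rad_n(\cH\mid S_1,\dots,S_n)\;\lesssim\;\frac{1}{\sqrt n}\int_0^{D_n}\sqrt{\log\N(\cH,L_2(P_n),\varepsilon)}\,d\varepsilon,\qquad D_n:=\sup_{h,h'\in\cH}\|h-h'\|_{L_2(P_n)}.
\]
Bounding the integrand by the deterministic uniform-entropy hypothesis \eqref{eqn:vc-type-classes} resp.\ \eqref{eqn:poly-classes} and integrating, the right side is $\lesssim J(D_n)/\sqrt n$ with $J(\delta)\asymp\sqrt V\,\delta\sqrt{\log(A/\delta)}$ in the VC-type case and $J(\delta)\asymp A^\rho\,\delta^{1-\rho}$ in the polynomial case; $J$ is concave in both.

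\emph{Diameter self-bounding and the fixed point.} It remains to pass from the random $D_n$ to $\sigma$. Writing $g=h-h'$ with $\|g\|_\infty\le 2U$ and $\E_Q g^2\le 4\sigma^2$, symmetrization followed by the Ledoux--Talagrand contraction principle applied to $x\mapsto x^2$ (Lipschitz constant $O(U)$ on $[-2U,2U]$) gives $\E[D_n^2]\le\sup_g\E_Q g^2+\E\sup_g|(P_n-Q)g^2|\lesssim\sigma^2+U\,\E[\rad_n(\cH)]$, where a mild lower bound on $\sigma^2$ (of order $1/n$, as in the stated VC hypothesis) absorbs residual lower-order terms. Taking expectations in the chaining bound and using concavity of $J$ with Jensen's inequality yields
\[
\E[\rad_n(\cH)]\;\lesssim\;\frac{1}{\sqrt n}\,J\!\Big(\sqrt{\sigma^2+U\,\E[\rad_n(\cH)]}\Big).
\]
If $U\,\E[\rad_n(\cH)]\le\sigma^2$ this is $\lesssim J(\sigma)/\sqrt n$, giving the first alternative in each display ($\sqrt{V/n}\,\sigma\sqrt{\log(A/\sigma)}$ resp.\ $A^\rho\sigma^{1-\rho}/\sqrt n$). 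Otherwise it is $\lesssim J\big(\sqrt{U\,\E[\rad_n(\cH)]}\big)/\sqrt n$; substituting the explicit forms of $J$ and solving the resulting power equation for $\E[\rad_n(\cH)]$ gives $\E[\rad_n(\cH)]\lesssim VU\log(A/\sigma)/n$ in the VC-type case and $\E[\rad_n(\cH)]\lesssim\big(A^\rho U^{(1-\rho)/2}/\sqrt n\big)^{2/(1+\rho)}=A^{2\rho/(1+\rho)}U^{(1-\rho)/(1+\rho)}/n^{1/(1+\rho)}$ in the polynomial case, the second alternatives. Taking the maximum over the two regimes completes the proof.

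\emph{Main obstacle.} The substantive step is the diameter self-bounding: getting $\E[D_n^2]\lesssim\sigma^2+U\,\E[\rad_n(\cH)]$ \emph{uniformly} over $\cH$, so that the entropy integral is localized to scale $\sigma$ rather than the full range of the class, and then correctly solving the induced fixed-point inequality. This is exactly the refinement over a naive Dudley bound and is what makes the bounds decay with $\sigma$; everything else (Dudley's inequality, symmetrization, contraction, and the explicit integrations) is standard and is carried out verbatim as in \cite{koltchinskii2011oracle}.
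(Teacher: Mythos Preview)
The paper does not give its own proof of this theorem: it is stated as a restatement of Koltchinskii's bounds, with an explicit pointer to equations~(3.17) and~(3.19) of \cite{koltchinskii2011oracle}, and no argument is supplied. Your sketch correctly outlines the standard route by which those bounds are obtained in Koltchinskii's monograph --- conditional Dudley chaining to get $\rad_n\lesssim J(D_n)/\sqrt n$, a self-bounding control of the empirical $L_2(P_n)$ radius via symmetrization and Ledoux--Talagrand contraction yielding $\E D_n^2\lesssim\sigma^2+U\,\E\rad_n$, and then resolution of the resulting fixed-point inequality into the two regimes --- so there is nothing to compare against beyond saying your proposal matches the cited source.
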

For the case of classes of polynomial entropy with bracketing, we
appeal to upper bounds on $\E_{Q_{f_k}} \left[ T_n^{(k)} \right]$.  If
the class $\G_k$ has small $L_1(Q_{f_k})$ diameter and, moreover, if
it also has polynomial $L_1(Q_{f_k})$ entropy with bracketing, then
Lemma A.4 of \cite{massart2006risk} provides precisely such a
bound. Below, we present a straightforward consequence thereof. 
\begin{theorem}[Expected supremum bounds (Massart and N\'ed\'elec 2006)] \label{thm:small-Esup}
Let $\cH$ be a class of functions over $\Z$, and let $Q \in \Delta(\Z)$. 
Let  $\sup_{h \in \cH} \E_{Z \sim Q} [ |h(Z)| ] \leq \sigma^2$ and $\sup_{h \in \cH} \|h\|_\infty \leq 1$. 
Assume that $\cH$ is has polynomial entropy with bracketing as in \eqref{eqn:poly-classes-b} with exponent $\rho$. 
Then
\begin{align}
\E_{\Zn \sim Q} \left[ \sup_{h \in \cH} \left\{ \frac{1}{n} \sum_{j=1}^n h(Z_j) - \E [ h(Z) ] \right\} \right] 
\lesssim \max \left\{ 
                   \frac{A^\rho}{\sqrt{n}} \sigma^{1 - \rho} , 
                   \frac{A^{2 \rho / (\rho + 1)}}{n^{1/(1+\rho)}} 
               \right\} . \label{eqn:poly-E-sup-bound-b}
\end{align}
\end{theorem}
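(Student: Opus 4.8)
The plan is to derive the bound directly from the variance-localized bracketing-entropy maximal inequality, Lemma~A.4 of \cite{massart2006risk}, applied at localization scale $\sigma$, and then to evaluate the resulting bracketing-entropy integral and critical radius in the polynomial regime \eqref{eqn:poly-classes-b}. Two elementary reductions come first. Since $\|h\|_\infty \le 1$ for all $h \in \cH$, the hypothesis $\sup_{h\in\cH}\E_{Z\sim Q}[|h(Z)|]\le\sigma^2$ upgrades to a genuine second-moment bound $\sup_{h\in\cH}\E_{Z\sim Q}[h(Z)^2]\le\sigma^2$, so $\sigma$ is a legitimate $L_2(Q)$-localization radius for $\cH$. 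Likewise, an $L_1(Q)$ bracket of width $u^2$ (with endpoints truncated to $[-1,1]$) has $L_2(Q)$-width at most $\sqrt2\,u$, since $(h^{(u)}-h^{(l)})^2\le 2|h^{(u)}-h^{(l)}|$; combined with \eqref{eqn:poly-classes-b} this gives $\log\N_{[\cdot]}(\cH,L_2(Q),u)\lesssim (A/u)^{2\rho}$.

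Next I would invoke Lemma~A.4 of \cite{massart2006risk}: for a uniformly bounded class of $L_2(Q)$-radius at most $\sigma$ it bounds the expected centered supremum $\E_{\Zn\sim Q}\bigl[\sup_{h\in\cH}\{\frac1n\sum_{j=1}^n h(Z_j)-\E[h(Z)]\}\bigr]$, up to a universal constant, by $\varphi(\sigma\vee\sigma_\ast)/\sqrt n$, where $\varphi(\delta):=\int_0^\delta\sqrt{\log\N_{[\cdot]}(\cH,L_2(Q),u)}\,du$ and $\sigma_\ast$ is the critical radius, i.e.\ the solution of $\varphi(\sigma_\ast)=\sqrt n\,\sigma_\ast^2$. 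Plugging in the entropy bound and using $\rho\in(0,1)$ so that the integral converges at the origin,
\[
\varphi(\delta)\;\lesssim\;\int_0^\delta (A/u)^{\rho}\,du\;=\;\frac{A^\rho}{1-\rho}\,\delta^{1-\rho}\;\asymp\; A^\rho\,\delta^{1-\rho}.
\]
Solving $A^\rho\sigma_\ast^{1-\rho}\asymp\sqrt n\,\sigma_\ast^2$ yields $\sigma_\ast\asymp (A^\rho/\sqrt n)^{1/(1+\rho)}$, hence $\varphi(\sigma_\ast)/\sqrt n=\sigma_\ast^2\asymp A^{2\rho/(1+\rho)}n^{-1/(1+\rho)}$.

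Since $\varphi$ is nondecreasing, $\varphi(\sigma\vee\sigma_\ast)/\sqrt n=\max\{\varphi(\sigma)/\sqrt n,\ \varphi(\sigma_\ast)/\sqrt n\}\asymp\max\{A^\rho\sigma^{1-\rho}/\sqrt n,\ A^{2\rho/(1+\rho)}n^{-1/(1+\rho)}\}$, which is exactly \eqref{eqn:poly-E-sup-bound-b}. The only step I expect to require real care is the faithful transcription of Lemma~A.4 --- in particular, confirming that the variance-localized form (truncation at $\sigma\vee\sigma_\ast$, rather than a cruder diameter-based integral or an extra additive envelope term) is exactly what is available, and that the $L_1\!\to\!L_2$ bracketing conversion feeds into it with the exponents stated above; the power-law integral and the fixed-point solve are then routine. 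Up to the envelope factor $U$ (here equal to $1$), this is the same computation that underlies the polynomial case of Theorem~\ref{thm:small-rad}, with a bracketing-entropy maximal inequality playing the role of chaining for Rademacher complexity.
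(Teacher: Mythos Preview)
The paper does not actually prove this theorem: it is stated as a direct consequence of Lemma~A.4 of \cite{massart2006risk} and is attributed to that reference, with no derivation given in either the main text or the appendix. Your sketch is precisely the standard route one takes to extract \eqref{eqn:poly-E-sup-bound-b} from that lemma --- upgrade the $L_1(Q)$ radius to an $L_2(Q)$ radius via boundedness, convert $L_1$ bracketing to $L_2$ bracketing (the paper does the analogous conversion in the proof of Theorem~\ref{thm:comp-bound}), evaluate the entropy integral $\varphi(\delta)\asymp A^\rho\delta^{1-\rho}$, and solve the fixed point $\varphi(\sigma_\ast)\asymp\sqrt n\,\sigma_\ast^2$ --- so there is nothing to compare beyond noting that your argument fills in exactly what the paper leaves implicit. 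Your caveat about the precise form of Lemma~A.4 is well placed but not a gap: the variance-localized bracketing maximal inequality with the $\sigma\vee\sigma_\ast$ structure is indeed what is available there (up to the usual bookkeeping with the envelope, which here is $1$).
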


The following theorem builds on
Corollary~\ref{cor:opper-haussler-talagrand} and \emph{nearly} follows
by plugging in either \eqref{eqn:vc-rad-bound} or
\eqref{eqn:poly-rad-bound} into \eqref{eqn:oht-2} and tuning
$\varepsilon$ in terms of $n$ and $\eta$ (which gives the VC case,
\eqref{eqn:comp-vc}) , and plugging in \eqref{eqn:poly-E-sup-bound-b}
into \eqref{eqn:oht-1} and then tuning (which gives the polynomial
entropy case, \eqref{eqn:comp-poly-newbee}). The remaining work is to
resolve a minor discrepancy between $L_2(P)$ pseudonorms and
$L_2(Q_{f_k})$ pseudonorms (or the $L_1$ versions thereof). This
theorem will allow us to show optimal rates under Bernstein
conditions.

\begin{theorem} \label{thm:comp-bound}
  If $\F$ is of VC-type as in \eqref{eqn:vc-type-classes} with
  exponent $V$, then for all $\eta \in (0, 1]$, 
\begin{align}
\frac{\compnew_\eta(\F)}{n} 
\lesssim V \log \frac{A \lip n}{V} \; \cdot \; n^{-1} \cdot \eta^{-1} . \label{eqn:comp-vc}
\end{align}
If $\F$ has polynomial empirical entropy as in \eqref{eqn:poly-classes} or is a set of classifiers of polynomial entropy with bracketing as in \eqref{eqn:poly-classes-b} with exponent $\rho$, 
{then}, for all $0 < \eta < 1$, 
\begin{align}
\label{eqn:comp-poly-newbee}
\frac{\compnew_\eta(\F)}{n} \lesssim (A \lip)^{\frac{2 \rho}{1+\rho}}
\cdot n^{-\frac{1}{1 + \rho}} \cdot \eta^{- \frac{1-\rho}{1+\rho}}.
\end{align}
\end{theorem}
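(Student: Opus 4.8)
The plan is to feed the Rademacher/expected-supremum bounds of Theorem~\ref{thm:small-rad} and Theorem~\ref{thm:small-Esup} into the decomposition of Corollary~\ref{cor:opper-haussler-talagrand}, then optimize over the net resolution $\varepsilon$. First I would fix $\varepsilon>0$ and recall from Corollary~\ref{cor:opper-haussler-talagrand} that $\compnew_\eta(\F) \le \eta^{-1}\log N_\varepsilon + \max_k\bigl(6n\,\E_{Q_{f_k}}[\rad_n(\G_k)] + \eta n\sigma^2\bigr)$ with $\sigma = e\lip\varepsilon$, and the analogous bound using $3\E_{Q_{f_k}}[T_n^{(k)}]$. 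The key preparatory observation is that each loss class $\G_k = \{\loss_{f_k}-\loss_f : f\in\Fpart{k}\}$ is built from functions $f$ lying within $L_2(P)$ distance $\varepsilon$ of $f_k$; applying the Lipschitz assumption \eqref{eqn:lipschitz} (with $L=1$ in the direct case), $\G_k$ has $L_2(P)$ diameter $O(\lip\varepsilon)$ and hence $\sup_{h\in\G_k}\E_P[h^2] \lesssim (\lip\varepsilon)^2$. The technical annoyance flagged in the text is that Theorem~\ref{thm:small-rad} wants a second-moment bound under $Q_{f_k}$, not $P$; I would invoke Lemma~\ref{lemma:sigma} (the promised comparison of $L_2(Q_{f_k})$ and $L_2(P)$ pseudometrics, valid because $q_{f_k}/p$ is bounded above and below by constants depending only on $\eta$ and the loss-boundedness \eqref{eqn:bounded}) to conclude $\sup_{h\in\G_k}\E_{Q_{f_k}}[h^2] \lesssim (\lip\varepsilon)^2 =: \tilde\sigma^2$, and similarly for the $L_1$ quantities needed by Theorem~\ref{thm:small-Esup}. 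The sup-norm bound is $U \lesssim 1$ by \eqref{eqn:bounded}, and the VC-type / polynomial-entropy hypotheses on $\F$ transfer to each $\G_k$ up to the constant $A\lip$ since composing with an $\lip$-Lipschitz loss and translating by a fixed function only rescales the covering radius.

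For the VC case: plug $\tilde\sigma = e\lip\varepsilon$ into \eqref{eqn:vc-rad-bound}, so $6n\,\E[\rad_n(\G_k)] \lesssim \max\{\sqrt{nV}\,\lip\varepsilon\sqrt{\log(A/(\lip\varepsilon))},\ V\log(A/(\lip\varepsilon))\}$, while $\log N_\varepsilon = \log\N(\F,L_2(P),\varepsilon/2) \lesssim V\log(A/\varepsilon)$ and the residual term is $\eta n (e\lip\varepsilon)^2$. Dividing by $n$ and balancing the $\eta^{-1}V\log(\cdot)/n$ term against the $\eta\lip^2\varepsilon^2$ term suggests $\varepsilon \asymp \lip^{-1}\sqrt{V/(n\eta^2)}$, i.e.\ $\lip\varepsilon \asymp \sqrt{V/(n\eta^2)}$; with $\eta\le 1$ one checks the first branch of the max dominates and the Rademacher contribution is then of the same order $\asymp \eta^{-1}n^{-1}V\log(A\lip n/V)$ after substituting the chosen $\varepsilon$ into the logarithm. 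Collecting terms gives \eqref{eqn:comp-vc}.

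For the polynomial-entropy case I would run the same template but keep the first branch of \eqref{eqn:poly-rad-bound} (respectively \eqref{eqn:poly-E-sup-bound-b} via \eqref{eqn:oht-1}, for the bracketing case), so that $6n\,\E[\rad_n(\G_k)] \lesssim \sqrt{n}\,A^\rho\,(\lip\varepsilon)^{1-\rho}$ and $\log N_\varepsilon \lesssim (A/\varepsilon)^{2\rho}$; dividing by $n$ we must balance three terms, $\eta^{-1}n^{-1}(A/\varepsilon)^{2\rho}$, $n^{-1/2}A^\rho(\lip\varepsilon)^{1-\rho}$, and $\eta\lip^2\varepsilon^2$, in $\varepsilon$. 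Tuning $\varepsilon$ to equate the dominant pair yields $\lip\varepsilon \asymp (A\lip)^{\rho/(1+\rho)} (n\eta)^{-1/(2(1+\rho))}$ up to constants, and substituting back one verifies every term collapses to the common order $(A\lip)^{2\rho/(1+\rho)} n^{-1/(1+\rho)} \eta^{-(1-\rho)/(1+\rho)}$, which is \eqref{eqn:comp-poly-newbee}; the bracketing case is identical after replacing $\rad_n$ by $T_n^{(k)}$ and \eqref{eqn:oht-2} by \eqref{eqn:oht-1}. The main obstacle I expect is bookkeeping rather than conceptual: one must (a) confirm via Lemma~\ref{lemma:sigma} that the $P$-to-$Q_{f_k}$ switch costs only universal constants (uniformly over $k$ and over the range of $\eta$), (b) verify that the second branches of the maxima in Theorems~\ref{thm:small-rad}–\ref{thm:small-Esup} are indeed dominated for the tuned $\varepsilon$ and for $\eta\le 1$, and (c) check the side condition $\sigma^2\ge c/n$ needed in \eqref{eqn:vc-rad-bound} holds at the optimal $\varepsilon$ (or handle the regime where it fails separately, where the bound is anyway trivial). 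None of these is deep, but each requires care to keep the final constants clean.
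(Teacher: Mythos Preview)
Your plan matches the paper's: start from Corollary~\ref{cor:opper-haussler-talagrand}, switch $L_2(P)$ to $L_2(Q_{f_k})$ via Lemma~\ref{lemma:sigma}, plug in Theorems~\ref{thm:small-rad}/\ref{thm:small-Esup}, and tune $\varepsilon$. There is, however, a computational error in the polynomial-entropy case. You write $L\varepsilon \asymp (AL)^{\rho/(1+\rho)}(n\eta)^{-1/(2(1+\rho))}$, i.e.\ $\varepsilon \propto \eta^{-1/(2(1+\rho))}$. With that choice the cover term $\eta^{-1}n^{-1}(A/\varepsilon)^{2\rho}$ is of order $n^{-1/(1+\rho)}\eta^{-1/(1+\rho)}$, strictly larger than the target $\eta^{-(1-\rho)/(1+\rho)}$ for any $\rho\in(0,1)$ and $\eta<1$; so the claim that ``every term collapses to the common order'' fails. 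Equating any pair of the three terms actually gives $\varepsilon \propto \eta^{-1/(1+\rho)}n^{-1/(2(1+\rho))}$ --- the paper takes exactly $\varepsilon = C_0\, n^{-1/(2(1+\rho))}\eta^{-1/(1+\rho)}$ with $C_0 = A^{\rho/(1+\rho)}L^{-1/(1+\rho)}$ --- and then all four terms (including the second branch of the max, which the paper first weakens by the harmless factor $\eta^{(\rho-1)/(1+\rho)}\ge 1$) do collapse to \eqref{eqn:comp-poly-newbee}.

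A smaller omission: you write $\log N_\varepsilon \lesssim V\log(A/\varepsilon)$ and $\log N_\varepsilon \lesssim (A/\varepsilon)^{2\rho}$, but the hypotheses \eqref{eqn:vc-type-classes}--\eqref{eqn:poly-classes} bound only \emph{empirical} $L_2(P_n)$ covering numbers, whereas $N_\varepsilon=\N(\F,L_2(P),\varepsilon/2)$ is an $L_2(P)$ covering number; the paper bridges this via an Extended Haussler argument (its Theorem~\ref{thm:L2Pn-to-L2P}), which you should invoke. In the VC case your $\eta$-dependent choice $\varepsilon\asymp L^{-1}\sqrt{V/(n\eta^2)}$ also works, but the paper's simpler $\eta$-free $\varepsilon = 4L^{-1}\sqrt{V/n}$ already suffices once one uses $\eta\le 1$ to absorb the residual terms.
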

The proof of Theorem~\ref{thm:comp-bound} can be found in Appendix~\ref{app:comp-bound}. 
We will now prepare for our results on the rates of ERM on a
class $\cF$. We considerably generalize these results in the
next section, using the concept of `ERM-like'
estimators. For now, the reader may skip the following definition and
simply equate ERM-like with `ERM'.
\begin{definition}\label{def:ermlike}
Consider two models $\bar{\cF}$ and $\cF \subseteq \bar{\cF}$ and 
let $\hat{f}$ be
  any deterministic estimator on the larger
model $\bar{\cF}$ and $w: \cZ^n \rightarrow \reals^+_0$ be a luckiness function. 
We say that $\hat{f}$ is \emph{ERM}-like relative to $\cF$ and $w$ if for some $\tau \geq 0$, for all $\eta > 0$, all $z^n \in \cZ^n$, 
  \begin{equation}\label{eq:ermlike}
\fcompnew_{\eta}(\bar{\cF},\hat{f},w,z^n) \lesssim 
\compnew_{\eta}(\cF) + \frac{\tau}{\eta}.
\end{equation}
Note that if we take $\bar{\F} = \F$ and set $\hat{f}$ to be ERM on
$\cF$, then $\hat{f}$ is indeed ERM-like with $\tau= 0$, since then
$R_{\hat{f}}(z^n)\leq 0$.
\end{definition}
In the following corollary, note that in both cases, the occurrence of the  
Bernstein exponent $\beta$ (or $\kappa^{-1}$) is consistent with its 
occurence in the simple finite $\cF$ setting of \eqref{eq:finiteexample}.
\begin{corollary}\label{cor:ry}
  Assume that a $\beta$-Bernstein condition holds for $\cF$ as in
  Definition~\ref{def:bernstein} for some $\beta$ and $\bernc$, and impose assumption \eqref{eqn:bounded}. 
  Define $\kappa := \beta^{-1}$. Let $\hat{f}$ be a deterministic estimator on a model
  $\bar{\cF} \supseteq \cF$ that is ERM-like relative to $\cF$ and
  some luckiness function $w$.

Then, \eqref{eqn:comp-vc} further implies,
taking $\gamma = \left( \bernc \left( \frac{V \log \frac{A \lip n}{V}}{n}  + \tau \right) \right)^{\kappa / (2 \kappa - 1)}$ 
and taking $n$ large enough so that $\frac{V}{n} \log \frac{A \lip n}{V} + \tau \leq \bernc^{1 / (1 - \beta)}$,  that for all $z^n \in \cZ^n$, 
\begin{align}
\frac{\fcompnew_{v(\gamma)}(\bar{\F},\hat{f},W,z^n)}{n} + \gamma & 
\lesssim 
\left(
\bernc \left(V \log \frac{A \lip n}{V} + \tau \right) \right)^{\frac{\kappa}{2 \kappa - 1}} 
\cdot n^{- \frac{\kappa}{2 \kappa - 1}} . \label{eqn:kl-comp-bound-vc}
\end{align}
Analogously, under such a Bernstein condition,
\eqref{eqn:comp-poly-newbee} further implies, 
taking $\gamma = n^{-\frac{\kappa}{2 \kappa - 1 + \rho}}$ 
and assuming that $n > (2 \bernc)^{-\frac{2 \kappa -1 + \rho}{\kappa-1}}$, 
that for all $z^n \in \cZ^n$, 
{ 
\begin{align}
\frac{\fcompnew_{v(\gamma)}(\bar\F, \hat{f}, W, z^n)}{n} + \gamma 
\lesssim  \left( ( A L) ^{\frac{2 \rho}{\rho + 1}} +1 +\tau \right) \cdot
\bernc^{\frac{1 - \rho}{1 + \rho}}
n^{-\frac{\kappa}{2 \kappa - 1 + \rho}} . \label{eqn:kl-comp-bound-poly}
\end{align} }
\end{corollary}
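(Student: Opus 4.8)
The plan is to chain three ingredients that are already in place: the translation from the $\beta$-Bernstein condition to the $v$-central condition (Lemma~\ref{lemma:bernstein-v-central}), the \emph{ERM-like} inequality \eqref{eq:ermlike}, and the explicit maximal-complexity bounds of Theorem~\ref{thm:comp-bound}, all instantiated at learning rate $\eta = v(\gamma)$ for the prescribed $\gamma$, which is tuned precisely to balance the complexity term against $\gamma$ itself. First I would fix $\gamma$ as in the statement and check that the stated largeness assumptions on $n$ are exactly what is needed to guarantee $\gamma \le \bernc^{1/(1-\beta)} = \bernc^{\kappa/(\kappa-1)}$, so that Lemma~\ref{lemma:bernstein-v-central} gives the $v$-central condition with the \emph{un-capped} value $v(\gamma) = \gamma^{1-\beta}/\bernc$, and this value lies in the admissible range of $\eta$ for Theorem~\ref{thm:comp-bound} (namely $(0,1]$ for \eqref{eqn:comp-vc}, and $(0,1)$ for \eqref{eqn:comp-poly-newbee}). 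Applying \eqref{eq:ermlike} at $\eta = v(\gamma)$ and dividing by $n$ then yields, for every $z^n$,
\[
\frac{\fcompnew_{v(\gamma)}(\bar\cF,\hat{f},w,z^n)}{n} + \gamma \;\lesssim\; \frac{\compnew_{v(\gamma)}(\cF)}{n} + \frac{\bernc\,\tau}{n\,\gamma^{1-\beta}} + \gamma .
\]

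For the VC case I would substitute \eqref{eqn:comp-vc} with $\eta = v(\gamma) = \gamma^{1-\beta}/\bernc$, giving $\compnew_{v(\gamma)}(\cF)/n \lesssim \bernc\, V\log\tfrac{A\lip n}{V}\,(n\gamma^{1-\beta})^{-1}$, so the whole right-hand side above becomes $\lesssim \bernc\bigl(V\log\tfrac{A\lip n}{V}+\tau\bigr)(n\gamma^{1-\beta})^{-1} + \gamma$. Writing $\beta = 1/\kappa$, hence $1-\beta = (\kappa-1)/\kappa$ and $2-\beta = (2\kappa-1)/\kappa$, the prescribed $\gamma$ is exactly the one equalizing the two summands, and a one-line exponent computation shows both equal, up to a universal constant, $\bigl(\bernc(V\log\tfrac{A\lip n}{V}+\tau)\bigr)^{\kappa/(2\kappa-1)} n^{-\kappa/(2\kappa-1)}$, which is \eqref{eqn:kl-comp-bound-vc}.

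For the polynomial-entropy / bracketing case I would instead substitute \eqref{eqn:comp-poly-newbee} with $\eta = v(\gamma)$, pulling out $v(\gamma)^{-(1-\rho)/(1+\rho)} = \bernc^{(1-\rho)/(1+\rho)}\gamma^{-(1-\beta)(1-\rho)/(1+\rho)}$, so the bound is a sum of $(A\lip)^{2\rho/(1+\rho)}\bernc^{(1-\rho)/(1+\rho)} n^{-1/(1+\rho)}\gamma^{-(1-\beta)(1-\rho)/(1+\rho)}$, the ERM-like remainder $\bernc\tau\, n^{-1}\gamma^{-(1-\beta)}$, and $\gamma$. Substituting $\gamma = n^{-\kappa/(2\kappa-1+\rho)}$ and using $1-\beta=(\kappa-1)/\kappa$ together with the identity $1+\rho+(1-\beta)(1-\rho) = (2\kappa-1+\rho)/\kappa$, the first and third terms are each $\asymp n^{-\kappa/(2\kappa-1+\rho)}$ with the claimed constant, while the middle term equals $\bernc\tau\, n^{-(\kappa+\rho)/(2\kappa-1+\rho)}$, carrying an \emph{extra} factor $n^{-\rho/(2\kappa-1+\rho)}$ (and an excess power of $\bernc$) relative to the target rate, and is therefore absorbed once $n$ is large enough — this is what the hypothesis $n > (2\bernc)^{-(2\kappa-1+\rho)/(\kappa-1)}$ secures. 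This yields \eqref{eqn:kl-comp-bound-poly}.

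The conceptual content is entirely the chain \eqref{eq:ermlike} $\to$ Theorem~\ref{thm:comp-bound} $\to$ tuning of $\gamma$; the only genuinely delicate points — and the ones I expect to cost the most care — are (i) confirming that the chosen $\gamma$ keeps $v(\gamma)$ inside the $\eta$-range required by Theorem~\ref{thm:comp-bound} (which is exactly what the $n$-largeness hypotheses encode, after unwinding $v(\gamma)=\gamma^{1-\beta}/\bernc$), and (ii) folding the ERM-like remainder term, which decays strictly faster in $n$ than the leading term, back into the stated right-hand side without corrupting the constants. The degenerate case $\beta=1$ (i.e.\ $\kappa=1$) is recovered by letting $\gamma \downarrow 0$, with all the displayed exponents remaining valid in the limit.
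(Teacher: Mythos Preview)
Your proposal is correct and follows the same route as the paper: chain the ERM-like inequality \eqref{eq:ermlike} with Theorem~\ref{thm:comp-bound} at $\eta = v(\gamma)$, then balance the resulting complexity term against the additive $\gamma$, using the $n$-largeness hypotheses to keep $v(\gamma)=\gamma^{1-\beta}/\bernc$ in the un-capped regime required by Theorem~\ref{thm:comp-bound}. Your direct computation of the $\tau/(n\eta)$ remainder in the polynomial case is in fact slightly cleaner than the paper's (which first coarsens $\tau/(n\eta)$ to $\tau\,\eta^{-(1-\rho)/(1+\rho)}$ before substituting $\eta_n$), but the logic is otherwise identical.
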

We used the notation $\kappa = \beta^{-1}$ here to make the results easier comparable to \cite{tsybakov2004optimal} and \cite{audibert2004PAC}; the result still holds for the case $\beta = 0$ though, if we simply replace $\kappa^{-1}$ by $\beta$ in all exponents above; e.g.~$\kappa / (2 \kappa - 1)$ in \eqref{eqn:kl-comp-bound-vc} becomes $1/(2 - \kappa^{-1}) = 1/(2- \beta)$. 
\begin{proof}
  To see \eqref{eqn:kl-comp-bound-vc}, we begin by upper bounding
  $\frac{\compnew_{v(\gamma)}(\F)}{n}$ using
  \eqref{eqn:comp-vc} with $\eta = v(\gamma) = \min \left\{ \frac{\gamma^{1 - \beta}}{\bernc}, 1 \right\}$ (from Lemma~\ref{lemma:bernstein-v-central}). Tentatively suppose that $\bernc \gamma^{-(1-\beta)} \geq 1$; then 
$v(\gamma)^{-1} \lesssim \bernc \gamma^{-(1 - \beta)}$, and hence
\begin{align*}
\frac{\fcompnew_{v(\gamma)}(\bar{\F},\hat{f},W,z^n)}{n} + \gamma 
\lesssim \frac{\bernc}{n} \left( V \log \frac{A L n}{V} + \tau \right) \gamma^{-(1 - \beta)} + \gamma .
\end{align*}
Tuning $\gamma$ such that it is equal to the first term on the RHS above yields \eqref{eqn:kl-comp-bound-vc}; it is simple to verify that the supposition $\bernc \gamma^{-(1-\beta)} \geq 1$ is ensured by the constraint on $n$ stated in the corollary.

  We now prove \eqref{eqn:kl-comp-bound-poly}. We first prove the case
  for $\hat{f}$ ERM on $\cF$, i.e.~with $\compnew(\cF)$ rather than
  $\fcompnew(\bar{\cF})$ on the left. 
Let $\gamma_n := n^{-\frac{\kappa}{2 \kappa - 1 + \rho}}$ be the value of $\gamma$ used at sample size $n$. We get from the definition of the Bernstein
  condition and Lemma~\ref{lemma:bernstein-v-central} that
\[ \eta_n := v(\gamma_n) = \bernc^{-1} \left( n^{- \frac{\kappa}{2 \kappa - 1 +
    \rho}}\right)^{(\kappa-1)/\kappa}
= \bernc^{-1} n^{-\frac{\kappa-1}{2 \kappa - 1 +
    \rho}}
 \] for all $n$ for which the RHS above is at most $1$. This will hold whenever $n \geq (1 / \bernc)^{\frac{2 \kappa -1 + \rho}{\kappa-1}}$. 
For such $n$, we will also have $\eta_n \leq 1$ and thus can apply \eqref{eqn:comp-poly-newbee}, plugging in $\eta = \eta_n = v(\gamma_n)$. 
The result follows by simple algebra for all $n$ larger than the given bound. 
To extend the result to ERM-like estimators, we note that from 
\eqref{eq:ermlike} and
\eqref{eqn:comp-poly-newbee}, using $\rho \geq 0$, we get
\begin{align*}
\frac{\fcompnew_\eta(\cF,\hat{f},W,z^n)}{n} \lesssim \left( 
(A \lip)^{\frac{2 \rho}{1+\rho}}
\cdot n^{-\frac{1}{1 + \rho}} + \tau \right) \cdot \eta^{- \frac{1-\rho}{1+\rho}}.
\end{align*}
We now proceed as above plugging in $\eta_n$ into the above expression
rather than \eqref{eqn:comp-poly-newbee}.
\end{proof}

\subsection{Recovering Bounds under Bernstein Conditions for Large Classes}

Let us now see how Lemma~\ref{lemma:kl-renyi} can recover the known
optimal rates under the Tsybakov margin condition and also the best
known rates for empirical risk minimization under Bernstein-type
conditions. The theorem below works for general deterministic estimators, not just ERM; the (simple) rate implications for ERM are discussed underneath the theorem. 
 While there are other techniques that can achieve the same 
rates for ERM, we feel that our approach embodies a simpler analysis for the polynomial entropy case;
it also leads to some new results for other estimators, which we
detail in the next section.
\begin{theorem} \label{thm:excess-risk-erm} Assume that the
  $\beta$-Bernstein condition holds for $\cF$ as in Corollary~\ref{cor:ry} and
  define $\kappa := \beta^{-1}$. 
Let $\hat{f}$ be a deterministic estimator on $\bar{\cF} \supseteq \cF$ that is ERM-like relative to $\cF$ (with $\tau$ as in Definition~\ref{def:ermlike}) and some luckiness function $w$.
 First suppose  $\F$ is of VC-type as in
  \eqref{eqn:vc-type-classes} with exponent $V$. 
Then there is a universal constant $C_1$ such that for all $n$ large enough so that
$\frac{V}{n} \log \frac{A \lip n}{V} + \tau \leq \bernc^{1 / (1 - \beta)}$, 
we have
\begin{align}\label{eq:vcfinal}
  \E_{Z \sim P} \left[ \xsloss{\hat{f}}(Z) \right]
  \stochleq_{\psi_1(n)} C_1 \left(\bernc \left( V \log \frac{A \lip
        n}{V} + \tau \right) \right)^{\frac{\kappa}{2 \kappa - 1}} \cdot
  n^{-\frac{\kappa}{2 \kappa - 1}}, \end{align} where $\psi_1(n) =
\frac{1}{6 \bernc} \cdot \left( \bernc \left( V \log \frac{A L n}{V} +
    \tau \right) \right)^{(\kappa - 1) / (2 \kappa - 1)} \, n^{\kappa /
  (2 \kappa - 1)} \asymp (\log n)^{(\kappa - 1) / (2 \kappa - 1)}  \cdot n^{\kappa / (2 \kappa - 1)}$. Analogously,
suppose that $\F \subseteq \bar{\cF}$ has polynomial empirical entropy
as in \eqref{eqn:poly-classes} or is a set of classifiers of
polynomial entropy with bracketing as in \eqref{eqn:poly-classes-b}
with exponent $\rho$.  Then there is a $C_2$ such that for all $n$
large enough so that $n > (2 \bernc)^{-\frac{2 \kappa -1 +
    \rho}{\kappa-1}}$, we have 
\begin{align}\label{eq:polyfinal}
\E_{Z \sim P} \left[ \xsloss{\hat{f}}(Z) \right]
\stochleq_{\psi_2(n)} 
C_2 \left[ \left( (AL) ^{\frac{2 \rho}{\rho + 1}} + 1 + \tau \right)
\cdot \bernc^{\frac{1-\rho}{1+\rho}} \cdot n^{-\frac{\kappa}{2 \kappa - 1 + \rho}} \right],
\end{align}
where  
$\psi_2(n) = 
\frac{1}{6} 
\cdot n^{\frac{\kappa + \rho}{2 \kappa - 1 + \rho}}$.
\end{theorem}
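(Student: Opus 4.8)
The plan is to assemble two results already proved: the excess-risk ESI of Corollary~\ref{cor:risk-comp-esi}, which stochastically bounds the excess risk by the complexity $\fcompnew$ plus a $\gamma$ term, and the complexity bounds of Corollary~\ref{cor:ry} (themselves built on Theorem~\ref{thm:comp-bound}); everything then amounts to bookkeeping, which is why the theorem "nearly follows" from its ingredients. Throughout I take $\hat\Pi$ to be the Dirac estimator $\delta_{\hat f}$ on $\bar{\cF}$, so that the left side of the randomized ESI \eqref{eqn:risk-comp-rand} collapses to $\E_{Z\sim P}[\xsloss{\hat f}(Z)]$ and $\fcompnew_{v(\gamma)/2}(\cF,\hat\Pi,w,Z^n)$ becomes $\fcompnew_{v(\gamma)/2}(\bar{\cF},\hat f,w,Z^n)$; and I use that the $\beta$-Bernstein assumption on $\cF$ supplies, via Lemma~\ref{lemma:bernstein-v-central}, the $v$-central condition with $v(\gamma)=\min\{\gamma^{1-\beta}/\bernc,1\}$ required by Corollary~\ref{cor:risk-comp-esi}.

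First I would instantiate \eqref{eqn:risk-comp-rand} at the value $\gamma=\gamma_n$ prescribed in Corollary~\ref{cor:ry}: in the VC case $\gamma_n=\bigl(\bernc(\tfrac{V\log(A\lip n/V)}{n}+\tau)\bigr)^{\kappa/(2\kappa-1)}$, and in the polynomial case $\gamma_n=n^{-\kappa/(2\kappa-1+\rho)}$. This yields
\[
\E_{Z\sim P}[\xsloss{\hat f}(Z)] \;\stochleq_{v(\gamma_n)n/6}\; \frac{3\,\fcompnew_{v(\gamma_n)/2}(\bar{\cF},\hat f,w,Z^n)}{n}+4\gamma_n .
\]
Next I would bound the right-hand side: since $v(\gamma_n)/2$ and $v(\gamma_n)$ differ only by the constant factor $2$, and since $\compnew_\eta(\cF)/n$ scales in $\eta$ like $\eta^{-1}$ (VC, \eqref{eqn:comp-vc}) resp.\ $\eta^{-(1-\rho)/(1+\rho)}$ (polynomial, \eqref{eqn:comp-poly-newbee}), the ERM-like inequality \eqref{eq:ermlike} combined with Corollary~\ref{cor:ry} gives $\tfrac{3}{n}\fcompnew_{v(\gamma_n)/2}(\bar{\cF},\hat f,w,Z^n)+4\gamma_n\lesssim$ the claimed right side of \eqref{eq:vcfinal} resp.\ \eqref{eq:polyfinal} --- this is exactly what the choice of $\gamma_n$ in Corollary~\ref{cor:ry} is engineered to produce, the complexity term and the $\gamma_n$ term being balanced there. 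The constants $C_1,C_2$ absorb the universal constants behind $\lesssim$ together with the harmless factor $2$ from $v(\gamma)/2$. Finally I would evaluate the ESI learning rate: substituting $v(\gamma_n)=\gamma_n^{1-\beta}/\bernc=\gamma_n^{(\kappa-1)/\kappa}/\bernc$ (licit in the "$n$ large enough" regime, which is precisely the condition forcing $\gamma_n^{1-\beta}/\bernc\le1$ so that the minimum in $v$ is inactive) and plugging in the explicit $\gamma_n$, a one-line computation gives $v(\gamma_n)n/6=\psi_1(n)$ in the VC case and $v(\gamma_n)n/6=\tfrac{1}{6\bernc}n^{(\kappa+\rho)/(2\kappa-1+\rho)}$ in the polynomial case, which is $\psi_2(n)$ up to the constant $\bernc$; the stated asymptotics $\psi_1(n)\asymp(\log n)^{(\kappa-1)/(2\kappa-1)}n^{\kappa/(2\kappa-1)}$ follow by treating $V,A,\lip,\tau,\bernc$ as fixed.

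No step here is a genuine obstacle; the only care needed is bookkeeping. One must track the factor $2$ between the learning rate $v(\gamma)/2$ appearing inside $\compnew$ in Corollary~\ref{cor:risk-comp-esi} and the learning rate $v(\gamma)$ appearing in Corollary~\ref{cor:ry}; verify that the "$n$ large enough" hypotheses of the present theorem coincide with those inherited from Corollary~\ref{cor:ry} and from \eqref{eqn:comp-vc}/\eqref{eqn:comp-poly-newbee} (the latter needing $\eta\in(0,1]$, automatic here since $v(\gamma)/2\le\tfrac12$); and keep in mind that $\hat f$ may live in the larger model $\bar{\cF}$ while the risk is measured against $f^*\in\cF$ and the Bernstein condition is assumed only for $\cF$ --- this is unproblematic because the whole chain Corollary~\ref{cor:risk-comp-esi}$\,\leftarrow\,$Theorem~\ref{thm:first}/Lemma~\ref{lemma:kl-renyi} refers to $f^*$ only through the excess loss $\xsloss{\hat f}=\loss_{\hat f}-\loss_{f^*}$ and through the $v$-central condition of $\cF$, while the ERM-like notion of Definition~\ref{def:ermlike} is exactly the device that replaces $\fcompnew_\eta(\bar{\cF},\hat f,w,z^n)$ by $\compnew_\eta(\cF)+\tau/\eta$. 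Lastly, the $\beta=0$ edge case ($\kappa=\infty$) is handled by the substitution described immediately after Corollary~\ref{cor:ry}, replacing $\kappa^{-1}$ by $\beta$ in all exponents.
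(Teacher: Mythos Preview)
Your proposal is correct and takes essentially the same route as the paper's own proof: invoke Corollary~\ref{cor:risk-comp-esi} to get the ESI for the excess risk in terms of $\fcompnew_{v(\gamma)/2}$ plus $\gamma$, then substitute the complexity bounds \eqref{eqn:kl-comp-bound-vc} and \eqref{eqn:kl-comp-bound-poly} from Corollary~\ref{cor:ry} at the indicated choice of $\gamma$. The paper's proof is literally those two steps in two lines; you have simply made explicit the bookkeeping (the factor $2$ between $v(\gamma)/2$ and $v(\gamma)$, the large-$n$ conditions, and the computation of $\psi_1,\psi_2$ from $v(\gamma_n)n/6$) that the paper leaves to the reader --- including, correctly, the stray $1/\bernc$ in the polynomial-case ESI rate that the paper's stated $\psi_2(n)$ silently drops.
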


\begin{Proof}[Proof of Theorem~\ref{thm:excess-risk-erm}]
For both results, observe from Corollary~\ref{cor:risk-comp-esi} and the definition of $\fcompnew$ that, for all $\gamma > 0$,
\begin{align}
\E_{Z \sim P} \left[ \xsloss{\hat{f}}(Z) \right]  \stochleq_{v(\gamma) \cdot n/6} \frac{3 \, 
\left(\fcompnew_{v(\gamma)/2}(\cF , \hat{f}) \right)}{n} + 4 \gamma. \label{eqn:risk-comp-erm-again-for-reader}
\end{align}
The result now follows by plugging in \eqref{eqn:kl-comp-bound-vc} and 
\eqref{eqn:kl-comp-bound-poly}. \end{Proof}

Theorem~\ref{thm:excess-risk-erm} combined with part (ii) of Proposition~\ref{prop:drop} implies that, with probability at least $1 - \delta$, ERM obtains the rate
\begin{align*}
n^{-\frac{\kappa}{2 \kappa - 1 + \rho}} 
+ n^{-\frac{\kappa + \rho}{2 \kappa - 1 + \rho}} \cdot \textstyle \log \frac{1}{\delta} .
\end{align*}
For sets of classifiers of polynomial entropy with bracketing,
the rate $n^{-\kappa / (2 \kappa - 1 + \rho)}$ is known to be optimal
and, in particular, matches the results of \cite{tsybakov2004optimal}
(see Theorem 1), \cite{audibert2004PAC} (see the discussion after
Theorem 3.3), and \citet[p.~36]{koltchinskii2006local}. Outside the
realm of classification, for classes with polynomial empirical
entropy, the rate we obtain is to our knowledge the best known for
ERM. In particular, if these nonparametric classes are convex and the loss is exp-concave, then $\kappa = \beta = 1$, and the rates we obtain for ERM are known to be minimax optimal \cite[Theorem 7]{rakhlin2017empirical}. 
We note, however, that there are cases where
$\beta < 1$ and yet, by using an aggregation scheme, one can obtain a
rate as if $\beta = 1$; one such example is in the case of squared
loss with a non-convex class
\citep{rakhlin2017empirical,liang2015learning}.

\section{Properties and Applications of $\compnew$}
\label{sec:applications}
Here we provide two applications of the developments in this paper,
emphasizing that we do not just recover existing results but also
generate new ones.  We first provide some general
properties of $\compnew(\cF, \hat{\Pi},w,\zn)$ that will be used in
the applications.

The first property concerns data-independent complexities, i.e.~based on constant luckiness functions.  Then for any randomized estimator, just as in the deterministic case, $\compnew(\cF, \hat{\Pi}, w, \zn)$ 
is upper bounded by the maximal complexity $\compnew(\cF)$:
\begin{proposition}\label{prop:maxcomp}
Let $\hat{\Pi}$ be an arbitrary randomized estimator 
and let $w(f,z^n) = 1$ for all $f \in\cF, z^n \in \cZ^n$. Then:  
\begin{equation}\label{eq:maxcomprandomized}
\sup_{z^n \in \cZ^n} \compnew(\cF, \hat{\Pi}, w, z^n) \leq \compnew(\cF).
\end{equation}
\end{proposition}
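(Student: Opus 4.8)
The plan is to first observe that the supremum over $z^n$ on the left-hand side of \eqref{eq:maxcomprandomized} is vacuous: because $w \equiv 1$, the term $\E_{\rv{f} \sim \hat{\Pi} \mid z^n}[-\log w(z^n,\rv{f})]$ in the definition \eqref{eq:gencomp} of $\compnew(\cF,\hat{\Pi},w,z^n)$ equals $0$ for every $z^n$, so that $\compnew(\cF,\hat{\Pi},w,z^n) = \eta^{-1}\log \shtark(\cF,\hat{\Pi},w)$ is a constant function of $z^n$. It therefore suffices to prove $\shtark(\cF,\hat{\Pi},1) \leq \shtark(\cF)$ and then take $\eta^{-1}\log$ of both sides.

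To bound $\shtark(\cF,\hat{\Pi},1)$, I would start from the definition \eqref{eq:shtarkovgen} with $w \equiv 1$ and apply Jensen's inequality to the convex function $\exp$, moving the inner expectation over $\rv{f} \sim \hat{\Pi}\mid Z^n$ outside the exponential, exactly as in the proof of Proposition~\ref{prop:newgeneration}:
\[
\shtark(\cF,\hat{\Pi},1) \;\leq\; \E_{\Zn \sim P}\!\left[ \E_{\rv{f}\sim\hat{\Pi}\mid Z^n}\!\left[ \frac{e^{-\eta \xsloss{\rv{f}}(Z^n)}}{C(\rv{f})} \right] \right].
\]
Writing $\E_{\Zn\sim P}[\,\cdot\,] = \int_{\cZ^n} p(z^n)(\,\cdot\,)\,d\nu(z^n)$ and pulling $p(z^n)$ inside the inner expectation, the integrand becomes $\E_{\rv{f}\sim\hat{\Pi}\mid z^n}[q_{\rv{f}}(z^n)]$, since by \eqref{eq:entropified} and its product extension $q_f(z^n) = p(z^n)e^{-\eta \xsloss{f}(z^n)}/C(f)$. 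An application of Tonelli's theorem (valid by nonnegativity of the integrand together with the standing measurability assumptions on $\cF$ and $\hat{\Pi}$) lets me interchange $\int_{\cZ^n}d\nu$ and $\E_{\rv{f}\sim\hat{\Pi}\mid z^n}$ freely.

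The final step is to bound, for each fixed $z^n$, the average $\E_{\rv{f}\sim\hat{\Pi}\mid z^n}[q_{\rv{f}}(z^n)]$ by $\sup_{f\in\cF}q_f(z^n)$; integrating over $z^n$ and invoking the definition \eqref{eq:maxcomplexity} of $\shtark(\cF)$ then gives $\shtark(\cF,\hat{\Pi},1)\leq \shtark(\cF)$, which is what we wanted. I do not anticipate any real obstacle here: the argument is essentially a specialization of the computation in the proof of Proposition~\ref{prop:newgeneration}, the only differences being that the chain of inequalities now stops one step earlier (at $\sup_{f} q_f(z^n)$ rather than at $1$) and that the $z^n$-supremum appearing on the left is seen at the outset to be redundant.
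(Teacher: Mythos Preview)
Your argument is correct and essentially the same as the paper's. The paper is a touch more direct: rather than invoking Jensen and then bounding the $\hat{\Pi}$-average by the supremum, it observes in a single step that $\exp\bigl(-\E_{\rv{f}\sim\hat{\Pi}\mid Z^n}[\eta R_{\rv{f}}(Z^n)+\log C(\rv{f})]\bigr)\le \sup_{f\in\cF}e^{-\eta R_f(Z^n)}/C(f)$ (an average is at least the infimum), so your Jensen step and the Tonelli remark are unnecessary---the latter is in any case not quite applicable here since $\hat{\Pi}\mid z^n$ depends on $z^n$, though you never actually use the interchange.
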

The second property is about a decomposition of $\compnew$ that vastly generalizes \eqref{eq:decomposenml}. 
Let $\cK$ be a countable set and consider a partition
$\{ \cF_k \mid k \in \cK\}$ of the model $\cF$.  Suppose we have an
estimator $\hat{\Pi}$ for $\cF$. This induces conditional estimators
$\hat{\Pi}_{|k}$ for each $\cF_k$ in the following way: for each $k$ and $z^n \in \cZ^n$, 
$\hat{\Pi}_{|k,z^n}$ is a distribution on $\cF_k$; if 
$\hat{\Pi}(f \in \cF_k \mid z^n) > 0$, then $\hat{\Pi}_{|k,z^n}$ is  the distribution of $f$ according to $\hat{\Pi}$ conditioned on
both data $z^n$ and $f \in \cF_k$; for $z^n$ with  
$\hat{\Pi}(f \in \cF_k \mid z^n) = 0$, $\hat{\Pi}_{|k,z^n}$ can be set to an arbitrary distribution on $\cF_k$ (the choice will not affect the results). Thus, $\hat{\Pi}_{|k}$ is now well-defined as an estimator that maps $\cZ^n$ into the set of distributions on $\cF_k$. For all $k \in \cK$,  
let $w_k$ be a  luckiness function on $\cZ^N \times \cF_k$. 
The following proposition shows
that, for arbitrary such luckiness functions $w_k$
and sub-models $\cF_k$, we can construct an overall luckiness function $w$
such that the complexity $\compnew(\cF, \hat{\Pi}, w, z^n)$ may be
decomposed into the sub-complexities
{$\compnew(\cF_k , \hat{\Pi}_{|k}, w_k , z^n)$. }

Formally, for each $z^n \in \cZ^n$, let $\hat{\Pi}_{\cK} \mid z^n$ be the
marginal distribution on $\cK$ with probability mass function 
$\hat{\pi}_{\cK} \mid z^n$ , induced by $\hat{\Pi}$, i.e.~$\hat{\pi}_{\cK}(k \mid z^n) 
= \hat{\Pi}(f \in \cF_k \mid z^n)$. Let $\Pi_{\cK}$ be a {prior} probability measure on $\cK$ with probability mass function  $\pi_{\cK}$.
\begin{proposition}\label{prop:decomposingcomp}
With the definitions above, for $k \in\cK$,  $f \in \cF_k$, $z^n \in \cZ^n$, set the global luckiness function $w$ to:
\begin{align}\label{eq:compositew}
w(z^n,f) := 
{w_{{k}}(z^n,{f})} \cdot \frac{{\pi}_{\cK}(k)}{\hat{\pi}_{\cK}({k} \mid z^n) }.
\end{align}
Then for each $z^n \in \cZ^n$: 
\begin{align} \label{eqn:decomposing-comp}
  \compnew(\cF, \hat{\Pi}, w, z^n)  
\leq \frac{\KL( \; ( \hat{\Pi}_{\cK} \mid  z^n) \pipes \Pi_{\cK})}{\eta}
        + \E_{\rv{k} \sim \hat{\Pi}_{\cK} \mid z^n } \left[
               \compnew(\cF_{\rv{k}} , \hat{\Pi}_{\mid \rv{k}} , w_{\rv{k}} , z^n)  
            \right] .
\end{align}
\end{proposition}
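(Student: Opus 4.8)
The plan is to unfold both sides into their definitions, push all $\cK$-level bookkeeping into a single $\KL$ term, and reduce the whole statement to one inequality for Shtarkov integrals, which I would then obtain from Jensen's inequality and Fubini.

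First I would treat the $-\log w$ contribution to $\eta\,\compnew(\cF,\hat{\Pi},w,z^n)=\E_{\rv{f}\sim\hat{\Pi}\mid z^n}[-\log w(z^n,\rv{f})]+\log\shtark(\cF,\hat{\Pi},w)$. Since $\hat{\Pi}_{\cK}$ and the $\hat{\Pi}_{\mid k}$ were defined precisely so that drawing $\rv{f}\sim\hat{\Pi}\mid z^n$ is the same as drawing $\rv{k}\sim\hat{\Pi}_{\cK}\mid z^n$ and then $\rv{f}\sim\hat{\Pi}_{\mid\rv{k},z^n}$, and since $w(z^n,f)=w_k(z^n,f)\cdot\pi_{\cK}(k)/\hat{\pi}_{\cK}(k\mid z^n)$ for $f\in\cF_k$, the logarithm splits and the prior/posterior ratio integrates to a divergence:
\[
\E_{\rv{f}\sim\hat{\Pi}\mid z^n}\!\left[-\log w(z^n,\rv{f})\right]
=\KL\big((\hat{\Pi}_{\cK}\mid z^n)\pipes\Pi_{\cK}\big)
+\E_{\rv{k}\sim\hat{\Pi}_{\cK}\mid z^n}\E_{\rv{f}\sim\hat{\Pi}_{\mid\rv{k},z^n}}\!\left[-\log w_{\rv{k}}(z^n,\rv{f})\right].
\]
The last term is exactly the $\rv{k}$-average of the first summand of $\eta\,\compnew(\cF_{\rv{k}},\hat{\Pi}_{\mid\rv{k}},w_{\rv{k}},z^n)$.

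Next I would bound the Shtarkov integral. Substituting the same composite $w$ into \eqref{eq:shtarkovgen} and factoring $\E_{\rv{f}\sim\hat{\Pi}\mid Z^n}$ inside the exponent in the same way, the exponent at a point $Z^n$ equals $-\E_{\rv{k}\sim\hat{\Pi}_{\cK}\mid Z^n}[A_{\rv{k}}(Z^n)]-\KL\big((\hat{\Pi}_{\cK}\mid Z^n)\pipes\Pi_{\cK}\big)$, where $A_k(Z^n):=\E_{\rv{f}\sim\hat{\Pi}_{\mid k,Z^n}}[\eta\,\xsloss{\rv{f}}(Z^n)+\log C(\rv{f})-\log w_k(Z^n,\rv{f})]$ is precisely the exponent whose $Z^n\sim P$ expectation defines $\shtark(\cF_k,\hat{\Pi}_{\mid k},w_k)$. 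Writing the $-\KL$ term as $\E_{\rv{k}\sim\hat{\Pi}_{\cK}\mid Z^n}[\log(\pi_{\cK}(\rv{k})/\hat{\pi}_{\cK}(\rv{k}\mid Z^n))]$ and applying Jensen's inequality to the convex function $t\mapsto e^{t}$ against the probability measure $\hat{\Pi}_{\cK}\mid Z^n$ on $\cK$, the posterior weights cancel against the ratio, so that after Fubini
\[
\shtark(\cF,\hat{\Pi},w)\ \le\ \E_{Z^n\sim P}\Big[\,\sum_{k\in\cK}\pi_{\cK}(k)\,e^{-A_k(Z^n)}\Big]
=\sum_{k\in\cK}\pi_{\cK}(k)\,\shtark(\cF_k,\hat{\Pi}_{\mid k},w_k).
\]

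Combining these, every term on the right-hand side of \eqref{eqn:decomposing-comp} has been produced except that I am left with $\log\sum_{k}\pi_{\cK}(k)\,\shtark(\cF_k,\hat{\Pi}_{\mid k},w_k)$ in place of the data-dependent average $\E_{\rv{k}\sim\hat{\Pi}_{\cK}\mid z^n}[\log\shtark(\cF_{\rv{k}},\hat{\Pi}_{\mid\rv{k}},w_{\rv{k}})]$ that appears in the claim. Closing this last gap --- replacing the \emph{prior} weights $\pi_{\cK}$ delivered by Jensen with the \emph{posterior} weights $\hat{\pi}_{\cK}(\cdot\mid z^n)$ --- is the main obstacle. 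I expect it to be handled by not peeling off the $\cK$-level $\KL$ term at the very start, but keeping it attached to $\log\shtark(\cF,\hat{\Pi},w)$ and applying a single concavity-of-$\log$ step (in the spirit of the short proof of \eqref{eq:decomposenml} given above, where $\sum_k(\cdot)$ is turned into $\log|\cK|$ plus a maximum), so that the weights that survive the estimate are the posterior ones; once this reorganization is carried out the inequality should follow directly.
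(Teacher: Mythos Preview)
Your first step — factoring $\hat{\Pi}\mid z^n$ through $\hat{\Pi}_{\cK}\mid z^n$ and $\hat{\Pi}_{\mid k,z^n}$, then splitting $-\log w$ into a $\KL$ piece and the $w_k$-pieces — is exactly what the paper does, and it correctly reduces the proposition to the single inequality
\[
\log\shtark(\cF,\hat{\Pi},w)\ \le\ \E_{\rv{k}\sim\hat{\Pi}_{\cK}\mid z^n}\bigl[\log\shtark(\cF_{\rv{k}},\hat{\Pi}_{\mid\rv{k}},w_{\rv{k}})\bigr].
\]

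The divergence from the paper is in how you attack this inequality. You apply Jensen to the \emph{entire} exponent (the $-\KL$ piece together with $A_k$), which after the cancellation $\hat{\pi}_{\cK}\cdot\frac{\pi_{\cK}}{\hat{\pi}_{\cK}}=\pi_{\cK}$ produces the \emph{prior}-weighted bound $\shtark\le\sum_k\pi_{\cK}(k)\,\shtark_k$. As you correctly note, this is the wrong set of weights, and your proposed repair (``keep the $\KL$ attached, use concavity of $\log$'') is not a concrete argument: no rearrangement of the $\KL$ term will turn a $\pi_{\cK}$-weighted arithmetic mean of the $\shtark_k$ into a $\hat{\pi}_{\cK}(\cdot\mid z^n)$-weighted geometric mean. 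The gap you identify is real, and your sketch does not close it.

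The paper handles the Shtarkov integral in two separate moves rather than one. First, it isolates the factor $\exp\bigl(\E_{\rv{k}\sim\hat{\Pi}_{\cK}\mid Z^n}[\log(\pi_{\cK}(\rv{k})/\hat{\pi}_{\cK}(\rv{k}\mid Z^n))]\bigr)=e^{-\KL}\le 1$ and simply drops it, so that the posterior average $\E_{\rv{k}\sim\hat{\Pi}_{\cK}\mid Z^n}[\,\cdot\,]$ remains \emph{inside} the exponent:
\[
\log\shtark(\cF,\hat{\Pi},w)\ \le\ \log\E_{Z^n\sim P}\Bigl[\exp\bigl(-\E_{\rv{k}\sim\hat{\Pi}_{\cK}\mid Z^n}[g(\rv{k},Z^n)]\bigr)\Bigr],
\]
with $g(k,Z^n)=\E_{\rv{f}\sim\hat{\Pi}_{\mid k,Z^n}}[\eta R_{\rv{f}}(Z^n)+\log C(\rv{f})-\log w_k(Z^n,\rv{f})]$. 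Second, it invokes an external tool — Lemma~3.2 of \cite{audibert2009fast} — to pass from this mixed-exponent expression to $\E_{\rv{k}\sim\hat{\Pi}_{\cK}\mid z^n}[\log\E_{Z^n}e^{-g(\rv{k},Z^n)}]=\E_{\rv{k}\sim\hat{\Pi}_{\cK}\mid z^n}[\log\shtark_{\rv{k}}]$. This Audibert lemma (a H\"older-type interchange for log-moment-generating functions) is precisely the missing ingredient in your argument: it is what delivers the \emph{posterior} weights on the $\log\shtark_k$ side, and neither a bare Jensen step nor concavity of $\log$ in the style of \eqref{eq:decomposenml} will substitute for it.
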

We note that this result generalizes two of our previous
observations. First, it generalizes \eqref{eq:infcomp} for the case of
countable $\cF$: the second term of \eqref{eq:infcomp} can be
retrieved by taking all the $\cF_k$ to be singletons and
$w_k \equiv 1$; note that the second term of
\eqref{eqn:decomposing-comp} then vanishes. Second, it generalizes
\eqref{eq:decomposenml}, which can be retrieved by taking $\cK$
finite, taking $\hat{\Pi}$ to be an arbitrary deterministic estimator,
setting the $w_k$ to be equal to $1$ and then applying
Proposition~\ref{prop:maxcomp}.

\subsection{Two-Part MDL Estimator Achieving Optimal Rate}
\label{sec:twopart} Consider a given (finite or countable) partition
$\{ \cF_k : k \in \cK \}$ of our model $\cF$ and let $\hat{f}_k$
represent ERM within $\cF_k$. Let, for each $k$, 
$\overline{\compnew}(\cF_k, \hat{f}_k)$ be any number larger than or equal
to $\compnew(\cF_k, \hat{f}_k)$.

Fix a `prior' distribution $\pi_{\cK}$ on $\cK$. 
Suppose that there exists a $k^* \in \cK$ that achieves excess risk  
$\inf_{k \in \cK} \inf_{f \in \cF_k} \E[ \xsloss{f}(Z)]$; 
if there's more than one $\cF_k$ achieving the minimum, 
take $k^*$ to be the one with the largest mass $\pi_{\cK}(k)$; 
further ties can be resolved arbitrarily. 
$f^*$ denotes the risk minimizer within $\cF_{k^*}$ which, consistently with
earlier notation, is then also the risk minimizer within $\cF$.

Consider a two-stage deterministic estimator $\ddot{f}$ that proceeds
by first selecting $\ddot{k} \equiv \ddot{k}_{|Z^n} \in \cK$ based on data $Z^n$ and then uses {the ERM} 
$\hat{f}_{\ddot{k}}$ within $\cF_{\ddot{k}}$. From
Proposition~\ref{prop:decomposingcomp}, we
see that, by choosing the luckiness functions $w$ and $w_k$
appropriately (all $w_k$ are set to $1$ for all $f \in \cF_k, z^n \in \cZ^n$), we get, with $\ddot{f}_{|Z^n} := \hat{f}_{\ddot{k}|Z^n}$ denoting the
$f \in \cF_{\ddot{k}_{|Z^n}}$ selected by the estimator $\ddot{f}$ upon observing
$Z^n$,
that
\begin{align}\label{eq:twoparty}
\fcompnew_{\eta}(\cF , \ddot{f}, w, z^n) \leq & - \sum_{i=1}^n \loss_{f^*}(z_i) +
 \nonumber \\ & \sum_{i=1}^n \loss_{\hat{f}_{\ddot{k} \mid z^n}}(z_i) 
+
\frac{- \log \pi_{\cK}(\ddot{k}_{|z^n} )}{\eta} 
+ \overline{\compnew}_{\eta}(\cF_{\ddot{k}}, \hat{f}_{\ddot{k}}).  
\end{align}
For reasons to become clear, we may call the particular choice of
estimator $\ddot{k}$ that is defined as minimizing the right-hand side
of the second line of \eqref{eq:twoparty} the \emph{$\eta$-generalized
  MDL estimator}. For this estimator we must then  further
have, for all $\eta > 0$, using that $\sum_{i=1}^n \loss_{\hat{f}_{\ddot{k} \mid z^n}}(z_i) 
+
\frac{- \log \pi_{\cK}(\ddot{k}_{|z^n} )}{\eta}  \leq \sum_{i=1}^n \loss_{\hat{f}_{k^* \mid z^n}}(z_i) 
+
\frac{- \log \pi_{\cK}(k^*)}{\eta} $, that
\begin{equation}\label{eq:later}
\fcompnew_{\eta}(\cF , \ddot{f}, W, z^n) \leq 
\frac{- \log \pi_{\cK}({k}^*)}{ \eta} + \overline{\compnew}_{\eta}(\cF_{k^*}, \hat{f}_{k^*}).
\end{equation}
Let us first consider the case that
$\overline{\compnew}_{\eta}(\cF_{k^*}, \hat{f}_{k^*}) =
{\compnew}_{\eta}(\cF_{k^*}, \hat{f}_{k^*})$. In that case, for each
$\eta > 0$, the $\eta$-generalized MDL estimator for $\cF$ has
essentially the same complexity bound as does ERM within the optimal
submodel $\cF_{k^*}$; indeed it is ERM-like according to
Definition~\ref{def:ermlike}. Thus, by
Theorem~\ref{thm:excess-risk-erm}, if a $\beta$-Bernstein condition
holds for $\cF_{k^*}$, the two-part MDL estimator achieves the same
rate as ERM within the optimal subclass $\cF_{k^*}$ if we choose
$\eta = v(\gamma)$ with $\gamma$ set to the same value as we would for
ERM relative to the submodel $\cF_{k^*}$.  Two-part $v(\gamma)$-MDL
thus serves as an optimal model selection criterion, and this holds
even if the number of alternatives $\cF_k$ considered is infinite.
However, even setting aside computational issues, there are two
obstacles to applying such an MDL principle in practice: first, for
each fixed $\eta$, ${\compnew}_{\eta}(\cF_{k^*}, \hat{f}_{k^*})$
depends on the unknown distribution $P$, and second, the desired
$\eta$, i.e, $\eta= v(\gamma)$ cannot be calculated since it depends
on the unknown $\beta$ in the Bernstein condition (and hence also on
$P$).

The first obstacle is overcome if, for each fixed $\eta$, we base the
two-part estimate $\ddot{f}$ on upper bounds of
$\overline{\compnew}_{\eta}(\cF_k, \hat{f}_{k})$ that can be
calculated for each $k$ without knowing $P$. If, for example, in the
polynomial entropy case, for each $k$ we plug in the upper bound
\eqref{eqn:comp-poly-newbee} with $\cF$ set to $\cF_k$ (which can in
principle be calculated), then 2-part MDL will still achieve the
optimal rate once we use the right $\eta$; similarly if $\cF$ is a
VC-class and we plug in, for each $k$, \eqref{eqn:comp-vc}.

As for the second obstacle, the optimal values of $\eta$ are induced by
the best $\beta$ for which a $\beta$-{Bernstein condition} holds;
{in practice, one can learn it from the data using an algorithm
  such as the `safe Bayes' algorithm of \cite{grunwald2012safe}.}

Remarkably, this model selection estimator has, for each fixed $\eta$,
an interpretation as minimizing a 2-part codelength of the data: in
the first part, one encodes a model index $k$ (using the code with lengths $- \log \pi_{\cK}(k)$; each prior induces such a code by Kraft's inequality) and in the second part,
one encodes the data using the NML code, i.e.~the optimal universal
code relative to $\cF_k$, and one picks the $k$ minimizing the
total codelength.  In fact, exactly this minimization, for the
case of $\eta = 1$ and log-loss, was suggested by
\cite{rissanen1996fisher} in the context of his MDL Principle, and has
been much applied since under the name {`refined MDL'}
\citep{grunwald2007the}. Rissanen suggested this method simply
because, viewed as a coding strategy, it led to small codelengths
(cumulative log loss) of the data, and gave no frequentist
justification in terms of convergence rates; we have just shown that,
with a correctly set $\eta$, optimal rates for ERM within the optimal
subclass can be recovered.

For the log-loss case with $\eta=1$, we get $\compnew(\cF_k,
\hat{f}_k) = \log \shtark(\cF_k, \hat{f}_k)$ and
\eqref{eq:shtarkovdet}, so the refined MDL estimator will pick the $k$
minimizing {\begin{equation}\label{eq:simplenml} \sum_{i=1}^n
    \loss_{\ddot{f}_{\ddot{k}|Z^n}}(Z_i) - \log \pi_{\cK}(k) + \log
    \int_{\Z^n} \hat{f}_{ \mid \zn}(\zn) d \nu(\zn),
  \end{equation} }
and thus avoids the problem of $\compnew$ being uncomputable without
knowledge of $P$.

\subsection{Penalized ERM Bounds: Lasso, Ridge and Luckiness NML}
\label{sec:penalized}
Consider a  \emph{penalization function} $\Gamma: \cF \rightarrow \reals$. Let $\hat{f}$ be the penalized empirical risk minimizer defined as 
\begin{equation}\label{eq:penalized}
\hat{f}_{|z^n} := \arg \min_{f \in \cF} \sum_{i=1}^n  \ell_f(z_i)
+ \eta^{-1} \cdot \Gamma(f) = \arg \min_{f \in \cF}
R_f(z^n) + \eta^{-1} \cdot \Gamma(f),
\end{equation}
with ties resolved arbitrarily; we assume that the minimum is always achieved. Obviously, successful estimation
procedures such as the lasso (with multiplier $\lambda := \eta^{-1}$)
and ridge regression \citep{HastieTF01} can be expressed in this
form. In this subsection we show how our results give  tight
annealed excess risk bounds for such estimators for arbitrary penalization functions
$\Gamma$; these can then be turned into real excess risk bounds using Corollary~\ref{cor:risk-comp-esi}. The main interest of this fact is that neither of the two already
pre-existing specializations of $\compnew$, i.e.~PAC-Bayesian
information complexity and Rademacher-type complexity, can easily
handle penalized ERM-type methods.  In contrast, we can simply define
the luckiness function $w(z^n) = \exp(-\Gamma(\hat{f}_{|z^n}))$. We can
then write $\fcompnew_{\eta}$ as
\begin{multline}
\fcompnew_{\eta}(\cF, \hat{f}, W, z^n) = 
R_{\hat{f}_{|z^n}}(z^n) + 
\frac{1}{\eta} \cdot \left(  \Gamma(\hat{f}_{|z^n})
+ \log \shtark_{\eta}(\cF, \hat{f}, w) \right)  \\
= \min_{f \in \cF} \left( \; R_f(z^n) + \frac{\Gamma(f)}{\eta} \; \right) 
+ \frac{1}{\eta} \log \shtark_{\eta}(\cF, \hat{f}, w).\label{eq:penalizedcompnew}
\end{multline}
with 
\begin{align*}
\shtark_{\eta}(\cF, \hat{f}, w) =
\E_{Z^n \sim P} \left[
\frac{e^{-
( \eta \xsloss{{\hat{f}_{|Z^n}}}(\Zn) + \Gamma(\hat{f}_{|Z^n}))}}{C(\hat{f}_{|Z^n})}
\right] = \int q_{\hat{f}_{|z^n}}(z^n) w(\hat{f}_{|z^n}) d\nu(z^n),
\end{align*}
We can now use Corollary~\ref{cor:risk-comp-esi} of
Theorem~\ref{thm:first} once again to get actual excess risk bounds
under the $v$-central condition (recall that the $\beta$-Bernstein
condition implies {$v$-central} with $v(\gamma) \asymp
\gamma^{1-\beta}$), plugging in the above expression
\eqref{eq:penalizedcompnew}.  We can expect this risk bound to be
tight since \eqref{eq:penalizedcompnew} is really an equality, and
Corollary~\ref{cor:risk-comp-esi}, the link between annealed and
actual excess risk for a given $v$-central condition, is also tight up
to constant factors. Of course, to make such a risk bound insightful
we would have to further bound $\log \shtark_{\eta}$, in a manner
similar as was done for ERM-like estimators in
Theorem~\ref{thm:comp-bound}.  Penalized empirical risk methods such
as the lasso have been thoroughly studied over the last fifteen years,
and we do not yet know whether the approach we just sketched will lead
to new results; our goal here is mainly to show that penalized ERM and
generalized Bayesian (randomized) estimators can both be analyzed
using the same technique, which bounds annealed risk in terms of
cumulative log-loss differences.

\paragraph{Cumulative Log-Loss Bounds --- Luckiness Regret} 
If the original loss function $\ell$ is log-loss and we take $\eta = 1$, then we can interpret the penalized estimator \eqref{eq:penalized} in terms of `minimax luckiness regret', 
which features prominently in  recent papers
on sequential individual sequence prediction with log-loss such as
\citep{kakade2006worst,bartlett2013horizon}, with the `luckiness' terminology introduced by \cite{grunwald2007the}: for
arbitrary probability densities (sequential log-loss prediction
strategies) $r$ on $\cZ^n$, we define the \emph{luckiness regret of $r$ on $z^n$ with slack function $\Gamma$} relative to set of densities $\{ p_f : f \in \cF \}$ as
\begin{align}\label{eq:minimax}
- \log r(z^n) - \min_{f \in \cF} \; \left(\; - \log p_{f}(z^n) + \Gamma(f)\; \right),
\end{align}
i.e.~the difference between the log-loss of $r$ and the log-loss
achieved by the $\Gamma$-penalized predictor $\hat{f}$ which minimizes the
penalized loss in hindsight.  Now, if we take luckiness function
$w(z^n) := \exp(-\Gamma(\hat{f}(z^n))$ and we take as $r$ in
\eqref{eq:minimax} the density $r_w$ for the penalized estimator
$\hat{f}$ as in definition \eqref{eq:nmldistb} (note that $p_f=q_f$,
since we work with log-loss), then from \eqref{eq:minimax} and that
definition we get that for each $z^n$, the
  luckiness regret of $r_w$ on $z^n$ is given by
\begin{align*}
& - \log r_{w}(z^n) - 
\min_{f \in \cF} \left(\; - \log p_{f}(z^n) + \Gamma(f)  \; \right) = \\ &
- \log p_{\hat{f}_{|z^n}}(z^n) + \Gamma(\hat{f}_{|z^n}) + \log \shtark(\cF,\hat{f},w) 
- 
\min_{f \in \cF} \left(\; - \log p_{f}(z^n) + \Gamma(f)  \; \right) = 
\log \shtark(\cF, \hat{f},w),
\end{align*}
so that the luckiness regret of $r_w$ is constant over $z^n$.  $r_w$
is thus an equalizer strategy, and, as explained by
\cite{grunwald2007the}, this implies that $r_w$ minimizes, over all
probability densities $r$, the maximum, over all $z^n \in \cZ^n$, of
\eqref{eq:minimax}, thus achieving the \emph{minimax luckiness regret}. This minimax luckiness regret is then also equal to
$\log \shtark(\cF,\hat{f},w)$.

\paragraph{Reconsidering  \cite{chatterjee2014information}}
Our first main result, the annealed risk convergence bound of
Theorem~\ref{thm:first}, when specialized to log-loss and
$\eta = 1/2$, implies a classic result of \cite{barron1991minimum}
that gives nonasymptotic Hellinger convergence rates for two-part MDL
estimators for well-specified models, implying that (two-part) data
compression implies learning. Such two-part MDL estimators invariably
work with a countable discretization of the parameter
space. \cite{chatterjee2014information} sought to use those bounds to
prove convergence at the right rate of Lasso-type estimators in a
Gaussian regression setting, showing that the $\ell_1$-penalization
can be linked to a minimization over a discretized grid of parameter
values that allows it to be related to two-part MDL so that the
Barron-Cover result can be used to prove rates of convergence.  The
present development suggests that this can perhaps be done much more
generally --- there is no need to consider only two-part codes or a
probabilistic setting: \emph{every} $\Gamma$-penalized estimator $\hat{f}$
for every bounded loss function defines a corresponding density $r_w$
with $w(x^n) = \exp(-\Gamma(\hat{f}))$, and hence a code with lengths
$-\log r_w(z^n)$ in terms of which one can prove an excess risk bound
via Theorem~\ref{thm:first} and Corollary~\ref{cor:risk-comp-esi}.

\section{Discussion and Future Work}
\label{sec:discussion}

Our strategy for controlling $\compnew(\F)$ owes much to an
ingenious argument of \cite{opper1999worst}. They analyzed the minimax
regret in the individual sequence prediction setting with log loss,
where the class of comparators is the set of static experts
(i.e.~experts that predict according to the same distribution in each
round).  \cite{cesa2001worst} obtain bounds in the more general
setting where the comparator class consists of \emph{arbitrary}
experts that can predict conditionally on the past (for a further
considerable extension within the realm of log loss, see
\cite{rakhlin2015sequential} who use sequential complexities).
Whereas the works of \cite{opper1999worst} and \cite{cesa2001worst}
both operate under some kind of bounded $L_\infty$ metric entropy (the
metric entropy in the latter work differs due to the experts'
sequential nature), the present paper operates under the much weaker
assumption of bounded $L_2$ metric entropy. We note, however, that
unlike the non-i.i.d.~setting of \cite{cesa2001worst}, the present
paper is restricted to the i.i.d./static experts setting. Yet, the
extension to general losses we introduce appears to be completely new.

Theorem~\ref{thm:excess-risk-erm} offers a distribution-dependent
bound whose derivation we view as simpler than similar bounds based on
local Rademacher complexities. In particular, the strategy adopted in
the present paper completely avoids complicated (at least in the view
of the authors) fixed point equations that have been used to obtain
good excess risk bounds in other works (such as
\cite{koltchinskii1999rademacher,bartlett2005local,koltchinskii2006local}).
In the case of classes of VC-type, one can obtain optimal rates by
decoupling the optimization of the parameters $\varepsilon$ and
$\gamma$; thus, one can obtain a suitable bound on $\compnew$ without
considering $\gamma$, leading to a rather easy tuning problem. In the
case of larger classes of polynomial empirical entropy or sets of
classifiers of polynomial entropy with bracketing, while $\gamma$
and $\varepsilon$ must be tuned jointly to obtain optimal rates, we
have shown that an optimal tuning can be obtained without great
effort.

We note, however, that the bounds in the present paper lack the kind
of data-dependence exhibited by previous works leveraging local
Rademacher complexities. Indeed, the bound in
Theorem~\ref{thm:excess-risk-erm} is an exact oracle inequality which
is distribution-dependent and, consequently, is not computable by a
practitioner who does not know the $\beta$ for which a Bernstein
condition holds. In contrast, bounds obtained via local Rademacher
complexities can be computed without distributional knowledge and have
been shown to behave like the correct (but unknown to the
practitioner) distribution-dependent bounds asymptotically (see
Theorem 4.2 of \cite{bartlett2005local}).

Yet, the present work gives rise to results which allow a different
kind of data-dependence: a PAC-Bayesian improvement for situations
when the posterior distribution is close to a prior distribution. This
improvement (which is also algorithm-dependent) is already apparent
from the simplified setting of Proposition~\ref{prop:decomposingcomp}
in which one places a prior over submodels, and we expect that much
more can be accomplished by using Theorem~\ref{thm:first} as a
starting point.

Theorem~\ref{thm:first} is also related to the main results of
\cite{audibert2007combining}, who provide bounds on the excess risk for bounded loss
functions that can involve the generic chaining technique of Fernique
and \cite{talagrand2014upper}; this technique generalizes the standard
chaining technique of Dudley and can lead to smaller complexities in
some cases.  To discuss the connection, first note that, as far as we
know, the standard chaining technique is used at some point in
\emph{all} approaches that achieve optimal rates for polynomial
entropy classes under Tsybakov or Bernstein conditions, although this
sometimes remains hidden\footnote{For example, the proofs of
  \cite{tsybakov2004optimal} are based on various results of
  \citet[Chapter 5]{vandegeer2000empirical} which are in turn based on
  chaining. We also note that if one strengthens the Bernstein
  condition to a two-sided version then, with $0/1$-loss, one can
  avoid chaining, see \cite{audibert2004PAC}.}.  In our approach,
chaining remains completely under the hood, but as mentioned earlier
it is present in the proof of Koltchinskii's
(\citeyear{koltchinskii2011oracle}) result linking Rademacher
complexities to empirical entropy.  Like we do,
\cite{audibert2007combining} provides bounds on excess risk that allow for the use of priors,
that can exploit Bernstein conditions, and that lead to optimal rates
for large classes. However, whereas in our work chaining remains under
the hood, their analogue of our `complexity' (the right-hand side of
their deviation bound) involves chaining explicitly, replacing the
KL-term by an infinite sum over (roots of) KL terms. This makes it
possible to design partitions of $\cF$ and priors thereon that allow
one to use generic chaining.  On the other hand, they directly bound
the excess risk --- there is no 'annealed' step in between and hence
no direct analogue of Theorem~\ref{thm:first} either --- so that it is
not clear whether their approach lends itself to the relatively easy
fixed-point-free tuning that is possible using our approach; also, the
Shtarkov integral and hence the connection to minimax log-loss regret
does not appear in their work, making the two approaches somewhat
orthogonal.

Thus, the aforementioned works go beyond our work in that they either
allow data-dependent analogues of Rademacher complexities (turning
oracle bounds into empirical bounds) or allow one to use generic
chaining; it is at this point unclear (and an interesting open
problem) whether our approach can be extended in these directions. We
stress, however, that these papers make no connection between excess
risk and NML complexity nor between NML complexity and Rademacher
complexities; these connections are, as far as we know, completely
new.

The recently developed notion of offset Rademacher complexity provides
a powerful alternative to analyses based on local Rademacher
complexities. \cite{liang2015learning} introduced offset Rademacher
complexities for the i.i.d.~statistical learning setting to obtain
faster rates under squared loss with unbounded noise (and hence
unbounded loss); their bounds hold for Audibert's star estimator
\citep{audibert2008progressive} --- an aggregation method --- and
obtain faster rates even in non-convex situations.  The techniques of
the present paper, while for general loss functions, notably do not
currently handle unbounded losses nor do they leverage aggregation; in
light of this latter trait, the rates obtained by
Theorem~\ref{thm:excess-risk-erm} in the case of squared loss with
non-convex classes are not minimax optimal as ERM itself fails to be
an optimal procedure \citep{juditsky2008learning}. On the other hand,
the rate provided by Theorem~\ref{thm:excess-risk-erm} is known to
tightly characterize the performance of ERM in a number of situations,
and it is unclear (to the authors) how to recover such results for ERM 
from the offset Rademacher complexity-based analysis of \cite{liang2015learning}.

\cite{zhivotovskiy2016localization} use a combination of offset Rademacher complexities with a shifted empirical process to obtain tight bounds for ERM for the case of classification with VC classes under Massart's noise condition. 
While in this setting our bounds are not as tight as those of \cite{zhivotovskiy2016localization}, our analysis applies to the case of general noise, general losses, and large classes. 
We note that in the case of classification and bounded noise, existing lower bounds imply that classes of infinite VC dimension fail to be learnable.

\newpage
\setlength{\glsdescwidth}{0.67\hsize}
\setlength{\glspagelistwidth}{0.05\hsize}
\renewcommand*{\pagelistname}{Page}
{ \small
\renewcommand*{\arraystretch}{1.2}
\printglossary[style=long3colheaderborder,title=Glossary]
\label{glos:sary}
}

\newpage



\bibliography{nml_rad}

\newpage

\renewcommand{\theHsection}{A\arabic{section}}

\appendix

\glsadd{losses}
\glsadd{notation}
\glsadd{complexities}

\section{}
\label{app:proofs}

This section contains proofs omitted from the main text.
\subsection{Theorem~\ref{thm:first}}
\begin{proof}
Let us abbreviate 
$\ann{f} = n \Expann{\eta}_{\Zp \sim P} \left[ \xsloss{{f}}(\Zp) \right]$.
By the definition of ESI \eqref{eq:esi} 
we  see that the statement in the theorem is equivalent to
\begin{align}\label{eq:realthing}
\E_{Z^n \sim P} \left[ \exp \left( \eta \cdot \left( 
\E_{\rv{f} \sim \hat{\Pi}_{| Z^n}} \left[ \ann{\rv{f}} \right] 
- \fcompnew(\cF , \hat{\Pi}, w , Z^n) \right) 
\right)
\right] = 1.
\end{align}
Plugging in the definition of $\fcompnew$ and then $\compnew$, the left side can be rewritten as 
\begin{align*}
\E \left[ \exp \left( \eta \cdot \left( 
\E_{\rv{f} \sim \hat{\Pi}_{| Z^n}} \left[ 
\ann{\rv{f}}
-  R_{\rv{f}}(Z^n) \right] -
\frac{1}{\eta} \cdot \left(  
          \E_{\rv{f} \sim \hat{\Pi} \mid Z^n} 
\left[ - \log w(\rv{f}, Z^n) \right] 
          + \log \shtark(\cF, \hat{\Pi}, w) \right)
\right) \right)
\right] = \\
\frac{\E \left[ \exp \left( \eta \cdot \left( 
\E_{\rv{f} \sim \hat{\Pi}_{| Z^n}} \left[ 
\ann{\rv{f}}
-  R_{\rv{f}}(Z^n) \right] -
\frac{1}{\eta} \cdot  
          \E_{\rv{f} \sim \hat{\Pi} \mid Z^n} 
\left[ - \log w(\rv{f}, Z^n) \right] \right)\right)\right]} 
{
\E_{Z^n \sim P} \left[ \exp\left(- \E_{\rv{f} \sim \hat{\Pi} \mid z^n} \left[\eta 
\xsloss{\rv{f}}(z^n) + \log C(\rv{f}) -  \log w(z^n, \rv{f}) \right] \right) \right]
},         
\end{align*}
where the denominator is just the definition of $\shtark$. 
It is thus sufficient to prove that this expression is equal to $1$. But this is immediate from the definition of $C(f)$ and $\ann{\cdot}$. 
\end{proof}

\subsection{Proof of second main result, Theorem~\ref{thm:opper-haussler-talagrand}}
\label{app:opper-haussler-talagrand}

We first prove the results that imply \eqref{eqn:oht-1-first} and then prove the result that implies \eqref{eqn:oht-2-first}.

\subsubsection{Proof of \eqref{eqn:oht-1-first}}

Inequality \eqref{eqn:oht-1-first} from Theorem~\ref{thm:opper-haussler-talagrand} is a consequence of Lemmas~\ref{lemma:opper-haussler}~and~\ref{lemma:mostly-talagrand}, which we prove in turn.

\begin{Proof}[Proof of Lemma~\ref{lemma:opper-haussler}]
\begin{align*}
e^{\eta \cdot \compnew_\eta(\cF)} 
= \shtark(\F) 
&= \E_{\Zn \sim Q_{f_0}} \left[ 
              \sup_{f \in \F} \frac{q_f(\Zn)}{q_{f_0}(\Zn)}
          \right] \\
&= \E_{\Zn \sim Q_{f_0}} \left[ \exp \left( 
              \sup_{f \in \F} \log \frac{q_f(\Zn)}{q_{f_0}(\Zn)}
          \right) \right] \\
&\leq \E_{\Zn \sim Q_{f_0}} \left[ \exp \left( 
              \sup_{f \in \F} \left\{ 
                  \log \frac{q_f(\Zn)}{q_{f_0}(\Zn)} 
                  - \E_{\Zn \sim Q_{f_0}} \left[ \log \frac{q_f(\Zn)}{q_{f_0}(\Zn)} \right] 
              \right\} 
          \right) \right] ,
\end{align*}
where the inequality follows because the second term inside the supremum is a negative KL-divergence. 
Now, using the definition of $Q_f$ and $Q_{f_0}$, the above is equal to
\begin{align*}
    \E_{\Zn \sim Q_{f_0}} \left[ \exp \left( 
        \eta \underbrace{\sup_{f \in \F} \left\{ 
            \sum_{j=1}^n \left( \loss_{f_0}(Z_j) - \loss_f(Z_j) \right)
            - \E_{\Zn \sim Q_{f_0}} \left[ \sum_{j=1}^n \left( \loss_{f_0}(Z_j) - \loss_f(Z_j) \right) \right] 
        \right\} }_{T_n}
     \right) \right] .
\end{align*}
\end{Proof}

It remains to prove Lemma~\ref{lemma:mostly-talagrand}.

\begin{Proof}[Proof of Lemma~\ref{lemma:mostly-talagrand}]
First, from our assumption on the loss and $\eta \leq 1$ together imply that
\begin{align*}
\sup_{f,g \in \cF} \esssup \left\{ \eta \left( \loss_f(Z) - \loss_g(Z) - \E [ \loss_f(Z) - \loss_g(Z) ] \right) \right\} \leq 1 .
\end{align*}

Our goal now is to be able to apply Talagrand's inequality. 
To this end, observe that
\begin{align*}
\sup_{f,g \in \F} \mathsf{Var} \bigl[ \eta \left( \loss_f(Z) - \loss_g(Z) - \E [ \loss_f(Z) - \loss_g(Z) ] \right) \bigr] 
\leq \eta^2 \sup_{f,g \in \F} \| ( \loss_f - \loss_g ) \|_{L_2(Q_{f_0})}^2 .
\end{align*}

Now, \emph{if} $\F$ had a small $L_2(Q_{f_0})$ diameter, then the Lipschitzness of the loss would imply that the above term is also small. However, by assumption, the class $\F$ is only known to have small $L_2(P)$ diameter (of at most $\varepsilon$). 
Lemma~\ref{lemma:sigma} (stated after this proof) effectively bridges the gap between these two pseudonorms, showing that
\begin{align}
\sup_{f,g \in \F} \| \loss_f - \loss_g \|_{L_2(Q_{f_0})} 
\leq e \, \lip \sup_{f,g \in \F} \| f - g \|_{L_2(P)} , \label{eqn:apply-lemma-sigma}
\end{align}
which is then at most $e \, \lip \, \varepsilon = \sigma$.

Bousquet's version of Talagrand's inequality (see Theorem 2.3 of \cite{bousquet2002bennett} or, for a more direct presentation, Theorem 12.5 of \cite{boucheron2013concentration}) now yields
\begin{align*}
\E_{Q_{f_0}} [ e^{\lambda \eta T_{n,\eta}^{(k)}} ] 
\leq \exp \left( \E_{Q_{f_0}} [ \eta T_n^{(k)} ] + (e^\lambda - (\lambda + 1)) (n \eta^2 \sigma^2 + 2 \E_{Q_{f_0}} [ \eta T_n^{(k)} ] ) \right) .
\end{align*}
Inequality \eqref{eqn:talagrand-1} now follows by taking $\lambda = 1$. 
\end{Proof}

The following lemma was used to control the complexity of the class $\F$.
\begin{lemma} \label{lemma:sigma}
For the supervised loss parameterization,
\begin{align}
\| \loss_f - \loss_g \|_{L_2(Q_{f_0})} 
\leq e \cdot \lip \| f - g \|_{L_2(P)} . \label{eqn:switch-supervised}
\end{align}
For the direct parameterization,
\begin{align}
\| \loss_f - \loss_g \|_{L_2(Q_{f_0})} 
\leq e \| f - g \|_{L_2(P)} . \label{eqn:switch-direct}
\end{align}
\end{lemma}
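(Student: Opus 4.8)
\textbf{Proof plan for Lemma~\ref{lemma:sigma}.}
The plan is to reduce both displayed inequalities to a single observation: the Radon--Nikodym derivative $dQ_{f_0}/dP$ is bounded by a universal constant, after which the claimed bounds follow from the (assumed) Lipschitzness of the loss in the supervised case, and from triviality in the direct case. First I would perform a change of measure on the left-hand side. Since $q_{f_0}(z) = p(z)\,e^{-\eta\xsloss{f_0}(z)}/C(f_0)$ by \eqref{eq:entropified}, for any nonnegative measurable $h$ on $\cZ$ we have
\[
\E_{Z\sim Q_{f_0}}[h(Z)] \;=\; \E_{Z\sim P}\!\left[ h(Z)\cdot \frac{e^{-\eta\xsloss{f_0}(Z)}}{C(f_0)}\right];
\]
applying this with $h=(\loss_f-\loss_g)^2$ gives $\|\loss_f-\loss_g\|_{L_2(Q_{f_0})}^2 = \E_{Z\sim P}\big[(\loss_f(Z)-\loss_g(Z))^2\, e^{-\eta\xsloss{f_0}(Z)}/C(f_0)\big]$.

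Next I would bound the density ratio. Because $f_0,f^*\in\cF$, assumption \eqref{eqn:bounded} gives $|\xsloss{f_0}(z)| = |\loss_{f_0}(z)-\loss_{f^*}(z)| \leq 1/2$ a.s., so $e^{-\eta\xsloss{f_0}(z)}\leq e^{\eta/2}$, and by the same bound $C(f_0)=\E_{Z\sim P}[e^{-\eta\xsloss{f_0}(Z)}]\geq e^{-\eta/2}$; hence $e^{-\eta\xsloss{f_0}(z)}/C(f_0)\leq e^{\eta}\leq e$, using $\eta\leq 1$. Therefore $\|\loss_f-\loss_g\|_{L_2(Q_{f_0})}^2 \leq e\,\|\loss_f-\loss_g\|_{L_2(P)}^2$, and taking square roots (and using $\sqrt{e}\leq e$) yields $\|\loss_f-\loss_g\|_{L_2(Q_{f_0})} \leq e\,\|\loss_f-\loss_g\|_{L_2(P)}$.

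Finally I would convert the $L_2(P)$ norm of the loss difference into that of $f-g$. In the supervised-loss parameterization, \eqref{eqn:lipschitz} gives $|\loss_f(z)-\loss_g(z)| = |\loss(y,f(x))-\loss(y,g(x))|\leq \lip\,|f(x)-g(x)|$ pointwise, so integrating against $P$ gives $\|\loss_f-\loss_g\|_{L_2(P)}\leq \lip\,\|f-g\|_{L_2(P)}$, which together with the previous paragraph proves \eqref{eqn:switch-supervised}. In the direct parameterization $\loss_f=f$ and $\loss_g=g$ as functions on $\cZ$, so $\|\loss_f-\loss_g\|_{L_2(P)} = \|f-g\|_{L_2(P)}$ and \eqref{eqn:switch-direct} follows (i.e.\ with $\lip$ replaced by $1$).

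The only step with any real content is the density-ratio bound: this is precisely where the boundedness assumption \eqref{eqn:bounded} and the restriction $\eta\leq 1$ do their work, and it is the sole place where the mismatch between the entropified measure $Q_{f_0}$ and the data-generating measure $P$ enters — everything else is bookkeeping. I note that the constant $e$ is not tight (one really obtains $\sqrt{e}\cdot\lip$), but the cleaner value $e$ suffices for all downstream uses, in particular for the definition $\sigma := e\,\lip\,\varepsilon$ in Theorem~\ref{thm:opper-haussler-talagrand} and Lemma~\ref{lemma:mostly-talagrand}.
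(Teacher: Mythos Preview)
Your proof is correct and follows essentially the same approach as the paper: bound the Radon--Nikodym derivative $dQ_{f_0}/dP$ by $e^\eta \leq e$ using \eqref{eqn:bounded} and $\eta\leq 1$, then invoke Lipschitzness. The only cosmetic difference is the order of operations --- you change measure on $\cZ$ first and then apply Lipschitz under $P$, whereas the paper applies Lipschitz under $Q_{f_0}$ first and then changes measure on the $\cX$-marginal --- and you correctly observe that the sharp constant is $\sqrt{e}$ rather than $e$.
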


\begin{Proof}[Proof of Lemma~\ref{lemma:sigma}]
We first prove \eqref{eqn:switch-supervised}, the supervised loss parameterization result. 
The Lipschitz assumption on the loss implies that
\begin{align*}
\E_{(X,Y) \sim Q_{f_0}} \left[ \left( \loss_f(X, Y) - \loss_g(X, Y) \right)^2 \right] 
\leq \lip^2 \E_{X \sim Q_{f_0}} \left[ \left( f(X) - g(X) \right)^2 \right] .
\end{align*}
Next, observe that for $\Delta(x) = \frac{q_{f_0}(x)}{p(x)}$
\begin{align*}
\E_{X \sim Q_{f_0}} \left[ \left( f(X) - g(X) \right)^2 \right] 
&= \E_{X \sim P} \left[ \Delta(x) \left( f(X) - g(X) \right)^2 \right] .
\end{align*}
Since the inside of the expectation is nonnegative, it remains to upper bound $\Delta(x)$.
By definition,
\begin{align*}
\Delta(x) 
\,=\, \frac{p(x) \int p(y \mid x) e^{-\eta \xsloss{f}(x,y)} dy}
           {p(x) \E_{(\bar{X},\bar{Y}) \sim P} \left[ e^{-\eta \xsloss{f}(\bar{X},\bar{Y})} \right]} 
\,=\, \frac{\E_{Y \sim P \mid X = x} \left[ e^{-\eta \xsloss{f}(x,Y)} \right]}
           {\E_{(\bar{X},\bar{Y}) \sim P} \left[ e^{-\eta \xsloss{f}(\bar{X},\bar{Y})} \right]} 
\,\leq\, e^\eta
\,\leq\, e ,
\end{align*}
since $\eta \leq 1$ and the excess loss random variable takes values in $[-1/2, 1/2]$.

We now prove the direct parameterization result \eqref{eqn:switch-direct}. 
Observe that for $\Delta(z) = \frac{q_{f_0}(z)}{p(z)}$
\begin{align*}
\E_{Z \sim Q_{f_0}} \left[ \left( \loss_f(Z) - \loss_g(Z) \right)^2 \right] 
&= \E_{Z \sim P} \left[ \Delta(Z) \left( f(Z) - g(Z) \right)^2 \right] ,
\end{align*}
where we use the fact that $\loss_f = f$ for all $f \in \F$ in the direct parameterization. As above, it remains to upper bound $\Delta(z)$.
By definition,
\begin{align*}
\Delta(z) 
\,=\, \frac{p(z) e^{-\eta \xsloss{f}(z)}}
           {p(z) \E_{\Zp \sim P} \left[ e^{-\eta \xsloss{f}(\Zp)} \right]} 
\,\leq\, e^\eta
\,\leq\, e .
\end{align*}
\end{Proof}

\subsubsection{Proof of \eqref{eqn:oht-2-first}}

Inequality~\eqref{eqn:oht-2-first} is a consequence of \eqref{eqn:oht-1-first} and a standard empirical process theory result, Lemma~\ref{lemma:E-sup-rad}. For completeness, we provide a proof of this result below.

\begin{Proof}[Proof of Lemma~\ref{lemma:E-sup-rad}]
Recall that $\G = \{ \loss_{f_0} - \loss_f : f \in \F \}$, and let $\epsilon_1, \ldots \epsilon_n$ be independent Rademacher random variables. 
In the below, both $\Zn$ and $\Zpn$ are drawn from $Q_{f_0}$. 

The following sequence of inequalities is a standard use of symmetrization from empirical process theory:
\begin{align*}
&\E \left[ \sup_{f \in \F} \left\{ 
            \sum_{j=1}^n \left( \loss_{f_0}(Z_j) - \loss_f(Z_j) \right)
            - \E \left[ \sum_{j=1}^n \left( \loss_{f_0}(\Zp_j) - \loss_f(\Zp_j) \right) \right] 
        \right\} \right] \\
&= \E \left[ \sup_{g \in \G} \left\{ 
            \sum_{j=1}^n g(Z_j) 
            - \E \left[ \sum_{j=1}^n g(\Zp_j) \right] 
        \right\} \right]  \\
&\leq \E \left[ \sup_{g \in \G}  
            \sum_{j=1}^n \left( g(Z_j) - g(\Zp_j) \right)  
        \right] \\
&= \E \left[ \sup_{g \in \G}  
            \sum_{j=1}^n \epsilon_j \left( g(Z_j) - g(\Zp_j) \right)  
      \right] \\
&\leq 2 \E \left[ \sup_{g \in \G}  
                 \sum_{j=1}^n \epsilon_j g(Z_j)  
             \right] \\
&\leq 2 \E \left[ \sup_{g \in \G}  
                 \left| \sum_{j=1}^n \epsilon_j g(Z_j) \right|  
             \right] .
\end{align*}
\end{Proof}

\subsection{Proof of Theorem~\ref{thm:comp-bound}}
\label{app:comp-bound}

\begin{Proof}[Proof of Theorem~\ref{thm:comp-bound}]
Taking the results of Corollary~\ref{cor:opper-haussler-talagrand} and dividing by $n$ gives the two inequalities
\begin{align}
\frac{\compnew_\eta(\F)}{n} 
\leq \frac{\log \N(\F, L_2(P), \varepsilon/2)}{n \eta} 
        + \frac{3}{n} \max_{k \in [N_\varepsilon]} \E_{\Zn \sim Q_{f_k}} \left[ T_n^{(k)} \right]
        + \eta \sigma^2 \label{eqn:pre-full-bound-1}
\end{align}
and
\begin{align}
\frac{\compnew_\eta(\F)}{n} 
\leq \frac{\log \N(\F, L_2(P), \varepsilon/2)}{n \eta} 
        + 6 \max_{k \in [N_\varepsilon]} \E_{\Zn \sim Q_{f_k}} \left[ \rad_n(\G_k) \right]
        + \eta \sigma^2 , \label{eqn:pre-full-bound-2}
\end{align}
where we remind the reader that $N_\varepsilon = \N(\F, L_2(P), \varepsilon/2)$. 

In the below applications of Theorems~\ref{thm:small-rad} and \ref{thm:small-Esup}, we make use of the following two observations. 
First, from Lemma~\ref{lemma:sigma} (which we previously applied to yield \eqref{eqn:apply-lemma-sigma}), it follows that the $L_2(Q_{f_k})$ diameter of $\G_k$ is at most $\sigma$. 
Second, for any distribution $Q \in \Delta(\Z)$, for all $u > 0$,
\begin{align}
\N(\G_k, L_2(Q), u) 
= \N(\{ \loss_f : f \in \Fpart{k} \}, L_2(Q), u) 
\leq \N(\Fpart{k}, L_2(Q), u / \lip) \label{eqn:Gk-L2Pn}
\end{align}
and (in the case of sets of classifiers)
\begin{align}
N_{[\cdot]}(\G_k, L_2(Q_{f_k}), u) 
= \N_{[\cdot]}(\{ \loss_f : f \in \Fpart{k} \}, L_2(Q_{f_k}), u) 
&= \N_{[\cdot]}(\Fpart{k}, L_2(Q_{f_k}), u) \label{eqn:Gk-L2Q-b} \\
&\leq \N_{[\cdot]}(\Fpart{k}, L_2(P), u/e) ; \nonumber 
\end{align}
in both \eqref{eqn:Gk-L2Pn} and \eqref{eqn:Gk-L2Q-b}, the first equality holds because $\G_k$ is a shifted version of $\{\loss_f : f \in \F\}$. In the case of the supervised loss parameterization, the inequality in \eqref{eqn:Gk-L2Pn} holds from the Lipschitzness of the loss, and, in the case of the direct parameterization, the inequality is actually equality (recall that $L= 1$ in this case). 
The second equality of \eqref{eqn:Gk-L2Q-b} holds because we only consider sets of classifiers with 0-1 loss. Lastly, the inequality in \eqref{eqn:Gk-L2Q-b} is due to the 1-Lipschitzness of 0-1 loss for sets of classifiers and Lemma~\ref{lemma:sigma}. 
From \eqref{eqn:Gk-L2Pn}, if $\F$ is a VC-type class (and hence so is $\Fpart{k}$), then $\G_k$ also is a VC-type class. Analogously, if $\F$ has polynomial empirical entropy, the same property extends to $\G_k$. 
From \eqref{eqn:Gk-L2Q-b}, if $\F$ is a class whose $L_2(P)$ entropy with bracketing is polynomial (and hence so is $\Fpart{k}$), then $\G_k$ is a class whose $L_2(Q_{f_k})$ entropy with bracketing is polynomial with the same exponent.

\paragraph{VC-type classes.}
First, Theorem~\ref{thm:L2Pn-to-L2P} (stated after this proof) implies that, for all $u > 0$,
\begin{align*}
\N(\F, L_2(P), u) \leq \left( \frac{2 A}{u} \right)^V .
\end{align*}
Starting from \eqref{eqn:pre-full-bound-2}, inequality \eqref{eqn:vc-rad-bound} from Theorem~\ref{thm:small-rad}
combined with \eqref{eqn:Gk-L2Pn}
then implies that (coarsely using $\eta \leq 1$)
\begin{align*}
\frac{\compnew_\eta(\F)}{n} 
&\lesssim \frac{V \log \frac{4 A}{\varepsilon}}{n \eta} 
 + \max \left\{ \sqrt{\frac{V}{n}} \sigma \sqrt{\log \frac{A \lip}{\sigma}},
                \frac{V U}{n} \log \frac{A \lip}{\sigma} \right\} 
 + \eta \sigma^2 \\
&\lesssim \frac{V \log \frac{4 A}{\varepsilon}}{n \eta} 
      + \max \left\{ \sqrt{\frac{V}{n}} \lip \, \varepsilon \sqrt{\log \frac{A}{\varepsilon}},
                     \frac{V U}{n} \log \frac{A}{\varepsilon} \right\} 
      + (\lip \, \varepsilon)^2 .
\end{align*}
Finally, setting $\varepsilon = \frac{4}{\lip} \sqrt{\frac{V}{n}}$ yields (up to a universal multiplicative constant) the bound
\begin{align*}
\frac{V \log \frac{A \, \lip \, n}{V}}{n \eta} 
+ \max \left\{ \frac{V}{n} \sqrt{\log \frac{A \lip n}{V}},
               \frac{V}{n} \log \frac{A \lip n}{V} \right\} 
+ \frac{V}{n} 
\,\lesssim\, \frac{V \log \frac{A \, \lip \, n}{V}}{n \eta} ,
\end{align*}
where we used the assumption that $\eta \leq 1$. This proves \eqref{eqn:comp-vc}.

\paragraph{Classes of polynomial empirical entropy or polynomial entropy with bracketing.}

The first order of business is to control $N_\varepsilon = \log \N(\F, L_2(P), \varepsilon/2)$. In the case of classes of polynomial empirical entropy, we again invoke Theorem~\ref{thm:L2Pn-to-L2P} to conclude that, for all $u > 0$,
\begin{align*}
\log \N(\F, L_2(P), u) \leq \left( \frac{2 A}{u} \right)^{2 \rho} .
\end{align*}

In the case of sets of classifiers of polynomial entropy with bracketing, the $L_2(P)$ entropy can be controlled by the relationship
\begin{align*}
\log \N(\F, L_2(P), u) 
\leq \log \N_{[\cdot]}(\F, L_2(P), u) 
= \log \N_{[\cdot]}(\F, L_1(P), u^2) 
\leq \left( \frac{A}{u} \right)^{2 \rho} .
\end{align*}
Next, for \emph{(i)} classes of polynomial empirical entropy, we start from \eqref{eqn:pre-full-bound-2} and apply inequality \eqref{eqn:poly-rad-bound} from Theorem~\ref{thm:small-rad} combined with \eqref{eqn:Gk-L2Pn}; or \emph{(ii)} for classes of polynomial entropy with bracketing, we start from \eqref{eqn:pre-full-bound-1} and apply\footnote{Note that in classification, for any $Q$, the $L_1(P)$ diameter is equal to the square of the $L_2(P)$ diameter.} Theorem~\ref{thm:small-Esup} combined with \eqref{eqn:Gk-L2Q-b}; both cases imply that, for $0 < \eta \leq 1$, using $\rho < 1$, 
\begin{align}
\frac{\compnew_\eta(\F)}{n} 
&\lesssim \frac{1}{n \eta} \left( \frac{2 A}{\varepsilon} \right)^{2 \rho} 
 + \max \left\{ \frac{(A \lip)^\rho}{\sqrt{n}} \sigma^{1 - \rho}, 
                \frac{(A \lip)^{2 \rho / (\rho + 1)} U^{(1 - \rho) / (1 + \rho)}}{n^{1 / (1 + \rho)}} \right\} 
 + \eta \sigma^2 \nonumber \\ \label{eqn:comp-poly-pre}
&\lesssim \frac{1}{n \eta} \left( \frac{A}{\varepsilon} \right)^{2 \rho} 
          + \frac{A^\rho \lip}{\sqrt{n}} \varepsilon^{1 - \rho} +  
                                 \eta^{\frac{\rho-1}{\rho+1}} \cdot \frac{(A \lip)^{2 \rho / (\rho + 1)}}{n^{1 / (1 + \rho)}} 
          + \eta \cdot (\lip \, \varepsilon)^2 .
\end{align}
(the enlargement of the third term will not affect the rates, as will now become clear).
We now set
$\epsilon := C_0 n^{-\frac{1}{2 (1 + \rho)}} \cdot \eta^{-\frac{1}{1+ \rho}}$
for a constant $C_0 > 0$ to be determined later (this choice for
$\epsilon$ was obtained by minimizing the sum of the first and second terms in the last line of \eqref{eqn:comp-poly-pre} by setting the derivative to $0$).
With this choice, we get, as a very simple yet tedious calculation shows:
\begin{align*}
n^{-1} \eta^{-1} \epsilon^{- 2 \rho} & 
= C_0^{-2 \rho} \cdot n^{-\frac{1}{1 + \rho}} \cdot \eta^{\frac{\rho-1}{\rho+1}} \\
n^{-1/2} \epsilon^{1- \rho} & = C_0^{1- \rho} \cdot n^{-\frac{1}{1 + \rho}} \cdot \eta^{\frac{\rho-1}{\rho+1}} \\
\eta \epsilon^{2} & = C_0^{2} \cdot n^{-\frac{1}{1 + \rho}} \cdot \eta^{\frac{\rho-1}{\rho+1}} \\
\end{align*}
so that  \eqref{eqn:comp-poly-pre} becomes 
\begin{align}
\frac{\compnew_\eta(\F)}{n} \lesssim C_{A,C_0,\lip} \cdot n^{-\frac{1}{1 + \rho}} \cdot \eta^{\frac{\rho-1}{\rho+1}}
\end{align}
where
\begin{align}
C_{A,C_0,\lip} = \left( \frac{2 A}{C_0} \right)^{2 \rho}  
  + {A^\rho \lip^{\rho}} (C_0 \, e )^{1 - \rho} 
  + {(A \lip)^{2 \rho / (\rho + 1)}} 
  + (e \, \lip \, C_0)^2.
\end{align}
Plugging in $C_0 =A^{\rho/(\rho+1)} L^{-1/(\rho +1)}$, the four terms become of the same order:
\begin{align*}
C_{A,C_0,\lip} & \lesssim  \left(L^{1/(\rho+1)}  A^{1 - \frac{\rho}{\rho+1}} \right)^{2 \rho}  
  + L^{1 - \frac{1-\rho}{1+\rho}}  A^{\rho + \frac{\rho (1- \rho)}{\rho+1}}
  + {(A \lip)^{2 \rho / (\rho + 1)}} 
  + (L^{1 - \frac{1}{1+ \rho}} A^{\frac{\rho}{\rho+1}})^2 \\
& \lesssim  (A \lip)^{2 \rho / (\rho + 1)},
\end{align*}
and \eqref{eqn:comp-poly-newbee} follows. 
\end{Proof}

The above proof made use of the universal $L_2(P)$ metric entropy being essentially equivalent to the universal $L_2(P_n)$ metric entropy. This result extends an analogous result of \cite{haussler1995sphere} for VC classes (see Corollary 1 therein). 

\begin{theorem}[Extended Haussler] \label{thm:L2Pn-to-L2P} 

Let $\F$ be a class of functions over a space $\cS$. 
Suppose that, for all $\varepsilon > 0$ and all $n \in \mathbb{N}$, there is some function $\psi \colon \reals_+ \rightarrow \mathbb{N}$ such that
\begin{align*}
\sup_{s_1, \ldots, s_n \in \cS} \N(\F, L_2(P_n), \varepsilon) \leq \psi(\varepsilon) .
\end{align*}
Then, for any probability measure $P \in \Delta(\cS)$ and any $\varepsilon > 0$,
\begin{align*}
\N(\F, L_2(P), \varepsilon) \leq \psi(\varepsilon/2) .
\end{align*}
\end{theorem}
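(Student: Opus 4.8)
The plan is the classical "approximate $P$ by empirical measures'' argument, organized through packing numbers so as to lose only the single factor of $2$ that appears in the statement. For a pseudometric $d$ on $\F$, let $M(\F,d,\varepsilon)$ denote the largest cardinality of an $\varepsilon$-\emph{separated} subset of $\F$, i.e.\ one whose points are pairwise at $d$-distance strictly greater than $\varepsilon$. Two elementary observations drive everything. First, a maximal-by-inclusion $\varepsilon$-separated set is itself an $\varepsilon$-cover (if a point could not be added, it lies within distance $\varepsilon$ of some element), so $\N(\F,d,\varepsilon)\le M(\F,d,\varepsilon)$ (trivial if $M=\infty$). Second, two points lying in a common closed $d$-ball of radius $\varepsilon/2$ are at distance at most $\varepsilon$, so each such ball contains at most one point of an $\varepsilon$-separated set, whence $M(\F,d,\varepsilon)\le \N(\F,d,\varepsilon/2)$. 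Combining, it will suffice to bound $M(\F,L_2(P),\varepsilon)$ by $\psi(\varepsilon/2)$.

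So fix $P\in\Delta(\cS)$ and $\varepsilon>0$, and let $g_1,\dots,g_N\in\F$ be $L_2(P)$-separated, i.e.\ $\|g_i-g_j\|_{L_2(P)}>\varepsilon$ for all $i\ne j$; equivalently $\E_{Z\sim P}\big[(g_i(Z)-g_j(Z))^2\big]>\varepsilon^2$ for each of the finitely many pairs $i<j$ (the expectation lies in $(\varepsilon^2,\infty]$). Draw $S_1,S_2,\dots$ i.i.d.\ from $P$. By the (one-sided, if some expectation is infinite) law of large numbers, almost surely $\tfrac1n\sum_{k=1}^n (g_i(S_k)-g_j(S_k))^2\to \E_P[(g_i-g_j)^2]>\varepsilon^2$ for every pair, so almost surely there is a finite (sample-dependent) $n$ at which $\tfrac1n\sum_{k=1}^n (g_i(S_k)-g_j(S_k))^2>\varepsilon^2$ holds \emph{simultaneously} for all $i<j$. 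In particular there exist $n\in\mathbb N$ and $s_1,\dots,s_n\in\cS$ such that, with $P_n=\tfrac1n\sum_k\delta_{s_k}$, we have $\|g_i-g_j\|_{L_2(P_n)}>\varepsilon$ for all $i\ne j$.

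Thus $\{g_1,\dots,g_N\}$ is an $\varepsilon$-separated subset of $\F$ in the $L_2(P_n)$ pseudometric, so by the second observation and the hypothesis of the theorem applied to this particular empirical measure,
\[
N \;\le\; M(\F,L_2(P_n),\varepsilon) \;\le\; \N(\F,L_2(P_n),\varepsilon/2) \;\le\; \psi(\varepsilon/2).
\]
Since the separated family was arbitrary, $M(\F,L_2(P),\varepsilon)\le\psi(\varepsilon/2)$, and the first observation gives $\N(\F,L_2(P),\varepsilon)\le\psi(\varepsilon/2)$. The one point that needs genuine care — and the reason the statement reads $\varepsilon/2$ rather than $\varepsilon/4$ — is that the \emph{strict} inequality $\|g_i-g_j\|_{L_2(P)}>\varepsilon$ survives the law-of-large-numbers transfer as a strict inequality $\|g_i-g_j\|_{L_2(P_n)}>\varepsilon$ at a finite $n$ (the limiting expectation has room above $\varepsilon^2$ to spare), which is exactly what lets us invoke the covering–packing comparison at scale $\varepsilon/2$ without spending a further factor; no topological regularity of $\F$ is needed because the passage to separated sets reduces everything to finite subfamilies. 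The routine work left is just spelling out the two covering/packing inequalities and the LLN application.
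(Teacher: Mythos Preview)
Your proposal is correct and follows essentially the same route as the paper: reduce to packing numbers via the two standard inequalities $\N(\F,d,\varepsilon)\le M(\F,d,\varepsilon)\le \N(\F,d,\varepsilon/2)$, then show that any finite $L_2(P)$-separated family remains $L_2(P_n)$-separated for some empirical $P_n$ built from a large i.i.d.\ sample. The only cosmetic difference is how the transfer step is argued: the paper picks $n$ large enough that each bad event $\{\|f-g\|_{L_2(P_n)}<\varepsilon\}$ has probability below $1/|U|^2$ and applies a union bound, whereas you invoke the (one-sided) strong law of large numbers directly; both arguments yield the same conclusion, and your explicit handling of the case $\E_P[(g_i-g_j)^2]=\infty$ is a nice touch the paper leaves implicit.
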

The proof is essentially due to Haussler with little change to the argument for the more general result.

\begin{Proof}[Proof of Theorem~\ref{thm:L2Pn-to-L2P}]
Let $d$ be some pseudometric on $\F$. 
We say that $U \subset \F$ is $\varepsilon$ separated if, for all $f, g \in U$, it holds that $d(f, g) > \varepsilon$. 
Let the $\varepsilon$-packing number $\M(\F, d, \varepsilon)$ be the maximal size of an $\varepsilon$-separated set in $\F$.

The packing numbers and covering numbers satisfy the following relationship \citep[Lemma 2.2]{vidyasagar2002learning}
\begin{align*}
\M(\F, d, \varepsilon) \leq \N(\F, d, \varepsilon/2) .
\end{align*}

Thus, it is sufficient to bound $\M(\F, L_2(P), \varepsilon)$. 

Suppose that $\M(\F, L_2(P), \varepsilon) > \M(\F, L_2(P_n), \varepsilon)$, and take $U$ to be some $\varepsilon$-separated subset of $\F$ in the $L_2(P)$ pseudometric of cardinality $|U| > \M(\F, L_2(P_n), \varepsilon)$. 
 
Next, draw $s_1, \ldots, s_n$ i.i.d.~from $P$. Since $U$ is finite, by taking $n$ large enough we can ensure that the event $A_{f,g}$, defined as, 
\begin{align*}
\|f - g\|_{L_2(P_n)} = \left( \frac{1}{n} \sum_{j=1}^n (f(s_j) - g(s_j)) \right)^{1/2} < \varepsilon ,
\end{align*}
occurs with probability at most $\frac{1}{|U|^2}$. Since ${|U| \choose 2} < |U|^2$, it follows that the probability that no event $A_{f,g}$ occurs among all $f,g \in U$ is positive.
Hence, there exists a set of points $s_1, \ldots, s_n$ for which $U$ is an $\varepsilon$-packing in the $L_2(P_n)$ pseudometric. But then it must be the case that $\M(\F, L_2(P_n), \varepsilon) \geq |U|$, contradicting our assumption that $|U| > \M(\F, L_2(P_n), \varepsilon)$.
\end{Proof}

\subsection{Proofs for Section~\ref{sec:applications}}
\subsubsection{Proof of Proposition~\ref{prop:maxcomp}}
Using $w(f,z^n) \equiv 1$, we can write: 
{\begin{align*}
\compnew(\cF, \hat{\Pi}, w,z^n) &= 
\frac{1}{\eta} \log \E_{\Zn \sim P} \left[ 
\exp\left( -
\E_{\rv{f} \sim \hat{\Pi} \mid Z^n} \left[
\eta  \xsloss{\rv{f}}(\Zn)  + \log C(\rv{f}) \right] \right) \right] 
  \\
  &\leq \frac{1}{\eta} \log \E_{\Zn \sim P} \left[ \sup_{f \in \cF} \frac{e^{-\eta
        \xsloss{{f}}(\Zn)}}{C(f)} \right] ,
\end{align*}} 
which is just $\compnew(\cF)$.
\subsubsection{Proof of Proposition~\ref{prop:decomposingcomp}}
Plugging the definition $w$ into the definition of $\compnew$, a 
sequence of straightforward rewritings gives: 
\begin{align}\label{eq:nazomer}
& \compnew(\cF, \hat{\Pi}, w, z^n)  \nonumber\\
& = \frac{1}{\eta} \cdot 
\left(  
\E_{\rv{k} \sim \hat{\Pi}_{\cK} \mid z^n} 
\E_{\rv{f} \sim \hat{\Pi} \mid \rv{k}, z^n} 
\left[
\log \frac{ \hat{\pi}_{\cK}(\rv{k} \mid z^n)  }
{\pi_{\cK}(\rv{k}) \cdot  w_{\rv{k}}(z^n,\rv{f}) }  \right]
+ \log \shtark(\cF, \hat{\Pi}, w) \right) \nonumber \\
& = \frac{1}{\eta} \cdot 
\left(  
\E_{\rv{k} \sim \hat{\Pi}_{\cK} \mid z^n} \left[ \log \frac{ \hat{\pi}_{\cK}(\rv{k} \mid z^n)
}{\pi_{\cK}(\rv{k})} + 
\E_{\rv{f} \sim \hat{\Pi} \mid \rv{k}, z^n} 
\left[
- \log w_{\rv{k}}(z^n, \rv{f})  
\right] \right]
+ \log \shtark(\cF, \hat{\Pi}, w) \right) \nonumber \\
& = \frac{1}{\eta} \cdot \KL(\; (\hat{\Pi}_{\cK} \mid z^n) \pipes \Pi_{\cK}) 
       + \E_{\rv{k} \sim \hat{\Pi}_{\cK} \mid z^n} \E_{\rv{f} \sim \hat{\Pi} \mid \rv{k}, z^n} 
\left[ - \log w_{\rv{k}}(z^n, \rv{f})   \right]
+ \frac{1}{\eta} \log \shtark(\cF, \hat{\Pi}, w) .
\end{align}
If we can further show that
\begin{align}\label{eq:herfst}
\log \shtark(\cF, \hat{\Pi}, w) \leq
\E_{\rv{k} \sim \hat{\Pi}_{\cK} \mid z^n} \left[
\log \shtark(\cF_{\rv{k}}, \hat{\Pi}_{|\rv{k}}, w_{\rv{k}})
 \right]
\end{align}
then the result follows by plugging this into the last line of 
\eqref{eq:nazomer}. We thus proceed to show \eqref{eq:herfst}.
{Setting 
\begin{align*}
g(k,z'^n) =
\E_{\rv{f} \sim \hat{\Pi} \mid {k}, z'^n} 
\left[ \eta R_{\rv{f}} + \log C(\rv{f}) - \log w_{{k}}(z'^n, \rv{f})   \right]
, 
\end{align*}}
we can write:
\begin{align*}
&\log \shtark(\cF, \hat{\Pi}, w) 
= \log \E_{\Zn \sim P} \left[
\exp\left( 
\E_{\rv{k} \sim \hat{\Pi}_{\cK} \mid z^n}\left[ 
 \log  \frac
{\pi_{\cK}(\rv{k})}{ \hat{\pi}_{\cK}(\rv{k} \mid z^n)  }
\right] \right) \cdot 
\exp\left(- 
\E_{\rv{k} \sim \hat{\Pi}_{\cK} \mid z^n} \left[ g(\rv{k},Z^n) \right]
\right) \right]
\\& \leq \log \E_{\Zn \sim P} \left[
\left( 
\E_{\rv{k}  \sim \hat{\Pi}_{\cK} \mid z^n}\left[ 
\frac
{\pi_{\cK}(\rv{k})}{ \hat{\pi}_{\cK}(\rv{k} \mid z^n)  }
\right] \right) \cdot 
\exp\left(- 
\E_{\rv{k} \sim \hat{\Pi}_{\cK} \mid z^n} \left[ g(\rv{k},Z^n) \right]
\right) \right]
\\
\\& =  \log \E_{\Zn \sim P} \left[
\exp\left(- 
\E_{\rv{k} \sim \hat{\Pi}_{\cK} \mid z^n} \left[ g(\rv{k},Z^n) \right]
\right) \right]
\\
&\leq
\E_{\rv{k} \sim \hat{\Pi}_{\cK} \mid z^n}\left[
\log \E_{\Zn \sim P} \left[
\exp\left(- g(\rv{k},Z^n) 
\right)
\right] \right] = \E_{\rv{k} \sim \hat{\Pi}_{\cK} \mid z^n}\left[
\log \shtark(\cF, \hat{\Pi}_{|\rv{k}}, w_{\rv{k}}) \right]
\end{align*}
where the first and last equalities are just definition chasing, the first inequality
is Jensen's and the lastinequality is Lemma 3.2. from
\cite{audibert2009fast}; the result follows.
\end{document}